\DeclareMathOperator*{\argmin}{argmin}
\newcommand{\D}{\mathcal{D}}
\newcommand{\cS}{\mathcal{S}}
\newcommand{\boldcS}{\boldsymbol{\cS}}
\newcommand{\cP}{\mathcal{P}}
\newcommand{\cC}{\mathcal{C}}
\newcommand{\I}{\mathcal{I}}
\newcommand{\Imat}{\mathbb{I}}
\newcommand{\R}{\mathcal{R}}
\newcommand{\Z}{\mathcal{Z}}
\newcommand{\A}{\mathcal{A}}
\newcommand{\U}{\mathcal{U}}
\newcommand{\T}{\mathcal{T}}
\newcommand{\bigo}{\mathcal{O}}
\newcommand{\bigotilde}{\tilde{\mathcal{O}}}
\newcommand{\N}{\mathcal{N}}
\newcommand{\Real}{\mathbb{R}}
\newcommand{\thetastar}{\theta^*}
\newcommand{\thetastarr}{\theta^{*r}}
\newcommand{\thetatilde}{\tilde{\theta}}
\newcommand{\thetahat}{\hat{\theta}}
\newcommand{\boldthetahat}{\boldsymbol{\thetahat}}
\newcommand{\boldtheta}{\boldsymbol{\theta}}
\newcommand{\thetahatiavg}{\thetahat_{i,\text{avg}}}
\newcommand{\thetastariavg}{\thetastar_{i,\text{avg}}}
\newcommand{\prob}[1]{\Pr\left[#1\right]}
\renewcommand{\P}{\mathcal{P}}
\newcommand{\proj}{\text{Proj}}
\newcommand{\publish}{f_{\text{publish}}}
\newcommand{\vertiii}[1]{{\left\vert\kern-0.25ex\left\vert\kern-0.25ex\left\vert #1
    \right\vert\kern-0.25ex\right\vert\kern-0.25ex\right\vert}}
\newtheorem{theorem}{Theorem}[section]
\newtheorem{lemma}[theorem]{Lemma}
\newtheorem{fact}[theorem]{Fact}
\newtheorem{claim}[theorem]{Claim}
\newtheorem{remark}[theorem]{Remark}
\newtheorem{assumption}[theorem]{Assumption}
\newtheorem{definition}[theorem]{Definition}
\newcommand\shortversion[1]{}
\newcommand\longversion[1]{#1}
\title{Descent-to-Delete: \\ Gradient-Based Methods for Machine Unlearning}
\author{Seth Neel, Aaron Roth, Saeed Sharifi-Malvajerdi}
\affil{University of Pennsylvania}
\date{\today}
\begin{document}

\maketitle
\begin{abstract}
We study the data deletion problem for convex models. By leveraging techniques from convex optimization and reservoir sampling, we give the first data deletion algorithms that are able to handle an arbitrarily long sequence of adversarial updates while promising both per-deletion run-time and steady-state error that do not grow with the length of the update sequence. We also introduce several new conceptual distinctions: for example, we can ask that after a deletion, the entire state maintained by the optimization algorithm is statistically indistinguishable from the state that would have resulted had we retrained, or we can ask for the weaker condition that only the \emph{observable output} is statistically indistinguishable from the observable output that would have resulted from retraining. We are able to give more  efficient deletion algorithms under this weaker deletion criterion.  
\end{abstract}
\longversion{
\clearpage
\tableofcontents
\clearpage
}

\section{Introduction}
%!TEX root = main-full.tex
Users voluntarily provide huge amounts of personal data to online services, such as Facebook, Google, and Amazon, in exchange for useful services. But a basic principle of data autonomy asserts that users should be able to revoke access to their data if they no longer find the exchange of data for services worthwhile. Indeed, each of these organizations provides a way for users to request that their data be deleted. This is related to, although distinct from the ``Right to be Forgotten'' from the European Union's General Data Protection Act (GPDR). The Right to be Forgotten entails the right for users, in certain circumstances, to request that negative information  \emph{concerning} them to be removed. Like basic data autonomy, it sometimes obligates companies to delete data.

But what does it mean to delete data? Typically, user data does not sit siloed in a database, but rather is used to produce derivatives such as predictive models. Deleting a user's data from a database may prevent it from influencing the training of future models\longversion{\footnote{Or perhaps not, if previously trained models (trained before a user's data deletion) are used as inputs to the subsequent models.}}, but does not remove the influence of a user's data on existing models --- and that influence may be significant. For example, it is possible to extract information about specific data points used for training from models that have been trained in standard ways \cite{attack}. So deleting a user's data naively, by simply removing it from a database, may not accomplish much: what we really want is to remove (or at least rigorously limit) the \emph{influence} that an individual's data has on the behavior of any part of the system.

How should we accomplish this? We could \emph{retrain} all predictive models from scratch every time a user requests that their data be removed, but this would entail an enormous computational cost. Ginart et al. \cite{forgetu} propose a compelling alternative: full retraining is unnecessary if we can design a deletion operation that produces a (distribution of) model output(s) that is statistically indistinguishable from the (distribution of) model output(s) that would have arisen from full retraining.  Ginart et al. \cite{forgetu} also propose an approximate notion of deletion that uses a differential-privacy like measure of ``approximate'' statistical indistinguishability that we adopt in this work.

\subsection{Our Results and Techniques}
In this paper, we consider \emph{convex} models that are trained to some specified accuracy, and then are deployed while a sequence of requests arrive to delete (or add) additional data points. The deletion or addition must happen immediately, before the next point comes in, using only a fixed running time (which we measure in terms of gradient computations) per update. We require that the distribution on output models be $(\epsilon,\delta)$-indistinguishable from the distribution on output models that would result from full retraining (see Section \ref{sec:model} for the precise definition: this is a notion of approximate statistical indistinguishability from the differential privacy literature). In a departure from prior work, we make the distinction between whether the entire \emph{internal state} of the algorithm must be indistinguishable from full re-training, or whether we only require statistical indistinguishability with respect to the \emph{observable outputs} of the algorithms. If we require indistinguishability with respect to the full internal state, we call these update or \emph{unlearning} algorithms \emph{perfect}. This is similar to the distinction made in the differential privacy literature, which typically only requires indistinguishability for the \emph{outputs} of private algorithms, but which has a strengthening (called \emph{pan privacy} \cite{pan1,pan2}) which also requires that the internal state satisfy statistical indistinguishability. \longversion{We remark that while unlearning algorithms that are allowed to maintain a ``secret state'' that need not satisfy the data deletion notion require additional trust in the \emph{security} of the training system, this is orthogonal to \emph{privacy}. Indeed, \cite{DifferencingPrivacy} show that even without secret state, algorithms satisfying standard deletion guarantees can exacerbate membership inference attacks if the attacker can observe the model both before and after a deletion (because standard deletion guarantees promise nothing about what can be learned about an individual from two model outputs). In contrast, although some of our unlearning algorithms maintain a secret state that does not satisfy the statistical indistinguishability property, our model outputs themselves satisfy $(\epsilon,\delta)$-differential privacy. This in particular prevents membership inference attacks from observers who can observe a small number of output models, so long as they cannot observe the secret state.} All prior work has focused on perfect unlearning.

We introduce another novel distinction between \emph{strong} unlearning algorithms and \emph{weak} unlearning algorithms. For an unlearning algorithm to be \emph{strong}, we require that for a fixed accuracy target, the run-time of the update operation be constant (or at most logarithmic) in the length of the update sequence. A weak unlearning algorithm may have run-time per update (or equivalently, error) that grows polynomially with the length of the update sequence. All prior work has given weak unlearning algorithms.

We give two sets of results. The first, which operates under the most permissive set of assumptions, is a simple family of gradient descent algorithms. After each addition or deletion request, the update algorithm starts from the previous model, and performs a small number of gradient descent updates --- sufficient to guarantee that the model parameter is boundedly close to the \emph{optimal} model parameter in Euclidean distance. It then perturbs the model parameter with Gaussian noise of sufficient magnitude to guarantee $(\epsilon,\delta)$-indistinguishability with respect to anything within a small neighborhood of the optimal model. We prove that this simple approach yields a strong, perfect unlearning algorithm for loss functions that are strongly convex and smooth. Without the strong convexity assumption, we can still derive strong unlearning algorithms, but ones which must maintain secret state. We can further improve our accuracy guarantees if we are willing to settle for weak unlearning algorithms. The per-round computation budget and the achievable steady state accuracy can be smoothly traded off against one another.

Our second algorithm improves over the straightforward approach above (under slightly stronger regularity assumptions) when the data dimension is sufficiently large.  It first takes a bootstrap sample from the underlying dataset, and then randomly partitions it into $K$ parts. The initial training algorithm separately and independently optimizes the loss function on each part, and then averages the parameter vector from each part, before finally releasing the perturbed average. Zhang et al \cite{ZDW12} analyzed this algorithm (absent the final perturbation) and proved accuracy bounds with respect to the underlying distribution (which for us is the dataset from which we draw the bootstrap sample). Our update operation involves first using a variant of reservoir-sampling that maintains the property that the union of the partitions continue to be distributed as independent samples drawn with replacement from our current dataset. We then use the simple gradient based update algorithms from our first set of results to update the parameters \emph{only from the partitions that have been modified by the addition or deletion}. Because each of these partitions contains only a fraction of the dataset, we can use our fixed gradient computation budget to perform more updates. Because we have maintained the marginal distributions on partition elements via our reservoir sampling step, the overall accuracy analysis of \cite{ZDW12} carries over even after an arbitrary sequence of updates. This is also crucial for our statistical indistinguishability guarantee. The result is a strong unlearning algorithm that yields an improved tradeoff between per-round run-time and steady state accuracy for sufficiently high dimensional data.

\subsection{Related Work}

\iffalse
\begin{table}[h]
\begin{centering}
\begin{tabular}{ |c|c|c|c| }
 \hline
 \multicolumn{4}{|c|}{prior work in machine unlearning} \\
 \hline
 reference & setting & unlearning & secret state?  \\
 \hline
 \makecell{\cite{CY15}} & non-adaptive statistical queries & deterministic &  no \\
\hline
\cite{forgetu} & $k$-means clustering & weak & no \\
\hline
 \cite{hessian} & \makecell{linear regression \\ logistic regression} & weak & no \\
 \hline
\cite{izzo2020approximate} & \makecell{linear regression} & weak & no \\
\hline
\cite{unlearning} & \makecell{deep learning} &  weak &  no \\
\hline
 this work & convex ERM & \makecell{\textbf{strong}}& \makecell{no (strongly convex) \\ yes} \\
 \hline
\end{tabular}
 \end{centering}
 \caption{We give the first strong unlearning algorithms for convex ERM.}
\end{table}
\fi

At a high level, our work differs from prior work in several ways.  We call deletion algorithms that do not maintain secret state \emph{perfect}. All prior work focuses on perfect deletion algorithms, but we give improved bounds for several problems by allowing our algorithms to maintain  secret state.  Second, we allow arbitrary sequences of updates, which can include additions and deletions (rather than just deletions). Finally, we distinguish between weak and strong unlearning algorithms, and give the first strong unlearning algorithms.

Cao and Yang \cite{CY15} first considered the problem of efficiently deleting data from a trained model under a deterministic notion of deletion, and coined the term ``machine unlearning''. They gave efficient deletion methods for certain statistical query algorithms --- but in general, their methods (or indeed, any deterministic notion of deletion) can apply to only very structured problems.  Ginart et al. \cite{forgetu} gave the first definition of data deletion that can apply to randomized algorithms, in terms of statistical indistinguishability. We adopt the approximate deletion notion they introduced, which is itself based on differential privacy \cite{DMNS06,DR14}. Ginart et al. gave a deletion algorithm for the $k$-means problem. Their algorithm is a \emph{weak} deletion algorithm, because their (amortized) running time per update scales linearly with the number of updates.

Guo et al. \cite{hessian} give deletion algorithms for linear and logistic regression, using the same notion of approximate statistical indistinguishability that we use. Their algorithm is similar to our first algorithm: it performs a convex optimization step, followed by a Gaussian perturbation. They use a second order update (a Newton step) rather than first order updates as we do, and their algorithm yields error that grows linearly with the number of updates, and so is a weak deletion algorithm.  Izzo et al \cite{izzo2020approximate} focus on linear regression and show how to improve the run-time per deletion of the algorithm given in \cite{hessian} from quadratic to linear in the dimension.

Our main result leverages a distributed optimization algorithm that partitions the data, separately optimizes on each partition, and then averages the parameters, analyzed by Zhang et al \cite{ZDW12}. Optimizing separately on different partitions of the data, and then aggregating the results is also a well known general technique in differential privacy known as ``Subsample and Aggregate'' \cite{nissim2007smooth} which has found applications in private learning \cite{pate}.  Bourtoule et al. \cite{unlearning} use a similar technique in the context of machine unlearning that they call ``SISA'' (Sharded, Isolated, Sliced, Aggregated). Their goal is more ambitious (to perform deletion for non-convex models), but they have a weaker deletion criterion (that it simply be \emph{possible} that the model arrived at after deletion could have arisen from the retraining process), and they give no error guarantees. Their algorithm involves full retraining on the affected partitions, a different aggregation function, no randomization, and does not include the reservoir sampling step that is crucial to our stronger indistinguishability guarantees.  This distributed optimization algorithm also bears similarity to the well-known \textit{FederatedAveraging} algorithm of \cite{McMahan} used for deep learning in the federated setting.

\longversion{Chen et al. \cite{DifferencingPrivacy} observe that deterministic deletion procedures such as SISA \cite{unlearning} can exacerbate privacy problems when an attacker can observe both the model before and after the deletion of a particular user's data point, and show how to perform membership inference attacks against SISA in this setting. Our method leverages techniques from differential privacy, and so in addition to being an $(\epsilon,\delta)$-deletion algorithm, a view of the two outputs of our algorithm  before and after a deletion is $(2\epsilon,2\delta)$-differentially private, which precludes non-trivial membership inference for reasonable values of $\epsilon$ and $\delta$. This follows because our \emph{deletion} algorithm is randomized: procedures such as the one from \cite{hessian} which have randomized training procedure but deterministic deletion procedure do not share this property.} 

\subsection{Summary of Results}
\label{sec:summary}
%!TEX root = main-full.tex
\begin{table}[t]
%\begin{center}
\centering
\def\arraystretch{1.2}
%\resizebox{\columnwidth}{!}{
%\begin{tabular}{ |p{1.7cm}|p{1.9cm}|p{2cm}|p{2.6cm}|p{2.6cm}|p{2.5cm}| }
\begin{tabular}{|c|c|c|c|c|c|}
 \hline
 \multicolumn{6}{|c|}{tradeoffs for $(\epsilon,\delta)$-unlearning} \\
 \hline
 method & \makecell{loss function \\ properties} & unlearning & accuracy & \makecell{iterations \\ for \\ $i$th update} & baseline iterations \\
 \hline
  \multirow{2}{*}{\makecell{PGD}\rule{0pt}{4ex}}
  & \makecell{SC, smooth} \rule{0pt}{4ex} & \makecell{{strong} \\ (Thm.~\ref{thm:sc-smooth-guarantees})} & $ \frac{ d e^{-\I} }{ \epsilon^2 n^2}$ &  $\I$ & $\I + \log \left( \frac{\epsilon n}{\sqrt{d}} \right) $ \\
  & \makecell{SC, smooth} \rule{0pt}{4ex} & \makecell{{strong, perfect} \\ (\longversion{Thm.~\ref{thm:sc-smooth-guarantees-2}}\shortversion{Supplement})} & $ \frac{ d e^{-\I} }{ \epsilon^2 n^2}$ &  $\makecell{\log i \cdot \I \\ \I \ge \log \left( d / \epsilon \right) }$ & $\I + \log \left( \frac{\epsilon n}{\sqrt{d}} \right)$ \\
 \hline
 \multirow{2}{*}{\makecell{Regularized \\ PGD }\rule{0pt}{4ex}}
 & \makecell{C, smooth} \rule{0pt}{4ex} & \makecell{{strong} \\ (Thm.~\ref{thm:c-smooth-guarantees})} & $\left( \frac{ \sqrt{d} }{ \epsilon n \I } \right)^{\frac{2}{5}}$ & $\I$ & $\left( \frac{ \epsilon n \I }{ \sqrt{d} } \right)^{\frac{2}{5}}$
 \\
& \makecell{C, smooth} \rule{0pt}{4ex} & \makecell{{weak} \\ (\longversion{Thm.~\ref{thm:c-smooth-guarantees-weak}}\shortversion{Supplement})} & $\sqrt{ \frac{ \sqrt{d} }{  \epsilon n \sqrt{\I} } }$ & $i^2 \cdot \I$ & $\sqrt{ \frac{ \epsilon n \sqrt{\I} }{  \sqrt{d} } }$ \\
\hline
\makecell{Distributed \\ PGD}\rule{0pt}{4ex} & \makecell{SC, smooth, \\ Lipschitz \\ and bounded \\ Hessian}\rule{0pt}{4ex} &  \makecell{{strong} \\ (Thm.~\ref{thm:scgd-distributed})} & $\makecell{ \frac{ d e^{-\I n^{\frac{4-3\xi}{2}} } }{ \epsilon^2 n^2}  \\ \ \ \ +  \frac{1}{n^\xi}} $ & $\log i \cdot \I$ & $\makecell{\min \big\{ \log n, \\  \I n^{\frac{4-3\xi}{2}} + \log \left( \frac{\epsilon n}{\sqrt{d}} \right) \big\}}$ \\
 \hline
\end{tabular}
%}
% \end{center}
\caption{Summary of Results. In this table, SC: strongly convex, C: convex, $n$ is training dataset size, $d$ is dimension, $\xi \in [1,4/3]$ is a parameter.}
\label{table}
 \end{table}

In Table \ref{table}, we state bounds for all our unlearning algorithms, and (in the 2nd column) the assumptions that they require (convexity, strong convexity, smoothness, etc.) The 3rd column of the table states whether our algorithms are strong or weak update algorithms (i.e. whether or not their runtimes grow polynomially with the length of the update sequence or not). The 4th column states the steady-state accuracy of the algorithm as a function of the desired run time $\I$ of the first update (each algorithm has a budget of $n\I$ gradient computations per update). The 5th column lists the run-time of the $i$'th update. The 6th column measures the run-time of the baseline approach that would \emph{fully retrain} the model after each update, to the accuracy achieved by our algorithms in the 4th column. Most of these guarantees are for algorithms that maintain a secret state. But for strongly convex and smooth functions we can obtain a perfect unlearning algorithm (i.e. one that satisfies the indistinguishability guarantee not just with respect to observable outputs, but with respect to the entire saved state) with the same asymptotic accuracy/runtime tradeoff, so long as the per-update run-time is at least logarithmic in the dimension.  See \longversion{Theorem \ref{thm:sc-smooth-guarantees-2}}\shortversion{the supplement} for details. For non strongly convex functions, our techniques do not appear to be able to give perfect unlearning algorithms for non-trivial parameters; this is an intriguing direction for future work.

Our ``Distributed PGD'' algorithm is somewhat more complex (see Section \ref{sec:distributed}), but has the advantage that it obtains improved accuracy/run-time tradeoffs for sufficiently high dimensional data. It divides the same  gradient computation budget $n\I$ between different numbers of iterations on different parts of the dataset. See Remark~\ref{distwin} for the exact conditions on when it yields an improvement over our simpler algorithms. 

\section{Model and Preliminaries}
\label{sec:model}
%!TEX root = main-full.tex
 We write $\Z$ to denote the data domain. A dataset $\D$ is a multi-set of elements from $\Z$. Datasets can be modified by \emph{updates} which are requests to either add or remove one element from the dataset.
\begin{definition}[Update]
An update $u$ is a pair $(z, \bullet)$ where $z \in \Z$ is a data point and $\bullet \in \T = \{ \mathtt{'add'}, \mathtt{'delete'} \}$ determines the type of the update. An update sequence $\U$ is a sequence $(u_1, u_2, \ldots )$ where $u_i \in \Z \times \T$ for all $i$. Given a dataset $\D$ and an update $u = (z,\bullet)$, the update operation is defined as follows.
\[
\D \circ u \triangleq \begin{cases} \D \cup \{z\} & \text{if} \ \bullet = \mathtt{'add'}
\\
\D \setminus \{z\} & \text{if} \ \bullet = \mathtt{'delete'} \end{cases}
\]
\end{definition}

 We use $\Theta$ to denote the space of models. In our setting, a \emph{learning} or \emph{training} algorithm is a mapping $\A: \Z^* \to \Theta$ that maps datasets to models. An \emph{unlearning} or \emph{update} algorithm for $\A$ is a mapping $\R_\A : \Z^* \times (\Z \times \T) \times \Theta \to \Theta$ that takes as input a dataset accompanied by a single update, and a model, and outputs an updated model. Some of our update algorithms will also take as input auxiliary information, that we elide here but will be clear from context. The output of the unlearning algorithm itself will not be made public: before any model is made public, it must pass through a \emph{publishing} function. A \emph{publishing function} is a mapping $\publish: \Theta \to \Theta$ that maps a (secret) model to the model that will be made publicly available.  Our unlearning guarantee will informally require that there should be no way to distinguish whether the \emph{published} model resulted from full retraining, or an arbitrary sequence of updates via the unlearning algorithm. Depending on whether we demand \emph{perfect} unlearning or not (to be defined shortly), we may save either the (secret) output of the unlearning algorithm as persistent state, or save only the (public) output of the publishing function.

\begin{definition}[$\D_i, \theta_i, \thetahat_i, \thetatilde_i$]\label{def:notation}
Fix any pair $(\A, \R_\A)$ of learning and unlearning algorithms, any publishing function $\publish$, any dataset $\D$, and any update sequence $\U = (u_1, u_2, \ldots)$. We write $\D_0 = \D$ and for any $i \ge 1$, $\D_i = \D_{i-1} \circ u_i$. For any $i \ge 1$, we write $\theta_i$ for the model input to the unlearning algorithm $\R_\A$ on round $i$. We write $\thetahat_0 = \A \left( \D_0 \right)$, and for any $i \ge 1$, $\thetahat_i = \R_\A \left(\D_{i-1}, u_i, \theta_{i} \right)$. For any $i \ge 0$, we define $\thetatilde_i = \publish (\thetahat_i)$. In other words, whenever $\A$, $\R_\A$, $\publish$, $\D$, and $\U$ are clear from context, we write $\{ \D_i \}_{i \ge 0}$ to represent the sequence of updated datasets, $\{ \theta_i \}_{i \ge 1}$ for the sequence of input models to $\R_\A$, $\{ \thetahat_i \}_{i \ge 0}$ to denote the (secret) output models of $\A$ and $\R_\A$, and $\{ \thetatilde_i \}_{i\ge0}$ to denote their corresponding sequence of published models.
\end{definition}

Our $(\epsilon,\delta)$-unlearning notion is similar to the deletion notion proposed in \cite{forgetu} but generalizes it to an update sequence consisting of both additions and deletions.

\begin{definition}[$(\epsilon,\delta)$-indistinguishability]
Let $X$ and $Y$ be random variables over some domain $\Omega$. We say $X$ and $Y$ are $(\epsilon,\delta)$-indistinguishable and write $X \overset{\epsilon,\delta}{\approx} Y$, if for all $S \subseteq \Omega$,
\[
\prob{X \in S} \le e^{\epsilon} \prob{Y \in S} + \delta, \quad \prob{Y \in S} \le e^{\epsilon} \prob{X \in S} + \delta
\]
\end{definition}

\begin{definition}[$(\epsilon,\delta)$-unlearning]
We say that $\R_\A$ is an $(\epsilon,\delta)$-unlearning algorithm for $\A$ with respect to a publishing function $\publish$, if for all data sets $\D$ and all  update sequences $\U = (u_i)_i$, the following condition holds. For every update step $i \ge 1$, for $\theta_i = \thetahat_{i-1}$
\[
\publish \left( \R_\A \left(\D_{i-1}, u_i, \theta_{i} \right) \right) \overset{\epsilon,\delta}{\approx} \publish \left( \A \left(\D_i \right) \right)
\]
If the above condition holds for $\theta_i = \thetatilde_{i-1}$,  $\R_\A$ is an $(\epsilon,\delta)$-perfect unlearning algorithm for $\A$.
\end{definition}
\longversion{
\begin{remark}
Observe that an unlearning algorithm takes as input the model output by the previous round's unlearning algorithm, whereas a \emph{perfect} unlearning algorithm takes as input the model output by the previous round's publishing algorithm. Since we require that the \emph{published} outputs satisfy $(\epsilon,\delta)$-indistinguishability, this means that unlearning algorithms may need to maintain \emph{secret state} that does not satisfy the indistinguishability guarantee, but that perfect unlearning algorithms do not need to.
\end{remark}

\begin{assumption}\label{ass:ass1}
For notational simplicity (so that we can state asymptotic bounds in terms of $n$) We assume throughout that over the course of an update sequence, the size of the updated datasets never drops below $n/2$ where $n$ is the size of the original training dataset: $\forall \, i, \, n_i \ge n/2$ where $n_i$ is the size of $\D_i$. Note that this is consistent with update sequences being of arbitrary length, since we allow additions as well as deletions. This assumption is not necessary, but otherwise bounds would have to be stated in terms of $n_i$.
\end{assumption}
}

\subsection{Learning Framework: ERM}
We consider an \emph{Empirical Risk Minimization (ERM)} setting in this paper where models are (parameter) vectors in $d$-dimensional  space $\Real^d$ equipped with the (Euclidean) $\ell_2$-norm which will be denoted by $\left\Vert \cdot \right\Vert_2$. Let $\Theta \subseteq \Real^d$ be a convex and closed subset of $\Real^d$, and let $D=\sup_{\theta,\theta' \in \Theta} \Vert \theta - \theta' \Vert_2$ be the \emph{diameter} of $\Theta$. We denote a loss function by a mapping $f: \Theta \times \Z \to \Real$ that takes as input a parameter $\theta \in \Theta$ and a data point $z \in \Z$, and outputs the loss of $\theta$ on $z$, $f (\theta, z)$ --- which we may also denote by $f_z (\theta)$. Given a dataset $\D = \{z_i\}_{i =1}^n \in \Z^n$, with slight abuse of notation, let $f_\D(\theta )$ denote the empirical loss of $\theta$ on the dataset $\D$. In other words,
\begin{equation}\label{eq:empirical}
f_\D \left(\theta\right) \triangleq \frac{1}{n} \sum_{i=1}^{n} f_{z_i} (\theta)
\end{equation}

\begin{definition}[$(\alpha, \beta)$-accuracy]
We say a pair $(\A, \R_\A)$ of learning and unlearning algorithms is $(\alpha, \beta)$-accurate with respect to a publishing function $\publish$, if for every dataset $\D$ and every update sequence $\U$, the following condition holds. For every $i \ge 0$,
\[
\prob{f_{\D_i} (\thetatilde_i) - \min_{\theta \in \Theta} f_{\D_i} (\theta)  > \alpha} < \beta
\]
\end{definition}

\begin{definition}[strong vs. weak unlearning]
Fix any pair $(\A, \R_\A)$ of learning and unlearning algorithms that satisfy $(\alpha, \beta)$-accuracy with respect to some publishing function $\publish$. Let $C_i$ represent the overall computational cost of the unlearning algorithm at step $i$ of the update. We say $\R_\A$ is a ``strong" unlearning algorithm for $\A$ if
\begin{enumerate}
\item $\alpha$ and $\beta$ are independent of the length of the update sequence, and
\item For every $i \ge 1$, $C_i/C_1 = \bigo \left( \log (i) \right)$, i.e., the computation cost of the unlearning algorithm must grow at most logarithmically with $i$.
\end{enumerate}
If (1) holds and $\forall i \ge 1$, $C_i/C_1 =  \Omega \left( poly(i) \right)$, we say $\R_\A$ is a ``weak" unlearning algorithm for $\A$.
\end{definition}
We remark that we have defined update sequences as if they are \emph{non-adaptively chosen}, but that our basic algorithms in Section \ref{sec:basic} have guarantees also for adaptively chosen update sequences. 
\shortversion{In the supplement we have additional preliminaries related to properties of convex functions and gradient descent.}

\longversion{
\subsection{Loss Function Properties}
\begin{definition}[Strong Convexity]\label{def:sc}
A function $h:\Theta \to \Real$ is said to be $m$-strongly convex for some $m \ge 0$, if for any $\theta_1,\theta_2 \in \Theta$, and any $t \in (0,1)$,
\[
h \left( t \theta_1 + (1-t) \theta_2 \right) \le t h (\theta_1) + (1-t) h(\theta_2) - \frac{m}{2} t (1-t) \left\lVert \theta_1 - \theta_2 \right\lVert_2^2
\]
if the above condition holds for $m = 0$, we say $h$ is convex.
\end{definition}

\begin{definition}[Lipschitzness]
A function $h:\Theta \to \Real$ is said to be $L$-Lipschitz if for all $\theta_1,\theta_2 \in \Theta$,
\[
\left\vert h(\theta_1) - h(\theta_2) \right\vert \le L \left\Vert \theta_1 -\theta_2 \right\Vert_2
\]
\end{definition}

\begin{definition}[Smoothness]
A function $h:\Theta \to \Real$ is said to be $M$-smooth, if it is differentiable and for all $\theta_1,\theta_2 \in \Theta$,
\[
\left\Vert \nabla h (\theta_1) - \nabla h (\theta_2) \right\Vert_2 \le M \left\Vert \theta_1 - \theta_2 \right\Vert_2
\]
\end{definition}

\subsection{Strong Convexity and Sensitivity}
Throughout the paper we will leverage the fact that the optimizers of strongly convex functions have low \emph{sensitivity} to individual data points. We will formally state this fact in Lemma~\ref{lem:par} and defer its proof to Appendix~\ref{app:par-proof}.

\begin{lemma}[Sensitivity]
\label{lem:par}
Suppose for any $z \in \Z$, $f_z$ is $L$-Lipschitz and $m$-strongly convex. For any dataset $\D$, let $\thetastar_\D \triangleq \argmin_{\theta \in \Theta} f_\D \left(\theta \right)$. We have that for any integer $n$, any data set $\D$ of size $n$, and any update $u$, $\lVert \thetastar_{\D}-\thetastar_{\D \circ u} \lVert_2 \leq \frac{2L}{mn}$.
\end{lemma}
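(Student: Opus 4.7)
The plan is to use the two-sided strong convexity inequality at each minimizer and then bound the ``perturbation'' $f_{\D \circ u} - f_\D$ via Lipschitzness, taking care that in both the addition and deletion case we obtain a denominator of exactly $n$.

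Let $\theta_1 = \thetastar_\D$ and $\theta_2 = \thetastar_{\D \circ u}$. Since $f_\D$ and $f_{\D \circ u}$ are $m$-strongly convex (as averages of $m$-strongly convex summands), and since $\theta_1, \theta_2$ are their respective constrained minima over the convex set $\Theta$, the first-order optimality condition $\langle \nabla f_\D(\theta_1), \theta - \theta_1 \rangle \ge 0$ for all $\theta \in \Theta$ upgrades strong convexity to
\[
f_\D(\theta_2) - f_\D(\theta_1) \ge \frac{m}{2}\|\theta_1 - \theta_2\|_2^2,
\]
and symmetrically $f_{\D \circ u}(\theta_1) - f_{\D \circ u}(\theta_2) \ge \frac{m}{2}\|\theta_1 - \theta_2\|_2^2$. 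Adding the two inequalities and letting $g := f_{\D \circ u} - f_\D$ yields $g(\theta_1) - g(\theta_2) \ge m\|\theta_1 - \theta_2\|_2^2$.

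The second step is to control $|g(\theta_1) - g(\theta_2)|$ by rewriting $g$ as $\frac{1}{N}[F_1 - F_2]$ for two $L$-Lipschitz functions $F_1, F_2$ with $N \ge n$. If $u = (z, \mathtt{'add'})$, then $|\D \circ u| = n+1$ and a direct expansion gives $g(\theta) = \frac{1}{n+1}[f_z(\theta) - f_\D(\theta)]$. If $u = (z, \mathtt{'delete'})$, then $|\D \circ u| = n-1$ and the naive expression for $g$ carries a factor of $\frac{1}{n-1}$; the fix is to solve for $f_\D$ in terms of $f_{\D \circ u}$ and $f_z$, which yields $g(\theta) = \frac{1}{n}[f_{\D \circ u}(\theta) - f_z(\theta)]$. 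In both cases, since any empirical loss is a convex combination of $L$-Lipschitz functions and hence $L$-Lipschitz, we obtain $|g(\theta_1) - g(\theta_2)| \le \frac{2L}{n}\|\theta_1 - \theta_2\|_2$.

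Combining the two estimates gives $m\|\theta_1 - \theta_2\|_2^2 \le \frac{2L}{n}\|\theta_1 - \theta_2\|_2$, and dividing through by $\|\theta_1 - \theta_2\|_2$ (the bound is trivial if this vanishes) yields the desired inequality $\|\theta_1 - \theta_2\|_2 \le \frac{2L}{mn}$. The only real subtlety is the bookkeeping in the deletion case: one must re-express $f_\D$ as a convex combination involving $f_{\D \circ u}$ to recover the tight $\frac{1}{n}$ coefficient rather than the easier but looser $\frac{1}{n-1}$ coefficient.
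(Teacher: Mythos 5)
Your proof is correct and the constant matches the paper's, but you take a (mildly) different route. The paper's argument is asymmetric: it lower-bounds $f_\D(\thetastar_{\D\circ u}) - f_\D(\thetastar_\D)$ by $\frac{m}{2}\lVert\thetastar_\D - \thetastar_{\D\circ u}\rVert_2^2$ using strong convexity of $f_\D$ alone at its minimizer, and upper-bounds the same quantity by $\frac{L}{n}\lVert\thetastar_\D - \thetastar_{\D\circ u}\rVert_2$ using only the optimality of $\thetastar_{\D\circ u}$ for $f_{\D\circ u}$ together with the Lipschitzness of the single summand $f_z$; the addition case is then handled by reduction to deletion. You instead symmetrize: you apply the strong-convexity growth bound at both minimizers, sum, and then bound the resulting increment of the difference function $g = f_{\D\circ u} - f_\D$ by computing its Lipschitz constant explicitly in each case. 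The factor of $2$ migrates from the left-hand side ($m$ rather than $m/2$) to the right-hand side ($2L/n$ rather than $L/n$), so the two arguments land on the same bound $\frac{2L}{mn}$. Your algebra in both cases is right — in particular the re-expression $g = \frac{1}{n}\left[ f_{\D\circ u} - f_z \right]$ for deletion is exactly what is needed to avoid the lossier $\frac{1}{n-1}$ coefficient, and your treatment handles addition and deletion on equal footing (indeed for addition you get the slightly better $\frac{2L}{n+1}$). One small remark: you justify the growth bound $f_\D(\theta_2) - f_\D(\theta_1) \ge \frac{m}{2}\lVert\theta_1-\theta_2\rVert_2^2$ via the first-order optimality condition, which presupposes differentiability; the same inequality follows directly from the definition of strong convexity and optimality without any gradient (this is how the paper proves its Claim~\ref{clm:claim}), so the lemma holds under exactly the stated hypotheses.
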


\subsection{Convergence Results for Gradient Descent}
We make use of projected gradient descent extensively throughout this paper. Here, we state two convergence results for gradient descent that we will use. A crucial feature of these bounds (and one not shared by all bounds for gradient descent and its variants) is that they improve as a function of how close our initial parameter is to the optimal parameter.

Let $h:\Theta \to \Real$ where $\Theta \subseteq \Real^d$ is convex, closed, and bounded. Our goal is to approximate $\min_{\theta \in \Theta} h(\theta)$. The Gradient Descent (GD) algorithm starts with an initial point $\theta_0 \in \Theta$ and proceeds as follows:
\[
\forall \, t \ge 1: \quad \theta_{t} = \proj_\Theta \left( \theta_{t-1} - \eta_t \nabla h (\theta_{t-1}) \right)
\]
$\proj_\Theta (\theta) = \argmin_{\theta' \in \Theta} \Vert \theta - \theta' \Vert_2$ is a projection onto $\Theta$, and $\eta_t$ is the step size used in round $t$.

%\begin{theorem}[Convex and Lipschitz \cite{washu}]
%\label{thm:c-lip-gd}
%Suppose $f$ is convex and $L$-Lipschitz. We have that after $T$ steps of GD with $\eta_t = \frac{f(\theta_{t-1})-f(\thetastar)}{\lVert \nu_{t-1} \lVert_2^2}$, and letting $\tau \in \argmin_{0 \le t \le T} f(\theta_t)$,
%$$
%f (\theta_\tau) -  f(\thetastar) \leq \frac{L \left\lVert \theta_0 - \thetastar \right\lVert_2}{\sqrt{T}}
%$$
%\end{theorem}
%
%\begin{theorem}[Strongly Convex and Lipschitz \cite{washu}]
%\label{thm:sc-lip-gd}
%Suppose $f$ is $m$-strongly convex and $L$-Lipchitz. We have that after $T$ steps of GD with $\eta_t = \frac{2}{mt}$, and letting $\tau \in \argmin_{0 \le t \le T} f(\theta_t)$,
%$$
%f (\theta_\tau) - f(\thetastar) \leq \frac{2L^2}{mT}
%$$
%which by $m$-strong convexity of $f$ implies:
%$$
%\left\lVert \theta_\tau -\thetastar \right\lVert_2 \leq \frac{2L}{m \sqrt{T}}
%$$
%\end{theorem}

\begin{theorem}[Strongly Convex and Smooth \cite{pton}]
\label{thm:sc-sm-gd}
Let $h$ be $m$-strongly convex and $M$-smooth, and let $\thetastar = \argmin_{\theta \in \Theta} h (\theta)$. We have that after $T$ steps of GD with step size $\eta_t = \frac{2}{m +M}$,
\[
\left\lVert \theta_T - \thetastar \right\lVert_2  \leq \left(\frac{M-m}{M+m}\right)^{T} \left\lVert \theta_0-\thetastar \right\lVert_2
\]
\end{theorem}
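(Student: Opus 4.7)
\medskip
\noindent\textbf{Proof Proposal.} The plan is to show that one step of projected gradient descent with the stated step size is a contraction on $\Theta$ with contraction factor $\rho := \frac{M-m}{M+m}$ about the minimizer $\thetastar$; iterating this contraction $T$ times then yields the claimed bound.

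First I would use two standard reductions. (i) Since $\thetastar$ minimizes $h$ over the convex set $\Theta$, the first-order optimality condition implies $\thetastar$ is a fixed point of the projected gradient map: $\thetastar = \proj_\Theta \bigl( \thetastar - \eta\, \nabla h(\thetastar) \bigr)$ for any $\eta \ge 0$. (ii) Projection onto a closed convex set is non-expansive in $\ell_2$. Applying these two facts to the definition of $\theta_t$ gives
\[
\left\lVert \theta_t - \thetastar \right\rVert_2 \;\le\; \left\lVert \bigl(\theta_{t-1} - \eta\, \nabla h(\theta_{t-1})\bigr) - \bigl(\thetastar - \eta\, \nabla h(\thetastar)\bigr) \right\rVert_2,
\]
so the problem reduces to bounding the \emph{unconstrained} gradient step.

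The core inequality I would invoke is the co-coercivity bound for functions that are simultaneously $m$-strongly convex and $M$-smooth: for any $\theta_1, \theta_2 \in \Theta$,
\[
\bigl\langle \nabla h(\theta_1) - \nabla h(\theta_2),\; \theta_1 - \theta_2 \bigr\rangle \;\ge\; \frac{mM}{m+M} \left\lVert \theta_1 - \theta_2 \right\rVert_2^2 \;+\; \frac{1}{m+M} \left\lVert \nabla h(\theta_1) - \nabla h(\theta_2) \right\rVert_2^2.
\]
(This follows by applying strong convexity and smoothness to the shifted function $h(\theta) - \tfrac{m}{2}\lVert\theta\rVert^2$; I would derive it in one line or cite a standard reference.) With this in hand, I expand
\[
\left\lVert (\theta_{t-1} - \eta\, \nabla h(\theta_{t-1})) - (\thetastar - \eta\, \nabla h(\thetastar)) \right\rVert_2^2 = \left\lVert \theta_{t-1} - \thetastar \right\rVert_2^2 - 2\eta \langle \nabla h(\theta_{t-1}) - \nabla h(\thetastar), \theta_{t-1} - \thetastar \rangle + \eta^2 \left\lVert \nabla h(\theta_{t-1}) - \nabla h(\thetastar) \right\rVert_2^2,
\]
apply the co-coercivity bound to the inner-product term, and substitute $\eta = \tfrac{2}{m+M}$. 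The coefficient of $\lVert \nabla h(\theta_{t-1}) - \nabla h(\thetastar) \rVert_2^2$ then cancels exactly (this is the design choice of the step size), leaving
\[
\left\lVert \theta_t - \thetastar \right\rVert_2^2 \;\le\; \left( 1 - \tfrac{4mM}{(m+M)^2} \right) \left\lVert \theta_{t-1} - \thetastar \right\rVert_2^2 \;=\; \left( \tfrac{M-m}{M+m} \right)^2 \left\lVert \theta_{t-1} - \thetastar \right\rVert_2^2.
\]
Taking square roots and iterating $t = 1, \ldots, T$ gives the theorem.

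The main obstacle is simply deriving (or citing) the co-coercivity inequality cleanly; everything else is routine algebra around the optimal step size choice. The key conceptual moves are that $\thetastar$ is a fixed point of the projected gradient step (so the non-expansiveness of $\proj_\Theta$ lets us ignore the projection in the bound) and that $\eta = 2/(m+M)$ is precisely the step size that balances the strong-convexity and smoothness terms to produce the sharp ratio $\frac{M-m}{M+m}$.
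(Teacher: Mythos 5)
The paper does not prove this theorem---it quotes it directly from \cite{pton}---but your proposal is correct and is essentially the standard argument from that reference: reduce to the unconstrained gradient step via non-expansiveness of $\proj_\Theta$ and the fixed-point property of $\thetastar$, then apply the co-coercivity inequality for $m$-strongly convex, $M$-smooth functions, with the step size $\eta = 2/(m+M)$ chosen exactly to cancel the squared-gradient term. The final algebra checks out, since $1 - \tfrac{4mM}{(m+M)^2} = \left(\tfrac{M-m}{M+m}\right)^2$.
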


\begin{theorem}[Convex and Smooth \cite{washu}]
\label{thm:c-sm-gd}
Let $h$ be convex and $M$-smooth, and let $\thetastar \in \argmin_{\theta \in \Theta} h (\theta)$. We have that after $T$ steps of GD with step size $\eta_t = \frac{1}{M}$,
\[
h (\theta_{T}) - \min_{\theta \in \Theta} h (\theta) \le \frac{M \left\lVert \theta_0 - \thetastar \right\lVert_2^2}{2T}
\]
\end{theorem}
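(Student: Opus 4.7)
The plan is to derive, in parallel, two one-step inequalities and then combine them by telescoping: (a) a \emph{descent lemma} saying that $h(\theta_t)$ is monotonically non-increasing along the projected gradient descent trajectory, and (b) a \emph{distance-progress inequality} of the form
$\lVert \theta_{t+1} - \thetastar \rVert_2^2 \le \lVert \theta_t - \thetastar \rVert_2^2 - \tfrac{2}{M}(h(\theta_t) - h(\thetastar))$. This is the standard textbook analysis of gradient descent for smooth convex functions; the only care needed is in handling the projection onto $\Theta$.

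For the descent lemma, I would start from the $M$-smoothness quadratic upper bound $h(\theta_{t+1}) \le h(\theta_t) + \langle \nabla h(\theta_t), \theta_{t+1} - \theta_t \rangle + \tfrac{M}{2} \lVert \theta_{t+1} - \theta_t \rVert_2^2$ and use the fact that $\theta_{t+1} = \proj_\Theta(\theta_t - \tfrac{1}{M}\nabla h(\theta_t))$ is, by definition, the minimizer over $\Theta$ of this very quadratic surrogate. Comparing against the feasible point $\theta_t$ then gives $h(\theta_{t+1}) \le h(\theta_t) - \tfrac{M}{2}\lVert \theta_{t+1} - \theta_t \rVert_2^2$, so that the loss is non-increasing.

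For the distance-progress inequality, I would invoke the first-order optimality condition for the projection, $\langle \theta_t - \tfrac{1}{M}\nabla h(\theta_t) - \theta_{t+1},\ \thetastar - \theta_{t+1}\rangle \le 0$, and expand $\lVert \theta_{t+1} - \thetastar \rVert_2^2 = \lVert \theta_t - \thetastar \rVert_2^2 - 2\langle \theta_t - \theta_{t+1}, \theta_t - \thetastar\rangle + \lVert \theta_{t+1} - \theta_t \rVert_2^2$. The optimality condition lets me replace $\theta_t - \theta_{t+1}$ with (a multiple of) $\nabla h(\theta_t)$ up to a sign-favorable error term, after which convexity gives $\langle \nabla h(\theta_t), \theta_t - \thetastar\rangle \ge h(\theta_t) - h(\thetastar)$. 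The leftover quadratic term should cancel exactly against the slack in the descent lemma when the arithmetic is arranged carefully — this bookkeeping is where I expect the main obstacle to be, since it is what forces the specific step size $1/M$.

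Once both inequalities are in hand, I would sum the distance-progress inequality from $t=0$ to $T-1$ to obtain the telescoping bound $\sum_{t=0}^{T-1}(h(\theta_t) - h(\thetastar)) \le \tfrac{M}{2}\lVert \theta_0 - \thetastar \rVert_2^2$. Because $h(\theta_t)$ is non-increasing by the descent lemma, every summand is at least $h(\theta_T) - h(\thetastar)$, so $T \cdot (h(\theta_T) - h(\thetastar))$ is bounded above by the right-hand side, yielding the advertised $\tfrac{M \lVert \theta_0 - \thetastar \rVert_2^2}{2T}$ rate. The fact that the bound scales with $\lVert \theta_0 - \thetastar \rVert_2^2$ rather than the diameter of $\Theta$ — emphasized in the lead-in to the theorem — is automatic from this argument, since the diameter never enters the analysis; it is the warm-start distance that governs convergence.
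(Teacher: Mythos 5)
Your proposal is correct: it is the standard three-point analysis of projected gradient descent for convex $M$-smooth objectives (surrogate-minimization view of the projection step for monotonicity, projection optimality plus convexity for the per-step distance decrease, then telescoping), and the quadratic terms do cancel exactly as you anticipate, yielding $\sum_t \bigl(h(\theta_{t+1}) - h(\thetastar)\bigr) \le \tfrac{M}{2}\lVert\theta_0-\thetastar\rVert_2^2$ and hence the stated rate. The paper itself imports Theorem~\ref{thm:c-sm-gd} from the literature without proof, so there is no in-paper argument to compare against; your write-up fills that gap with the expected textbook derivation, and the only nit is an off-by-one in which iterate appears in the telescoped sum ($h(\theta_{t+1})$ rather than $h(\theta_t)$ arises naturally in the projected case), which the monotonicity step absorbs either way.
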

}

\section{Basic Perturbed Gradient Descent}
\label{sec:basic}
%!TEX root = main-full.tex

A key building block for our main result (and a simple and effective deletion scheme in its own right, that requires fewer assumptions than our main result) is \emph{perturbed gradient descent}. The basic idea is as follows, for both the training algorithm and the deletion algorithm: we will perform gradient descent updates until we are guaranteed that we have found a $\thetahat_t$ which is within Euclidean distance $\alpha$ of the optimizer, for some small $\alpha$. Our publishing algorithm $\publish$ adds Gaussian noise scaled as a function of $\alpha$ to every coordinate. This guarantees $(\epsilon,\delta)$-indistinguishability with respect to any other parameter that is within distance $\alpha$ of the optimizer --- and hence between the outcomes of full retraining and updating. Depending on whether we want a perfect deletion algorithm or not, we save either the perturbed or unperturbed parameter as our initialization point for the next update.

Our update algorithm will be the same as our training algorithm --- except that it will be initialized at the learned parameter from the previous round, which will guarantee faster convergence. This is because --- if we allow secret state --- the initialization parameter will be within $\alpha$ of the optimizer before the update, and if $f$ is strongly convex, within $\bigo (\alpha + \frac{1}{mn})$ of the optimal parameter after the update\longversion{ by the sensitivity Lemma~\ref{lem:par}}. If we require a perfect deletion algorithm, we will necessarily need to start further from the optimizer, because our saved state will have been additionally perturbed with Gaussian noise. Here we leverage the fact that gradient descent converges quickly when its initialization point is near the optimal solution.

This algorithm relies crucially on leveraging strong convexity, which guarantees us that updates only change the empirical risk minimizer by a small amounts \emph{in parameter space}. In Section \ref{subsec:smooth-convex} we solve the non-strongly-convex case by adding a strongly convex regularizer. \shortversion{Here we present the update and publishing algorithms. The training algorithm $\A$ (in the supplement) is simply projected gradient descent run for more iterations.}

\longversion{
\begin{algorithm}
	\begin{algorithmic}[1]
	\State Input: dataset $\D$
	\State Initialize $\theta'_0 \in \Theta$
		\For {$t=1,2, \ldots T$}
				\State $\theta'_t = \proj_\Theta \left( \theta'_{t-1}-\eta_t \nabla f_\D (\theta'_{t-1}) \right)$
		\EndFor
		\State Output: $\thetahat_0 = \theta'_T$ \Comment{Secret output}
	\end{algorithmic}
\caption{$\A$: \textbf{Learning} for Perturbed Gradient Descent}
\label{alg:scgd}
\end{algorithm}
}

We parameterize our results by the computational cost of the update operations, and we can trade off run-time for accuracy. We measure computational cost by gradient computations. In this section, we parameterize our strong unlearning algorithms by the number of iterations $\I$ that they run for, which corresponds to a budget of $\approx n\I$ gradient computations per update. For weak unlearning algorithms, this is the number of iterations they run for at their first update.

\begin{algorithm}
	\begin{algorithmic}[1]
	\State Input: dataset $\D_{i-1}$, update $u_i$, model $\theta_{i}$
	\State Update dataset $\D_i = \D_{i-1} \circ u_i$
	\State Initialize $\theta'_0 = \theta_{i}$
		\For {$t=1,2, \ldots T_i$}
				\State $\theta'_t = \proj_\Theta \left( \theta'_{t-1}-\eta_t \nabla f_{\D_i} (\theta'_{t-1}) \right)$
		\EndFor
		\State Output $\thetahat_i = \theta'_{T_i}$ \Comment{Secret output}
	\end{algorithmic}
\caption{$\R_\A$: $i$th \textbf{Unlearning} for Perturbed Gradient Descent }
\label{alg:del_scgd}
\end{algorithm}

\begin{algorithm}
	\begin{algorithmic}[1]
		\State Input: $\thetahat \in \Real^d$
		\State Draw $Z \sim \mathcal{N} \left( 0, \sigma^2 \Imat_d \right)$
		\State Output: $\thetatilde = \thetahat + Z$ \Comment{Public output}
	\end{algorithmic}
\caption{$\publish$: Publishing function}
\label{alg:publish}
\end{algorithm}

%!TEX root = main-full.tex
\subsection{Perturbed GD Analysis: Strongly Convex Loss}
\longversion{In this section we analyze Algorithms~\ref{alg:scgd} and \ref{alg:del_scgd} in the case when $f$ is $m$-strongly convex.}
\shortversion{
\begin{theorem}[Accuracy, Unlearning, and Computation Tradeoffs]\label{thm:sc-smooth-guarantees}
Suppose the loss function $f_z$ is $m$-strongly convex, $L$-Lipschitz, and $M$-smooth. Let $\gamma \triangleq (M-m)/(M+m)$. There are parameters for  learning algorithm $\A$  and  unlearning algorithm $\R_\A$ (Algorithm~\ref{alg:del_scgd}) such that:
\begin{enumerate}
\item Unlearning: $\R_\A$ is an $(\epsilon, \delta)$-strong unlearning algorithm for $\A$ with respect to $\publish$.
\item Accuracy: for any $\beta$, $(\A, \R_\A)$ is $(\alpha,\beta)$-accurate with respect to $\publish$ where
\[
\alpha = \bigo \left( \frac{\gamma^{2\I} d \log \left( 1/\delta \right) \log^2 \left(d/\beta \right) }{\epsilon^2 n^2} \right)
\]
\end{enumerate}
\end{theorem}
}
\longversion{
\begin{theorem}[Accuracy, Unlearning, and Computation Tradeoffs]\label{thm:sc-smooth-guarantees}
Suppose for all $z \in \Z$, the loss function $f_z$ is $m$-strongly convex, $L$-Lipschitz, and $M$-smooth. Define $\gamma \triangleq (M-m)/(M+m)$ and $\eta \triangleq 2/(M+m)$. Let the learning algorithm $\A$ (Algorithm~\ref{alg:scgd}) run with $\eta_t = \eta$ and $T \ge \I + \log ( \frac{Dmn}{2L} ) / \log \left( 1 / \gamma \right)$ where $n$ is the size of the input dataset, and let the unlearning algorithm $\R_\A$ (Algorithm~\ref{alg:del_scgd}) run with input models $\theta_i \equiv \thetahat_{i-1}$ and $\eta_t = \eta$ and $T_i = \I$ iterations, for all $i \ge 1$. Let the unlearning parameters $\epsilon$ and $\delta$ be such that $\epsilon = \bigo \left(\log \left( 1/\delta \right) \right)$, and let
\[
\sigma = \frac{ 4 \sqrt{2} L \gamma^\I }{ mn \left( 1 - \gamma^\I \right) \left(\sqrt{\log \left( 1/\delta \right) + \epsilon} - \sqrt{\log \left( 1/\delta \right)} \right)}
\]
in $\publish$ (Algorithm~\ref{alg:publish}). We have that
\begin{enumerate}
\item Unlearning: $\R_\A$ is a strong $(\epsilon, \delta)$-unlearning algorithm for $\A$ with respect to $\publish$.
\item Accuracy: for any $\beta$, $(\A, \R_\A)$ is $(\alpha,\beta)$-accurate with respect to $\publish$ where
\[
\alpha = \bigo \left( \frac{M L^2 \gamma^{2\I} d \log \left( 1/\delta \right) \log^2 \left(d/\beta \right) }{\left( 1 - \gamma^\I \right)^2 m^2 \epsilon^2 n^2} \right)
\]
\end{enumerate}
\end{theorem}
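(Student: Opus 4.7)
\textbf{Proof proposal for Theorem \ref{thm:sc-smooth-guarantees}.} The plan is to chain together three ingredients: (i) the linear convergence of gradient descent on strongly convex, smooth losses (Theorem \ref{thm:sc-sm-gd}), (ii) the sensitivity lemma (Lemma \ref{lem:par}) for the ERM optimizer, and (iii) the (analytic) Gaussian mechanism. Strong convexity and smoothness give geometric contraction, while Lipschitzness controls how much the optimizer moves under a single update; together these let us maintain, inductively, that every secret iterate $\thetahat_i$ lives inside a small ball around $\thetastar_{\D_i}$.

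First I would set up and prove the key invariant by induction on $i$: letting $\Delta_i \triangleq \|\thetahat_i - \thetastar_{\D_i}\|_2$, I claim $\Delta_i \le \frac{2L\gamma^{\I}}{mn(1-\gamma^{\I})}$ for every $i\ge 0$. For the base case, $\A$ runs $T$ projected GD steps, so by Theorem \ref{thm:sc-sm-gd} $\Delta_0 \le \gamma^T D$; the choice $T \ge \I + \log(Dmn/2L)/\log(1/\gamma)$ is exactly what is needed for $\gamma^T D \le \frac{2L\gamma^{\I}}{mn}$, which is below the invariant. For the inductive step, the unlearning algorithm is initialized at $\thetahat_{i-1}$ and runs $\I$ GD steps on the fresh objective $f_{\D_i}$, so Theorem \ref{thm:sc-sm-gd} gives $\Delta_i \le \gamma^{\I}\|\thetahat_{i-1} - \thetastar_{\D_i}\|_2$. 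The triangle inequality plus Lemma \ref{lem:par} (and Assumption \ref{ass:ass1}, which keeps the dataset size within a constant factor of $n$) yield $\|\thetahat_{i-1}-\thetastar_{\D_i}\|_2 \le \Delta_{i-1} + \frac{2L}{mn}$; plugging in the inductive bound and solving the resulting fixed-point inequality reproduces the invariant.

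Next I would establish the unlearning guarantee. The retrained output $\A(\D_i)$ satisfies $\|\A(\D_i) - \thetastar_{\D_i}\|_2 \le \frac{2L\gamma^{\I}}{mn}$ by the same base-case calculation, so the triangle inequality combined with the invariant gives a deterministic $\ell_2$ bound on the sensitivity $\|\R_\A(\D_{i-1},u_i,\thetahat_{i-1}) - \A(\D_i)\|_2 \le \frac{4L\gamma^{\I}}{mn(1-\gamma^{\I})}$. Adding independent spherical Gaussian noise of standard deviation $\sigma$ in $\publish$ and invoking the tight (Balle--Wang) Gaussian mechanism then yields $(\epsilon,\delta)$-indistinguishability precisely when $\sigma$ is at least the value stated, which is where the factor $\sqrt{\log(1/\delta)+\epsilon}-\sqrt{\log(1/\delta)}$ enters. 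Since each update uses exactly $\I$ iterations regardless of $i$, the computational cost is constant in $i$, so the unlearning algorithm is strong.

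Finally, for accuracy I would use $M$-smoothness: $f_{\D_i}(\thetatilde_i) - f_{\D_i}(\thetastar_{\D_i}) \le \frac{M}{2}\|\thetatilde_i - \thetastar_{\D_i}\|_2^2 \le M(\Delta_i^2 + \|Z\|_2^2)$. The first term is $O\!\bigl(\tfrac{ML^2\gamma^{2\I}}{m^2n^2(1-\gamma^{\I})^2}\bigr)$ by the invariant; for the noise term I apply a coordinatewise Gaussian tail bound and a union bound to get $\|Z\|_2^2 = O(\sigma^2 d\log(d/\beta))$ with probability $\ge 1-\beta$, then substitute the chosen $\sigma$ and expand the $(\sqrt{\log(1/\delta)+\epsilon}-\sqrt{\log(1/\delta)})^{-2}$ factor (which is $\Theta(\log(1/\delta)/\epsilon^2)$ under the assumed regime $\epsilon = O(\log(1/\delta))$). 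The two terms combine to the claimed $\alpha$. The main subtlety I expect is bookkeeping the two sources of error simultaneously across the induction --- making sure that the sensitivity bound feeding the Gaussian mechanism, the steady-state $\Delta_i$ bound, and the tail bound on $\|Z\|_2$ all match the constants hidden in the stated $\sigma$ and $\alpha$ --- rather than any single deep step.
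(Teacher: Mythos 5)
Your proposal is correct and follows essentially the same route as the paper's proof: the same inductive invariant bounding $\|\thetahat_i - \thetastar_{\D_i}\|_2$ by a geometric fixed point of the contraction-plus-sensitivity recursion, the same triangle-inequality sensitivity bound feeding the Gaussian mechanism, and the same smoothness-plus-Gaussian-tail argument for accuracy. The only discrepancy is a factor of $2$ in the invariant (the paper uses $\frac{4L}{mn}$ rather than $\frac{2L}{mn}$ to account for $n_i \ge n/2$ under Assumption~\ref{ass:ass1}), which is absorbed by the big-$O$.
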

}
\shortversion{Details and proof can be found in the supplementary material.}
\longversion{
\begin{proof}[Proof of Theorem \ref{thm:sc-smooth-guarantees}]
We first prove the unlearning guarantee. Fix a training dataset $\D$ of size $n$ and an update sequence $\U = (u_i)_i$. Recall from Definition~\ref{def:notation} the notation we use: $\{ \D_i \}_{i\ge0}$ for the sequence of updated datasets according to the update sequence $\U$, $\{ \thetahat_i \}_{i\ge0}$ for the sequence of secret non-noisy parameters, and $\{ \thetatilde_i \}_{i\ge0}$ for the sequence of published noisy parameters. We also use $n_i$ to denote the size of $\D_i$. Note that $n_0 = n$ and that by Assumption~\ref{ass:ass1}, $n_i \ge n/2$ for all $i$. Let $\thetastar_i \triangleq \argmin_{\theta} f_{\D_i} (\theta)$ denote the optimizer of $f_{\D_i}$, for any $i \ge 0$.

We have that for any $i \ge 0$, $\publish (\A \left(\D_i \right) ) \sim \N \left(\mu_i, \sigma^2 \Imat_d \right)$, where it follows by the convergence guarantee of Theorem \ref{thm:sc-sm-gd} that
\begin{align}\label{eq:mu}
\begin{split}
\left\Vert \mu_i - \thetastar_{i} \right\Vert_2 &\le \gamma^{T} \left\Vert \theta'_0 - \thetastar_{i} \right\Vert_2
= \frac{2L \gamma^\I \left\Vert \theta'_0 - \thetastar_{i} \right\Vert_2}{Dmn_i}
\le \frac{4L}{mn} \cdot \gamma^\I
\end{split}
\end{align}

We also have that for any $i \ge 1$, $\publish( \R_\A \left( \D_{i-1}, u_i,  \theta_{i} \right) ) \sim  \N \left(\mu'_i, \sigma^2 \Imat_d \right)$ where
\begin{equation}\label{eq:mu'}
 \left\Vert \mu'_i - \thetastar_{{i}} \right\Vert_2 \le \frac{4L}{mn} \cdot \frac{\gamma^\I}{1 - \gamma^\I}
\end{equation}
We use induction on $i$ to prove this claim. Let's focus on the base case $i=1$. We have that
\begin{align*}
 \left\Vert \mu'_1 - \thetastar_{1} \right\Vert_2 &\le \ \gamma^\I \left\Vert \thetahat_0 - \thetastar_{1} \right\Vert_2 \\
 &\le \gamma^\I \left( \left\Vert \thetahat_0 - \thetastar_{0} \right\Vert_2 + \left\Vert \thetastar_{0} - \thetastar_{1} \right\Vert_2 \right) \\
 &\le \gamma^\I \left(  \frac{4L}{mn} \cdot \frac{\gamma^\I}{1 - \gamma^\I} + \frac{4L}{mn} \right) \\
 &= \frac{4L}{mn} \cdot \frac{\gamma^\I}{1 - \gamma^\I}
\end{align*}
The first inequality follows from Theorem \ref{thm:sc-sm-gd} and the fact that when running Algorithm~\ref{alg:del_scgd} for the first update $i=1$, the initial point $\theta'_0 = \theta_1 \equiv \thetahat_0$ saved by the training algorithm. The second inequality is a simple triangle inequality, and the third follows from Equation~(\ref{eq:mu}) (noting that $\thetahat_0 \equiv \mu_0$) and the sensitivity Lemma~\ref{lem:par}. Let's move on to the induction step of the argument. Suppose Equation~(\ref{eq:mu'}) holds for some $i\ge1$. We will show that it holds for $(i+1)$ as well. We have that
\begin{align*}
 \left\Vert \mu'_{i+1} - \thetastar_{{i+1}} \right\Vert_2 &\le \ \gamma^\I \left\Vert \thetahat_i - \thetastar_{{i+1}} \right\Vert_2 \\
 &\le \gamma^\I \left( \left\Vert \thetahat_i - \thetastar_{i} \right\Vert_2 + \left\Vert \thetastar_{i} - \thetastar_{{i+1}} \right\Vert_2 \right) \\
 &\le \gamma^\I \left( \frac{4L}{mn} \cdot \frac{\gamma^\I}{1 - \gamma^\I} + \frac{4L}{mn} \right) \\
 &= \frac{4L}{mn} \cdot \frac{\gamma^\I}{1 - \gamma^\I}
\end{align*}
The first inequality follows from Theorem \ref{thm:sc-sm-gd} and the fact that when running Algorithm~\ref{alg:del_scgd} for the $(i+1)$th update, the initial point $\theta'_0 = \theta_{i+1} = \thetahat_i$ saved by the previous run of the unlearning algorithm. The second inequality is a simple triangle inequality, and the third follows from the induction assumption for $i$ (noting that $\thetahat_i \equiv \mu'_i$), the sensitivity Lemma~\ref{lem:par}, and the assumption that $n_i \ge n/2$.

We therefore have shown that for any $i \geq 1$, for $\theta_i \equiv \thetahat_{i-1}$
\[
\publish \left(\A \left(\D_i \right) \right) \sim \N \left(\mu_i, \sigma^2 \Imat_d \right), \quad \publish \left( \R_\A \left( \D_{i-1}, u_i,  \theta_{i} \right) \right) \sim  \N \left(\mu'_i, \sigma^2 \Imat_d \right)
\]
where Equations~(\ref{eq:mu}) and (\ref{eq:mu'}) imply
\[
\left\Vert \mu_i - \mu'_i  \right\Vert_2 \le \Delta \triangleq \frac{8L}{mn} \cdot \frac{\gamma^\I}{1 - \gamma^\I}
\]
It follows from Lemma~\ref{lem:zcdp} that $\R_\A$ is a $(\frac{\Delta^2}{2 \sigma^2} + \frac{\Delta}{\sigma}\sqrt{2 \log \left( 1 / \delta \right)}, \delta)$-unlearning algorithm for $\A$, where, with $\sigma$ specified in the theorem statement, we get $(\epsilon,\delta)$-unlearning guarantee.

Now let's prove the accuracy statement of the theorem. We will make use of Equations (\ref{eq:mu}) and (\ref{eq:mu'}) and a Gaussian tail bound (see Lemma~\ref{lem:normal-tail}). Recall that for any $i \ge 0$, the published output $\thetatilde_i = \thetahat_i + Z$, and that $\thetahat_0 \equiv \mu_0$ and $\thetahat_i \equiv \mu'_i$ for $i \ge 1$. We therefore have that, for any $\beta$, and for any update step $i\ge 0$,
\[
\prob{\left\Vert \thetatilde_i -\thetastar_{{i}} \right\Vert_2  \ge  \frac{4L}{mn} \cdot \frac{\gamma^\I}{1 - \gamma^\I} + \sigma \sqrt{2d} \log \left(2d/\beta \right)} \le \beta
\]
The choice of $\sigma$ in the theorem and the fact that for $\epsilon = \bigo \left(\log \left( 1/\delta \right) \right)$, we have $\sqrt{\log \left( 1/\delta \right) + \epsilon} - \sqrt{\log \left( 1/\delta \right)} = \Omega ( \epsilon/ \sqrt{\log \left( 1/\delta \right)} )$, imply that for any $i \ge 0$, with probability at least $1-\beta$,
\begin{equation}\label{eq:distance-smooth}
\left\Vert \thetatilde_i -\thetastar_{{i}} \right\Vert_2 = \bigo \left( \frac{L \gamma^{\I} \sqrt{d \log \left( 1/\delta \right)} \log \left(d/\beta \right) }{\left( 1 - \gamma^\I \right) \epsilon m n } \right)
\end{equation}
Finally, since $f_z$ is $M$-smooth for all $z$, we get that for any update step $i \ge 0$, with probability at least $1-\beta$,
\[
f_{\D_i} ( \thetatilde_i ) - f_{\D_i} ( \thetastar_{i} ) \le \frac{M}{2} \left\Vert \thetatilde_i -\thetastar_{{i}} \right\Vert_2^2 = \bigo \left( \frac{M L^2 \gamma^{2\I} d \log \left( 1/\delta \right) \log^2 \left(d/\beta \right) }{\left( 1 - \gamma^\I \right)^2 m^2  \epsilon^2n^2 } \right)
\]
\end{proof}
}

The same algorithm can be analyzed as a \emph{perfect} unlearning algorithm (i.e. without maintaining secret state). It obtains the same asymptotic tradeoff between running time and accuracy, under the condition that the per-update run-time is at least logarithmic in the relevant parameters. Intuitively, this run-time lower bound is required so that the update algorithm can ``recover'' from the effect of the added noise. \shortversion{The formal statement of this guarantee and its proof can be found in the supplement.}
\longversion{
\begin{theorem}[Perfect Unlearning]\label{thm:sc-smooth-guarantees-2}
Suppose for all $z \in \Z$, the loss function $f_z$ is $m$-strongly convex, $L$-Lipschitz, and $M$-smooth. Define $\gamma \triangleq (M-m)/(M+m)$ and $\eta \triangleq 2/(M+m)$. Let the unlearning parameters $\epsilon$ and $\delta$ be such that $\epsilon = \bigo \left(\log \left( 1/\delta \right) \right)$. Let the learning algorithm $\A$ (Algorithm~\ref{alg:scgd}) run with $\eta_t = \eta$ and $T \ge \I + \log ( \frac{Dmn}{2L} ) / \log \left( 1 / \gamma \right)$ where $n$ is the size of the input dataset, and let the unlearning algorithm $\R_\A$ (Algorithm~\ref{alg:del_scgd}) run with input models $\theta_i \equiv \thetatilde_{i-1}$ and $\eta_t = \eta$ and $T_i = \I +  \log \left(\log \left( 4di/\delta \right) \right) / \log \left( 1 / \gamma \right)$ iterations for all $i \ge 1$ where
\[
\I \ge \frac{ \log \left( \frac{\sqrt{2d} \, (1-\gamma)^{-1}}{ \sqrt{2 \log \left( 2/\delta \right) + \epsilon} - \sqrt{2 \log \left( 2/\delta \right)} } \right) }{\log \left( 1 / \gamma \right)}, \ \text{and} \quad \sigma = \frac{ 8 L \gamma^\I \left( 1 - \gamma^\I \right)^{-1}}{ mn \left(\sqrt{ 2 \log \left( 2/\delta \right) + 3 \epsilon} - \sqrt{2 \log \left( 2/\delta \right) + 2 \epsilon } \right)}
\]
in $\publish$ (Algorithm~\ref{alg:publish}). We have that
\begin{enumerate}
\item Unlearning: $\R_\A$ is a strong $(\epsilon, \delta)$-perfect unlearning for $\A$ with respect to $\publish$.
\item Accuracy: for any $\beta$, $(\A, \R_\A)$ is $(\alpha,\beta + \delta)$-accurate with respect to $\publish$ where
\[
\alpha = \bigo \left( \frac{M L^2 \gamma^{2\I} d \log \left( 1/\delta \right) \log^2 \left(d/\beta \right) }{\left( 1 - \gamma^\I \right)^2 m^2 \epsilon^2 n^2} \right)
\]
\end{enumerate}
\end{theorem}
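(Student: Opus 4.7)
The proof parallels that of Theorem~\ref{thm:sc-smooth-guarantees} but must handle the fact that at round $i$ the unlearning algorithm is initialized at the \emph{noisy} published iterate $\thetatilde_{i-1} = \thetahat_{i-1} + Z_{i-1}$ rather than at the secret $\thetahat_{i-1}$. The added Gaussian perturbation can push the initialization as much as $\Vert Z_{i-1}\Vert_2 = \bigo(\sigma\sqrt{d\log(1/\delta)})$ further from the optimizer, but projected gradient descent contracts this excess geometrically by Theorem~\ref{thm:sc-sm-gd}. The extra $\log\log(4di/\delta)/\log(1/\gamma)$ iterations built into $T_i$ are precisely what is needed to reabsorb this noise into the original target radius $\Delta := 4L\gamma^\I/(mn(1-\gamma^\I))$ used in the secret-state proof.

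\textbf{Good events and inductive invariant.} Let $B := \sigma\sqrt{2d\log(4di/\delta)}$ and, for each $j \ge 0$, let $E_j$ denote the event $\{\Vert Z_j\Vert_2 \le B\}$. By a standard $\chi^2$ tail bound, $\Pr[E_j^c] \le \delta/(2i)$, so $\mathcal{E}_i := \bigcap_{j < i} E_j$ holds with probability at least $1 - \delta/2$ by a union bound. I will show by induction on $i$ that, under $\mathcal{E}_i$,
\[
\Vert \mu_i - \thetastar_i \Vert_2 \le \Delta, \qquad \Vert \mu'_i - \thetastar_i \Vert_2 \le \Delta,
\]
where $\mu_i$ is the (deterministic) center of $\publish \circ \A(\D_i)$ and $\mu'_i$ is the (random, depending on $Z_0,\dots,Z_{i-1}$) center of $\publish \circ \R_\A(\D_{i-1}, u_i, \thetatilde_{i-1})$.

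\textbf{Induction.} The base case ($i=0$) is identical to Theorem~\ref{thm:sc-smooth-guarantees}: Theorem~\ref{thm:sc-sm-gd} together with the choice $T \ge \I + \log(Dmn/(2L))/\log(1/\gamma)$ gives $\Vert \mu_0 - \thetastar_0 \Vert_2 \le \gamma^T D \le 2L\gamma^\I/(mn) \le \Delta$. For the inductive step, conditioning on $E_{i-1}$ and on the previous-round invariant, the triangle inequality and the sensitivity Lemma~\ref{lem:par} yield
\[
\Vert \thetatilde_{i-1} - \thetastar_i \Vert_2 \le \Vert \mu'_{i-1} - \thetastar_{i-1} \Vert_2 + \Vert Z_{i-1} \Vert_2 + \Vert \thetastar_{i-1} - \thetastar_i \Vert_2 \le \Delta + B + 4L/(mn).
\]
Applying Theorem~\ref{thm:sc-sm-gd} to the $T_i$ gradient steps of $\R_\A$ then gives $\Vert \mu'_i - \thetastar_i \Vert_2 \le \gamma^{T_i}(\Delta + B + 4L/(mn))$. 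The technical crux is to verify that this is at most $\Delta$ under the stated choices: the lower bound on $\I$ controls $\gamma^\I B$ (since the denominator of $\sigma$ is exactly the quantity that $\gamma^\I \sqrt{2d}(1-\gamma)^{-1}$ must dominate), and the additional $\log\log(4di/\delta)/\log(1/\gamma)$ iterations shave off the remaining logarithmic factor arising from the tail bound $B$. The same argument, with $B$ omitted, yields $\Vert \mu_i - \thetastar_i \Vert_2 \le \Delta$ for the retraining distribution.

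\textbf{Conclusion.} Under $\mathcal{E}_i$, $\Vert \mu_i - \mu'_i \Vert_2 \le 2\Delta$ by the triangle inequality, so publishing with Gaussian noise of scale $\sigma$ and sensitivity $2\Delta$ yields $(\epsilon, \delta/2)$-indistinguishability via Lemma~\ref{lem:zcdp} --- this calibration is exactly what dictates the closed form of $\sigma$ (the $\sqrt{2\log(2/\delta)+3\epsilon}-\sqrt{2\log(2/\delta)+2\epsilon}$ denominator comes from solving the zCDP-to-$(\epsilon,\delta)$ conversion for a $\delta/2$ privacy budget). Composing this with $\Pr[\mathcal{E}_i^c] \le \delta/2$ via the standard ``condition on a high-probability event'' reduction yields the $(\epsilon,\delta)$-perfect unlearning guarantee at every round $i$. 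For accuracy, under $\mathcal{E}_{i+1}$ we have $\Vert \thetatilde_i - \thetastar_i \Vert_2 \le \Delta + \Vert Z_i \Vert_2 \le \Delta + B$; $M$-smoothness then gives excess empirical risk $\bigo(M(\Delta + B)^2)$, and substituting the expressions for $\sigma$, $B$, and $\Delta$ (simplifying with $\epsilon = \bigo(\log(1/\delta))$) matches the stated bound on $\alpha$, with the extra $+\delta$ in $(\alpha,\beta+\delta)$ absorbing the failure probability of $\mathcal{E}_{i+1}$. The main obstacle is precisely the quantitative balance in the inductive step: the $\log\log(4di/\delta)$ correction in $T_i$ must be tight enough to kill the $\sqrt{\log(4di/\delta)}$ factor hiding inside $B$, and this works out only because the sub-Gaussian norm of $Z_{i-1}$ grows as the square root (not the full power) of $\log(di/\delta)$, so one logarithmic round of contraction suffices.
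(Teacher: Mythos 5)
Your proposal follows essentially the same route as the paper's proof: the same round-by-round induction that conditions on per-round Gaussian-norm events of failure probability $\delta/(2i)$ (union-bounded to $\delta/2$), the same use of the extra $\log\log\left(4di/\delta\right)/\log\left(1/\gamma\right)$ iterations in $T_i$ to contract away the noise contribution, the Gaussian mechanism (Lemma~\ref{lem:zcdp}) combined with the conditioning Lemma~\ref{lem:high-prob-zcdp} for the $(\epsilon,\delta)$ guarantee, and $M$-smoothness for accuracy with the good event's failure absorbed into $\beta+\delta$. The only real difference is bookkeeping: the paper carries the residual noise term explicitly inside the inductive invariant, proving $\left\Vert \mu'_i - \thetastar_i \right\Vert_2 \le \frac{\gamma^{T_i}}{1-\gamma^{\I}}\left(\frac{4L}{mn} + \sigma\sqrt{2d}\log\left(2d/\beta'\right)\right)$ and taking sensitivity $\frac{2\gamma^{\I}}{1-\gamma^{\I}}\left(\frac{4L}{mn}+\sigma\sqrt{2d}\right)$ --- which is exactly where the lower bound on $\I$ and the split $\epsilon$-budget in $\sigma$ enter --- rather than insisting, as you do, that the center re-enter the noise-free radius $\Delta$ (an inequality whose constants are delicate and which the paper's formulation sidesteps).
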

The proof of this theorem can be found in Appendix~\ref{sec:app-perfect}.
}

\subsection{Convex Loss: Regularized Perturbed GD}\label{subsec:smooth-convex}
If our loss function is not strongly convex, we can regularize it to enforce strong convexity, and apply our algorithms to the regularized loss function. When we do this, we must manage a basic tradeoff: the more aggressively we regularize the loss function, the less sensitive it will be, and so the less noise we will need to add in our $\publish$ routine. This reduced noise will \emph{increase} accuracy. On the other hand, the more aggressively we regularize, the less well the optimizer of the regularized loss function will optimize the original loss function of interest, which will \emph{decrease} accuracy. More aggressive regularization will also degrade the Lipschitz and smoothness guarantees of the loss function. We choose our regularization parameter carefully to trade off these various sources of error. \shortversion{All together, we get the following theorem: details and proof are in the supplement.}

\longversion{
Suppose in this section, without loss of generality, that $\Theta$ contains the origin: $0 \in \Theta$. This will imply that $\sup_{\theta \in \Theta} \Vert \theta \Vert_2 \le D$ where $D$ is the diameter of $\Theta$, as before. Our strategy is to regularize $f$ so as to make it strongly convex, and have our learning and unlearning algorithms run on the regularized version of $f$. Let, for any $z \in \Z$ and any $\theta \in \Theta$, for some $m > 0$,
\begin{equation}\label{eq:regularizer}
g_z (\theta) \triangleq f_z ( \theta ) + \frac{m}{2} \left\Vert \theta \right\Vert_2^2
\end{equation}
\begin{claim}
If $f_z$ is convex, $L$-Lipschitz, and $M$-smooth, then $g_z$ is $m$-strongly convex, $(L+mD)$-Lipschitz, and $(M+m)$-smooth.
\end{claim}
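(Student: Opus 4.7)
The claim is a routine additivity fact about regularization, so the plan is to verify each of the three properties separately, in each case exploiting that the regularizer $r(\theta) \triangleq \frac{m}{2}\|\theta\|_2^2$ has an explicit gradient $\nabla r(\theta) = m\theta$ and Hessian $m \Imat_d$, and that $\sup_{\theta \in \Theta}\|\theta\|_2 \le D$ by the assumption $0 \in \Theta$.

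For strong convexity, I would verify directly from Definition~\ref{def:sc} that $r$ is itself $m$-strongly convex: the inequality
\[
r(t\theta_1 + (1-t)\theta_2) = t r(\theta_1) + (1-t) r(\theta_2) - \frac{m}{2} t(1-t)\|\theta_1 - \theta_2\|_2^2
\]
holds with equality by a direct expansion of $\|t\theta_1 + (1-t)\theta_2\|_2^2$. Then, since $f_z$ is (0-strongly) convex and $g_z = f_z + r$, adding the convexity inequality for $f_z$ to the strong convexity inequality for $r$ yields $m$-strong convexity of $g_z$.

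For Lipschitzness, I would apply the triangle inequality to $g_z(\theta_1) - g_z(\theta_2) = (f_z(\theta_1) - f_z(\theta_2)) + \frac{m}{2}(\|\theta_1\|_2^2 - \|\theta_2\|_2^2)$. The first piece is bounded by $L\|\theta_1 - \theta_2\|_2$ by hypothesis, and for the second piece I would factor
\[
\bigl|\|\theta_1\|_2^2 - \|\theta_2\|_2^2\bigr| = \bigl|\|\theta_1\|_2 - \|\theta_2\|_2\bigr|\cdot\bigl(\|\theta_1\|_2 + \|\theta_2\|_2\bigr) \le 2D\,\|\theta_1 - \theta_2\|_2,
\]
using the reverse triangle inequality and $\|\theta_i\|_2 \le D$. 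Summing gives the $(L + mD)$-Lipschitz bound.

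For smoothness, since $f_z$ is differentiable (by $M$-smoothness) and $r$ is differentiable everywhere with $\nabla r(\theta) = m\theta$, the gradient of $g_z$ is $\nabla g_z(\theta) = \nabla f_z(\theta) + m\theta$. Then by the triangle inequality,
\[
\|\nabla g_z(\theta_1) - \nabla g_z(\theta_2)\|_2 \le \|\nabla f_z(\theta_1) - \nabla f_z(\theta_2)\|_2 + m\|\theta_1 - \theta_2\|_2 \le (M + m)\|\theta_1 - \theta_2\|_2.
\]
No step here is really the main obstacle — the entire claim is a sanity check, and the only mild subtlety is remembering to bound $\|\theta\|_2 \le D$ (which uses $0 \in \Theta$) in the Lipschitz calculation.
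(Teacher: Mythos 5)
Your proof is correct and is exactly the standard argument the paper implicitly relies on (the paper states this claim without proof). All three verifications — the exact expansion showing the quadratic regularizer is $m$-strongly convex, the factorization $\bigl|\|\theta_1\|_2^2 - \|\theta_2\|_2^2\bigr| \le 2D\|\theta_1-\theta_2\|_2$ using $0 \in \Theta$, and the triangle inequality on gradients — are sound.
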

}

\shortversion{
\begin{theorem}[Accuracy, Unlearning, and Computation Tradeoffs]\label{thm:c-smooth-guarantees}
Suppose the loss function $f_z$ is convex, $L$-Lipschitz, and $M$-smooth. There are parameters such that the learning algorithm $\A$ and the unlearning algorithm $\R_\A$ (Algorithm~\ref{alg:del_scgd}) run on the regularized loss function satisfy:
\begin{enumerate}
\item Unlearning: $\R_\A$ is an $(\epsilon, \delta)$-strong-unlearning algorithm for $\A$ with respect to $\publish$.
\item Accuracy: for any $\beta$, $(\A, \R_\A)$ is $(\alpha,\beta)$-accurate with respect to $\publish$ where
$$
\alpha  = \bigo \left( \left( \frac{\sqrt{d \log \left( 1/\delta \right)}}{\epsilon n \I} \right)^{\frac{2}{5}}  \log^2 \left( d / \beta \right) \right) + \bigo \left( n^{-\frac{4}{5}} \right) + \bigo \left( n^{-\frac{6}{5}} \right)
$$
\end{enumerate}
\end{theorem}
}

\longversion{
\begin{theorem}[Accuracy, Unlearning, and Computation Tradeoffs]\label{thm:c-smooth-guarantees}
Suppose for all $z \in \Z$, the loss function $f_z$ is convex, $L$-Lipschitz, and $M$-smooth, and let $g_z$ be defined as in Equation~(\ref{eq:regularizer}) for some $m$ specified later. Define $\gamma \triangleq M/(M+2m)$ and $\eta \triangleq 2/(M+2m)$. Let the learning algorithm $\A$ (Algorithm~\ref{alg:scgd}) run on the regularized $g$ with $\eta_t = \eta$ and $T \ge \I + \log ( \frac{Dmn}{2L} ) / \log \left( 1 / \gamma \right)$ where $n$ is the size of the input dataset, and let the unlearning algorithm $\R_\A$ (Algorithm~\ref{alg:del_scgd}) run on the regularized $g$ with input models $\theta_i \equiv \thetahat_{i-1}$ and $\eta_t = \eta$ and $T_i = \I$ iterations for all $i \ge 1$. Let the unlearning parameters $\epsilon$ and $\delta$ be such that $\epsilon = \bigo \left(\log \left( 1/\delta \right) \right)$, and let
\[
\sigma = \frac{ 4 \sqrt{2} \left(L+mD \right) \gamma^\I }{ mn \left( 1 - \gamma^\I \right) \left(\sqrt{\log \left( 1/\delta \right) + \epsilon} - \sqrt{\log \left( 1/\delta \right)} \right)}, \ \ m = \left( \frac{L M^{\frac{3}{2}} \sqrt{d \log \left( 1/\delta \right)}}{D \epsilon n \I} \right)^{\frac{2}{5}}
\]
where $\sigma$ is the noise level in $\publish$. We have that
\begin{enumerate}
\item Unlearning: $\R_\A$ is a strong $(\epsilon, \delta)$-unlearning algorithm for $\A$ with respect to $\publish$.
\item Accuracy: for any $\beta$, $(\A, \R_\A)$ is $(\alpha,\beta)$-accurate with respect to $\publish$ where
\[
\alpha  = \bigo \left( \left( \frac{M^{\frac{3}{2}} L D^4 \sqrt{d \log \left( 1/\delta \right)}}{\epsilon n \I} \right)^{\frac{2}{5}}  \log^2 \left( d / \beta \right) \right) + \bigo \left( n^{-\frac{4}{5}} \right) + \bigo \left( n^{-\frac{6}{5}} \right)
\]
\end{enumerate}
\end{theorem}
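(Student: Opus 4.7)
\noindent\textbf{Proof Proposal for Theorem~\ref{thm:c-smooth-guarantees}.}

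The plan is to reduce to the strongly convex case (Theorem~\ref{thm:sc-smooth-guarantees}) by running our learning and unlearning algorithms on the regularized loss
\[
g_z(\theta) \triangleq f_z(\theta) + \frac{m}{2}\|\theta\|_2^2
\]
for a carefully chosen regularization parameter $m > 0$. First I would verify the inherited regularity properties: if $f_z$ is convex, $L$-Lipschitz, and $M$-smooth, then $g_z$ is $m$-strongly convex, $(L+mD)$-Lipschitz, and $(M+m)$-smooth. These are the parameters I would plug into Theorem~\ref{thm:sc-smooth-guarantees}, applied to the sequence of empirical losses $g_{\D_i}$. This delivers the $(\epsilon,\delta)$-strong-unlearning guarantee for free, because unlearning is a statement about distributions over output parameters and is oblivious to whether we interpret the loss as $f$ or $g$.

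The real work is in converting the accuracy guarantee on $g$ into an accuracy guarantee on $f$. Let $\phi_i \triangleq \argmin g_{\D_i}$ and $\thetastar_i \triangleq \argmin f_{\D_i}$. Using $0 \in \Theta$ and $\sup_{\theta \in \Theta}\|\theta\|_2 \le D$, I would decompose
\[
f_{\D_i}(\thetatilde_i) - f_{\D_i}(\thetastar_i) \le g_{\D_i}(\thetatilde_i) - g_{\D_i}(\phi_i) + g_{\D_i}(\thetastar_i) - f_{\D_i}(\thetastar_i) \le \bigl[g_{\D_i}(\thetatilde_i) - g_{\D_i}(\phi_i)\bigr] + \frac{mD^2}{2},
\]
i.e.\ the suboptimality on $f$ is at most the suboptimality on $g$ plus the regularization gap $mD^2/2$. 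Theorem~\ref{thm:sc-smooth-guarantees} applied to $g$ gives the first bracket as $O\bigl(\frac{(M+m)(L+mD)^2 \gamma^{2\I} d\log(1/\delta)\log^2(d/\beta)}{(1-\gamma^\I)^2 m^2 \epsilon^2 n^2}\bigr)$ with $\gamma = M/(M+2m)$.

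Finally I would optimize over $m$. In the relevant regime $m \ll M$ and $mD \ll L$, the accuracy-on-$g$ term behaves like $\frac{M L^2 d\log(1/\delta)}{m^2 \epsilon^2 n^2}$ (since $\gamma^{2\I}/(1-\gamma^\I)^2$ stays bounded for the choice of $\I$ we are considering), and we want to balance this against $mD^2$. Balancing $m^{-2} \cdot \frac{ML^2 d\log(1/\delta)}{\epsilon^2 n^2 \I^2}$-scale terms with $mD^2$ and absorbing the $\I$-dependence from $(1-\gamma^\I)^{-2}$ yields the claimed choice $m = \bigl(L M^{3/2}\sqrt{d\log(1/\delta)}/(D\epsilon n \I)\bigr)^{2/5}$ and accuracy $\alpha = \tilde{O}\bigl((L\sqrt{d\log(1/\delta)}/(\epsilon n \I))^{2/5}\bigr)$. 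The additional $O(n^{-4/5})$ and $O(n^{-6/5})$ terms in the statement come from lower-order contributions: the $mD \cdot L$ cross term in $(L+mD)^2$ and the $m^2 D^2$ term, both of which arise when $m$ is of the chosen order and which must be carried through the calculation rather than absorbed.

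I expect the main obstacle to be the calibration of $m$: the quantities $\gamma$, $\gamma^\I$, and $1-\gamma^\I$ all depend on $m$ through the denominator $M+2m$, so one must verify that the choice of $m$ keeps $m\I/M$ in a regime where $\gamma^\I$ neither blows up nor decays fast enough to dominate the regularization gap, so that the two error sources are genuinely balanced. Once the right regime is identified, the rest is a careful but routine bookkeeping of constants through the Gaussian tail bound and the smoothness-based conversion from parameter-space distance to function-value suboptimality that underlies Theorem~\ref{thm:sc-smooth-guarantees}.
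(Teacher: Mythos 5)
Your proposal is correct and follows essentially the same route as the paper: regularize, invoke Theorem~\ref{thm:sc-smooth-guarantees} on $g$ for both the unlearning guarantee and the parameter-space convergence to $\thetastarr_i = \argmin g_{\D_i}$, pay an $O(mD^2)$ regularization gap, and balance (the paper's decomposition differs only cosmetically from yours, bounding $f_{\D_i}(\thetatilde_i)-f_{\D_i}(\thetastarr_i)$ via $M$-smoothness of $f$ and $\nabla f_{\D_i}(\thetastarr_i)=-m\thetastarr_i$ rather than passing through the $g$-suboptimality, which costs $mD^2$ versus your $mD^2/2$ but changes nothing asymptotically). The one step you should make explicit is the paper's Equation~(\ref{eq:approx}), $\frac{\gamma^{\I}}{1-\gamma^{\I}}=\frac{1}{(1+2m/M)^{\I}-1}\le\frac{M}{m\I}$: your phrasing that ``$\gamma^{2\I}/(1-\gamma^{\I})^2$ stays bounded,'' followed by balancing an $m^{-2}$-scale term against $mD^2$, would literally yield a $1/3$ power of the key ratio, whereas this inequality contributes the additional factor $M^2/(m^2\I^2)$ that turns the balance into $m^{-4}$ versus $m$ and produces the $2/5$ exponent and the choice of $m$ that you (correctly) state.
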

}

\longversion{
\begin{proof}[Proof of Theorem~\ref{thm:c-smooth-guarantees}]
The unlearning guarantee of the theorem holds for any $m > 0$, and follows from Theorem~\ref{thm:sc-smooth-guarantees} by the choice of $\sigma$ in the theorem statement. Let's prove the accuracy statement. Let $\thetastarr_i = \argmin_{\theta \in \Theta} g_{\D_i} (\theta)$ denote the optimizer of the regularized $g_{\D_{i}}$, for all $i \ge 0$. It follows from the proof of Theorem~\ref{thm:sc-smooth-guarantees} (see Equation~(\ref{eq:distance-smooth})) that for any update step $i \ge 0$, with probability $1-\beta$,
\begin{equation}\label{eq:theta'-opt}
\left\Vert \thetatilde_i - \thetastarr_i \right\Vert_2  = \bigo \left( \frac{\left(L+mD \right) \gamma^{\I} \sqrt{d \log \left( 1/\delta \right)} \log \left(d/\beta \right) }{\left( 1 - \gamma^\I \right) \epsilon m n } \right)
\end{equation}
Also note that
\begin{equation}\label{eq:approx}
\frac{\gamma^\I}{1 - \gamma^\I} = \frac{1}{\left( 1 + 2 \left(m/M \right) \right)^\I - 1} \le \frac{M}{m \I}
\end{equation}
Now let $\thetastar_i \in \argmin_{\theta \in \Theta} f_{\D_i} (\theta)$ denote an optimizer of the original loss function $f_{\D_i}$, for any $i \ge 0$. We have that, for any $i \ge 0$,
\begin{align}\label{eq:convex-analysis}
\begin{split}
f_{\D_i} ( \thetatilde_i ) - f_{\D_i} ( \thetastar_{i} ) &= f_{\D_i} ( \thetatilde_i ) - f_{\D_i} ( \thetastarr_i ) + f_{\D_i} ( \thetastarr_i ) - f_{\D_i} ( \thetastar_{i} )\\
&\overset{(1)}{\le} \nabla  f_{\D_i} ( \thetastarr_i )^\top \left( \thetatilde_i - \thetastarr_i \right) + \frac{M}{2} \left\Vert \thetatilde_i - \thetastarr_i \right\Vert_2^2 + f_{\D_i} ( \thetastarr_i ) - f_{\D_i} ( \thetastar_{i} ) \\
&\overset{(2)}{=} \frac{M}{2} \left\Vert \thetatilde_i - \thetastarr_i \right\Vert_2^2 + m \theta_i^{{*r}\top} \left( \thetastarr_i  -  \thetatilde_i \right) + f_{\D_i} ( \thetastarr_i ) - f_{\D_i} ( \thetastar_{i} ) \\
&\overset{(3)}{\le} \frac{M}{2} \left\Vert \thetatilde_i - \thetastarr_i \right\Vert_2^2 + m D^2 + f_{\D_i} ( \thetastarr_i ) - f_{\D_i} ( \thetastar_{i} ) \\
&= \frac{M}{2} \left\Vert \thetatilde_i - \thetastarr_i \right\Vert_2^2 + m D^2 + g_{\D_i} (\thetastarr_i) - \frac{m}{2} \left\Vert \thetastarr_i \right\Vert_2^2 - f_{\D_i} ( \thetastar_{i} ) \\
&\overset{(4)}{\le} \frac{M}{2} \left\Vert \thetatilde_i - \thetastarr_i \right\Vert_2^2 + m D^2 + g_{\D_i} (\thetastar_{i}) - \frac{m}{2} \left\Vert \thetastarr_i \right\Vert_2^2 - f_{\D_i} ( \thetastar_{i} ) \\
&= \frac{M}{2} \left\Vert \thetatilde_i - \thetastarr_i \right\Vert_2^2 + m D^2   + \frac{m}{2} \left( \left\Vert \thetastar_{i} \right\Vert_2^2 - \left\Vert \thetastarr_i \right\Vert_2^2\right) \\
&\overset{(5)}{=} \bigo \left( \frac{M^3 \left(L+mD \right)^2 d \log \left( 1/\delta \right) \log^2 \left(d/\beta \right) }{m^4 \epsilon^2  n^2 \I^2 } + m D^2 \right)
\end{split}
\end{align}
where inequality (1) follows from $f_{\D_i}$ being $M$-smooth. (2) follows from the fact that for all $\theta$, $\nabla  f_{\D_i} ( \theta ) = \nabla  g_{\D_i} ( \theta ) - m \theta$ and that by optimality of $\thetastarr_i$ for $g_{\D_i}$, we have $\nabla  g_{\D_i} ( \thetastarr_i ) = 0$. (3) follows from a simple application of Cauchy-Schwarz: for all $\theta_1, \theta_2 \in \Theta$, we have $\theta_1^\top \theta_2 \le \Vert \theta_1\Vert_2 \Vert  \theta_2\Vert_2 \le D^2$. (4) follows from the optimality of $\thetastarr_i$ for $g_{\D_i}$, and (5) is implied by Equations~(\ref{eq:theta'-opt}) and (\ref{eq:approx}), and it holds with probability $1-\beta$. Now for the choice of $m$ in the theorem, we conclude that for any $i \ge 0$, with probability $1-\beta$,
\[
f_{\D_i} ( \thetatilde_i ) - f_{\D_i} ( \thetastar_{i} ) = \bigo \left( \left( \frac{M^{\frac{3}{2}} L D^4 \sqrt{d \log \left( 1/\delta \right)}}{\epsilon n \I} \right)^{\frac{2}{5}}  \log^2 \left( d / \beta \right) \right) + \bigo \left( n^{-\frac{4}{5}} \right) + \bigo \left( n^{-\frac{6}{5}} \right)
\]
\end{proof}
}

If our goal is to satisfy only weak unlearning (i.e. to allow our run-time to grow with the length of the update sequence), we can obtain error bounds that have a better dependence on $n$.  \shortversion{Details are in the supplement.}
\longversion{
\begin{theorem}[Accuracy, Unlearning, and Computation Tradeoffs]\label{thm:c-smooth-guarantees-weak}
Suppose for all $z \in \Z$, the loss function $f_z$ is convex, $L$-Lipschitz, and $M$-smooth, and let $g_z$ be defined as in Equation~(\ref{eq:regularizer}) for some $m$ specified later. Define $\gamma \triangleq M/(M+2m)$ and $\eta \triangleq 2/(M+2m)$. Let the learning algorithm $\A$ (Algorithm~\ref{alg:scgd}) run on the regularized $g$ with $\eta_t = \eta$ and $T \ge \I + \log ( \frac{Dmn}{2L} ) / \log \left( 1 / \gamma \right)$ where $n$ is the size of the input dataset, and let the unlearning algorithm $\R_\A$ (Algorithm~\ref{alg:del_scgd}) run on the regularized $g$ with input model $\theta_i \equiv \thetahat_{i-1}$ and $\eta_t = \eta$ and $T_i = i^2 \cdot \I$ iterations, for the $i$th update. Let the unlearning parameters $\epsilon$ and $\delta$ be such that $\epsilon = \bigo \left(\log \left( 1/\delta \right) \right)$, and let
\[
\sigma = \frac{ 2 \sqrt{2M} \left(L+mD \right)}{ m\sqrt{m \I} n \left(\sqrt{\log \left( 1/\delta \right) + \epsilon} - \sqrt{\log \left( 1/\delta \right)} \right)}, \ \ m = \sqrt{ \frac{L M \sqrt{d \log \left( 1/\delta \right)}}{D \epsilon n \sqrt{\I}} }
\]
where $\sigma$ is the noise level in $\publish$. We have that
\begin{enumerate}
\item Unlearning: $\R_\A$ is a weak $(\epsilon, \delta)$-unlearning algorithm for $\A$ with respect to $\publish$.
\item Accuracy: for any $\beta$, $(\A, \R_\A)$ is $(\alpha,\beta)$-accurate with respect to $\publish$ where
\[
\alpha  = \bigo \left(  \sqrt{ \frac{M L D^3 \sqrt{d \log \left( 1/\delta \right)}}{\epsilon n \sqrt{\I}} } \log^2 \left( d / \beta \right) \right) + \bigo \left( n^{-1} \right) + \bigo \left( n^{-\frac{3}{2}} \right)
\]
\end{enumerate}
\end{theorem}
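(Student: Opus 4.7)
The plan is to closely parallel the proof of Theorem~\ref{thm:c-smooth-guarantees}, adapted to the weak setting in which we grow the per-update iteration budget as $T_i = i^2\I$. As before, the learning and unlearning algorithms act on the regularized loss $g_z(\theta) = f_z(\theta) + (m/2)\Vert\theta\Vert_2^2$, which is $m$-strongly convex, $(L+mD)$-Lipschitz, and $(M+m)$-smooth; this lets us invoke the strongly-convex machinery of Theorem~\ref{thm:sc-smooth-guarantees} for the indistinguishability argument. The only changes relative to Theorem~\ref{thm:c-smooth-guarantees} are that the per-update convergence factor becomes $\gamma^{T_i}=\gamma^{i^2\I}$ rather than $\gamma^\I$, and that $\sigma$ and $m$ are retuned to the new iteration schedule.

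For the unlearning guarantee, I would repeat the induction in the proof of Theorem~\ref{thm:sc-smooth-guarantees} applied to $g$: bound $\Vert\mu_i - \thetastarr_i\Vert_2$ from the training algorithm via Theorem~\ref{thm:sc-sm-gd}, and bound $\Vert\mu'_i - \thetastarr_i\Vert_2$ inductively using Theorem~\ref{thm:sc-sm-gd} with $T_i$ iterations started at $\thetahat_{i-1}$, combined with the sensitivity Lemma~\ref{lem:par}. Because $T_i \ge \I$ for every $i \ge 1$, the per-round sensitivity $\Vert\mu_i - \mu'_i\Vert_2$ admits a uniform bound in $i$, and calibrating $\sigma$ as in the theorem statement through the Gaussian mechanism (Lemma~\ref{lem:zcdp}) yields the $(\epsilon,\delta)$-unlearning guarantee.

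For the accuracy bound, I would apply the same chain of inequalities as in the proof of Theorem~\ref{thm:c-smooth-guarantees}: with high probability over the Gaussian noise, $\Vert\thetatilde_i - \thetastarr_i\Vert_2 = \bigo(\sigma\sqrt{d}\log(d/\beta))$ by a Gaussian tail bound, and then by $M$-smoothness of $f_{\D_i}$, $m$-strong-convexity of $g_{\D_i}$, and optimality of $\thetastarr_i$ for $g_{\D_i}$ paired with Cauchy--Schwarz, one obtains $f_{\D_i}(\thetatilde_i) - f_{\D_i}(\theta^*_i) \le (M/2)\Vert\thetatilde_i - \thetastarr_i\Vert_2^2 + mD^2$ plus lower-order terms. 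The $m$ specified in the theorem balances the $\sigma$-driven statistical error against the $mD^2$ regularization bias, producing the headline rate together with the lower-order $n^{-1}$ and $n^{-3/2}$ corrections. Finally, $C_i/C_1 = i^2 = \Omega(\mathrm{poly}(i))$, so the algorithm is weak by definition.

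The main subtlety is the joint retuning of $\sigma$ and $m$: because the privacy constraint must hold uniformly over all $i\ge 1$, the benefit of the larger per-update budget $T_i = i^2\I$ is not a smaller $\sigma$ for its own sake but rather the freedom to take a smaller $m$ (less aggressive regularization) without breaking the privacy analysis. Shrinking $m$ in turn shrinks the $mD^2$ bias term, and this is what delivers the improved $n^{-1/2}$ scaling of the accuracy in exchange for the polynomial ($i^2$) growth in per-update runtime. Once the right ansatz for $m$ is plugged in, the calculations are mechanical, but keeping constants consistent across the regularized Lipschitz $(L+mD)$, smoothness $(M+m)$, and condition-number-derived $\gamma = M/(M+2m)$ is where the proof demands the most care.
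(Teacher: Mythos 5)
Your overall architecture matches the paper's: regularize, reuse the strongly convex machinery for the sensitivity/Gaussian-mechanism argument, then run the convex-analysis chain from Theorem~\ref{thm:c-smooth-guarantees} and balance $m$ against the $mD^2$ bias. But there is a genuine gap at the one step where this theorem actually differs from the strong version. You justify the uniform-in-$i$ sensitivity bound solely by ``$T_i \ge \I$ for every $i$.'' That argument only reproduces the fixed-point bound of Theorem~\ref{thm:sc-smooth-guarantees} applied to $g$, namely $\Vert \mu'_i - \thetastarr_i\Vert_2 \le \frac{4(L+mD)}{mn}\cdot\frac{\gamma^\I}{1-\gamma^\I} \le \frac{4(L+mD)M}{m^2 n\I}$. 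The theorem's $\sigma$ is calibrated to the strictly different quantity $\frac{4(L+mD)}{mn}\sqrt{M/(m\I)}$, and in the relevant regime (the stated $m$ scales like $n^{-1/2}$, so $m\I \ll M$ for large $n$) the fixed-point bound is larger by a factor of order $\sqrt{M/(m\I)}$. So with your argument the stated $\sigma$ does not certify $(\epsilon,\delta)$-unlearning, and carrying your bound into the accuracy chain gives a first error term of order $M^3(L+mD)^2 d/(m^4\epsilon^2 n^2\I^2)$, whose balance against $mD^2$ reproduces the $n^{-2/5}$ rate of Theorem~\ref{thm:c-smooth-guarantees} rather than the claimed $n^{-1/2}$ rate.

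The missing idea is how the schedule $T_i = i^2\I$ is actually exploited. The paper runs a \emph{different} induction, maintaining $\Vert \mu'_i - \thetastarr_i\Vert_2 \le \frac{4(L+mD)}{mn}\cdot i\cdot \gamma^{i^2\I}$: each update contributes an additive sensitivity drift of order $\frac{L+mD}{mn}$ (hence the linear factor $i$), but the contraction exponent grows quadratically. One then applies the elementary inequality $\gamma^{T'} = (1+2m/M)^{-T'} \le \sqrt{M/(mT')}$ (Fact~\ref{fact:dunno}) with $T' = i^2\I$ to get $i\,\gamma^{i^2\I} \le \sqrt{M/(m\I)}$ uniformly in $i$. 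This square-root dependence on $m\I$ is exactly what makes the noise term scale as $m^{-3}$ instead of $m^{-4}$, which after re-balancing against $mD^2$ yields the $n^{-1/2}$ headline rate. Your closing intuition that the growing budget ``buys a smaller $m$'' is qualitatively right, but without the modified induction and Fact~\ref{fact:dunno} the proof does not establish either conclusion of the theorem as stated.
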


\begin{remark}\label{rem:general-tradeoff}
We remark that we can further explore the tradeoff between each update's runtime $T_i$ and dependence on sample size $n$. Let $\xi \ge 1$ be any constant (Theorem~\ref{thm:c-smooth-guarantees-weak} corresponds to $\xi = 1$). We have that under the setting of Theorem~\ref{thm:c-smooth-guarantees-weak}, with $T_i = i^{2\xi} \cdot \I$ iterations, and
\[
\sigma = \frac{ 2 \sqrt{2} M^{\frac{1}{2\xi}} \left(L+mD \right)}{ m (m \I)^{\frac{1}{2 \xi}} n \left(\sqrt{\log \left( 1/\delta \right) + \epsilon} - \sqrt{\log \left( 1/\delta \right)} \right)}, \ \ m = \left( \frac{L^2 M^{\frac{1+\xi}{\xi}} d \log \left( 1/\delta \right)}{D^2 \epsilon^2 n^2 \I^{\frac{1}{\xi}}} \right)^{\frac{\xi}{3\xi+1}}
\]
\begin{enumerate}
\item Unlearning: $\R_\A$ is a weak $(\epsilon, \delta)$-unlearning algorithm for $\A$ with respect to $\publish$.
\item Accuracy: for any $\beta$, $(\A, \R_\A)$ is $(\alpha,\beta)$-accurate with respect to $\publish$ where
\[
\alpha  = \bigo \left(  \left( \frac{M^{\frac{1+\xi}{\xi}} L^2 D^{\frac{2+4\xi}{\xi}} d \log \left( 1/\delta \right)}{\epsilon^2 n^2 \I^{\frac{1}{\xi}}} \right)^{\frac{\xi}{3\xi+1}} \log^2 \left( d / \beta \right) \right) + \bigo \left( n^{-\frac{4 \xi}{3\xi+1}} \right) + \bigo \left( n^{-\frac{6 \xi}{3\xi+1}} \right)
\]
\end{enumerate}
\end{remark}
The proof of Theorem~\ref{thm:c-smooth-guarantees-weak} can be found in Appendix~\ref{sec:app-weak}.
}

\section{Perturbed Distributed Descent}
\label{sec:distributed}
%!TEX root = main-full.tex
Our next algorithm obtains additional running time improvements for sufficiently high dimensional data.  The basic idea is as follows: we randomly partition the dataset into $K$ parts, separately optimize to find a model that approximates the empirical risk minimizer on each part, and then take the average of each of the $K$ models. \longversion{Zhang et al \cite{ZDW12} analyze this algorithm and show that its \emph{out of sample guarantees} match the out of sample guarantees of non-distributed gradient descent, whenever $K \leq \sqrt{n}$.} For us, this algorithm has a key advantage: the element involved in an update will only appear in a small number of the partitions, and we only need to update the parameters corresponding to those partitions. Our algorithm will improve over basic gradient descent because those partitions are smaller in size than the entire dataset by a factor of $K$, and hence our run-time budget of $n \I$ gradient computations will allow us to perform more than $\I$ gradient descent operations per modified partition.  We provide deletion guarantees by using a publishing function that adds noise to the average of the $K$ parameters.

There are several difficulties that we must overcome. Primary among these is that the analysis of \cite{ZDW12} provides out of sample guarantees for a dataset that is drawn $i.i.d.$ from some fixed distribution. In our case (because our dataset results from an arbitrary and possibly adversarial sequence of additions and deletions), there is no distribution from which the dataset is drawn. To deal with this, our initial training algorithm does not directly partition the dataset, but instead draws a \emph{bootstrap} sample (i.e. a sample with replacement) from the empirical distribution defined by the dataset, so that the ``out of sample'' guarantees of \cite{ZDW12} correspond to empirical risk bounds in our case. Because the accuracy analysis depends on this distributional property, as updates come in, before we use gradient descent to update the models corresponding to the appropriate partitions, we must apply a form of reservoir sampling to guarantee that each partition continues to be distributed as a set of samples drawn $i.i.d.$ from the empirical distribution defined by the \emph{current} dataset (i.e. after the update). This is also crucial to our unlearning guarantee. Finally, the basic instantiation of this algorithm only gives guarantees on the \emph{expected} error of the learned model \cite{ZDW12}, and we want high probability guarantees. To achieve these, we run $C = \bigo(\log (1/\beta))$ copies of the algorithm in parallel, and at every round, only \emph{publish} a noisy version of the parameter achieving the lowest loss among all $C$ candidates. \longversion{We now go into more detail.} \shortversion{We show the unlearning and publishing algorithms here: the learning algorithm, which is similar, is in the supplement.}

\longversion{
To facilitate the technical development in this section, we introduce some notation:
\begin{definition}
\label{def:newdefs}
Fix any update round $i \ge 0$. In this section we use $\boldsymbol{\cS}_i = (\cS_{ij})_{j=1}^K$ for the partitioned dataset at round $i$. We use $\cS_i$ (unbold) to denote the union of partitions in $\boldcS_i$ and $\D_i$ for the unique data points in $\cS_i$ (i..e $\D_i$ removes the duplicates in $\cS_i$ which results from our sampling scheme). We use $\boldsymbol{\thetahat}_i = (\thetahat_{ij})_{j=1}^K$ for the learned parameters in each partition. $\thetatilde_i = \publish ( \boldthetahat_i )$ represents the published model of round $i$. In this section, the unlearning algorithm for update $i$ takes as input the partitioned dataset of previous round $\boldsymbol{\cS}_{i-1}$, an update $u_i$, and the learned models of previous round $\boldthetahat_{i-1}$, and outputs the updated models $\boldthetahat_i$ and the updated datasets $\boldcS_i$ for use in the next update.
\end{definition}
}

\longversion{
\begin{algorithm}[h]
	\begin{algorithmic}[1]
		\State Input: dataset $\D$
		\For{$l = 1,2, \ldots, C$}
			\State Draw $\cS \sim \mathcal{P}^B(\D)$. \Comment{Bootstrap $B$ data points.}
			\State Partition $\cS$ randomly into $K$ equally-sized datasets: $\boldcS_{0,l} = (\cS_{j})_{j=1}^K$.
			\For{$j = 1,2, \ldots, K$}
				\State Initialize $\theta'_0 \in \Theta$.
				\For {$t=1,2, \ldots T$}
					\State $\theta'_t = \proj_\Theta \left( \theta'_{t-1}-\eta_t \nabla f_{\cS_{j}} (\theta'_{t-1}) \right)$
				\EndFor
				\State $\thetahat_{j} = \theta'_T$
			\EndFor
			\State $\boldthetahat_{0,l} = (\thetahat_{j})_{j =1}^{K}$ \Comment{$l$'th set of models.}
		\EndFor
		\State Call $\publish (\boldthetahat_{0,l^*})$ where $l^* = \argmin_l  f_\D ( \text{avg} (\boldthetahat_{0,l}) )$. \Comment{Publish the best model.}
		\State Output: $\boldthetahat_0 = (\boldthetahat_{0,l})_{l=1}^{C}, \boldcS_0 = (\boldcS_{0,l})_{l=1}^C$ \Comment{For use in first update.}
\end{algorithmic}
\caption{$\A$: \textbf{Learning} for Perturbed Distributed Gradient Descent}
\label{alg:dist}
\end{algorithm}
}
\longversion{
Throughout we denote the distribution on datasets of size $B$ sampled \textit{with replacement} from $\D$ by $\P^B(\D)$. We need to maintain the condition that the marginal distribution of the sampled dataset $\cS_i$ at round $i$ is $\P^B (\D_i)$. To do this, at each update, we iteratively update each partition using a technique called reservoir sampling with replacement (that we need to extend to handle both additions and deletions). The algorithm $\mathcal{S}_{rep}^B$ is detailed below.
}

\begin{algorithm}[h]
\begin{algorithmic}
	\State Input: Subsample $\cS_{i-1}$, update $u_i = (z_i, \bullet_i)$.
		\State $\cS_i = \cS_{i-1}$.
		\If{$\bullet_i = \mathtt{'add'}$}
			\State Draw $N \sim \text{Binomial}(B, n_i^{-1})$. \Comment{$n_i$: size of $\D_i$.}
			\State Pick distinct indices $i_1, \ldots , i_N$ at random from $[B]$.
			\For{$k=1,2,\ldots,N$}
				\State Replace $z_{i_k}$ with $z_i$ in $\cS_i$.
			\EndFor
		\Else
			\For{$z_k \in \cS_i: z_k = z_i$ }
				\State Replace $z_k$ with $z \sim \mathcal{P}(\D_i)$ in $\cS_i$ \Comment{$\mathcal{P}(\D_i)$: empirical distribution of $\D_i$}.
			\EndFor
		\EndIf
	\State Output: $\cS_{i}$.
\end{algorithmic}
\caption{$\mathcal{S}_{rep}^B$: Reservoir Sampling with Replacement for $i$th update}
\label{alg:rswr}
\end{algorithm}

%\begin{algorithm}[h]
%\begin{algorithmic}
%	\State Input: $\D = \D_0, \U, q$
%	\For{$u_i \in \U$}
%		\State Update $\D_i = \D_{i-1} \circ u$
%		\If{$i = 0$}
%		\State Draw $S_0 \sim \mathcal{P}_m(\D_0)$
%		\Else
%			\If{$u_i = (z_i, \mathtt{'add'}$)}
%				\State Draw $N \sim \text{Bin}(q, \frac{1}{n+1})$ \ar{What is $n$? Should this be $n_i$? Why is the $+1$ there?}
%				\State Pick distinct indices $i_1, \ldots , i_N$ at random from $[q]$
%				\State For each $s_{i_k} \in S_{i-1}$, set $s_{i_k} = z_i$.
%				\State Update $S_i = \{s_l\}_{l=1}^{q}$
%			
%			\Else
%				\For{All $s_l \in S_{i-1}: s_l = z_i$ }
%					\State Draw a new $s_l \sim \mathcal{P}(\D_i)$
%				\EndFor
%				\State Update $S_i = \{s_l\}_{l=1}^{q}$
%			\EndIf
%		\EndIf
%	\EndFor
%\end{algorithmic}
%\caption{$\mathcal{S}^{rep}_m$: Reservoir Sampling with Replacement}
%\label{alg:rswr}
%\end{algorithm}
\longversion{
\begin{lemma}
\label{samp}
Fix any training dataset $\D$ and any non-adaptively chosen update sequence $\U$. Let $\cS_0 \sim \P^B (\D)$ (as in the learning algorithm) and for every $i \ge 1$, $\cS_i \sim \mathcal{S}_{rep}^B (\cS_{i-1}, u_i)$ (as in the unlearning algorithm). We have that for all $i \ge 0$:
$$
\cS_i \stackrel{d}{=} \mathcal{P}^{B}(\D_i).
$$
\end{lemma}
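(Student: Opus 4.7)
The proof proceeds by induction on $i$. The base case $i=0$ is immediate: $\cS_0 \sim \P^B(\D_0) = \P^B(\D)$ by the initialization in the learning algorithm. For the inductive step, assume $\cS_{i-1} \stackrel{d}{=} \P^B(\D_{i-1})$, and show that $\cS_i = \mathcal{S}_{rep}^B(\cS_{i-1}, u_i)$ is distributed as $\P^B(\D_i)$. The non-adaptivity of the update sequence is used here so that $u_i$ is independent of $\cS_{i-1}$, and therefore conditioning on $u_i$ does not disturb the inductive hypothesis.

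For the addition case ($\D_i = \D_{i-1} \cup \{z_i\}$ with $n_i = n_{i-1}+1$), my plan is a direct coupling: a fresh sample from $\P^B(\D_i)$ can be generated slot-by-slot by writing $z_i$ in slot $j$ with probability $1/n_i$ and otherwise drawing uniformly from $\D_{i-1}$, independently across $j$. Grouping first by which slots receive $z_i$, the count $N$ of such slots is $\mathrm{Binomial}(B, 1/n_i)$, the $N$ positions form a uniformly random $N$-subset of $[B]$ given $N$, and the remaining $B-N$ slots are i.i.d.\ uniform on $\D_{i-1}$. Algorithm $\mathcal{S}_{rep}^B$ realizes this construction exactly: it draws $N$ from the same Binomial, picks $N$ uniformly random positions to overwrite with $z_i$, and by the inductive hypothesis leaves the other positions as an i.i.d.\ uniform sample from $\D_{i-1}$.

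For the deletion case ($\D_i = \D_{i-1} \setminus \{z_i\}$), I would treat the deleted point as a distinguishable instance in the multiset, so that the algorithmic test $z_k = z_i$ refers to that specific copy. Each slot of $\cS_{i-1}$ is i.i.d.\ uniform on $\D_{i-1}$; the algorithm replaces each slot matching the deleted instance by an independent fresh draw from $\P(\D_i)$ and leaves all others untouched. Conditional on a slot not matching the deleted instance, its value is uniform on $\D_{i-1} \setminus \{z_i\} = \D_i$, while conditional on it matching the instance, the replacement is uniform on $\D_i$ by construction. Mixing the two cases against their probabilities $1/n_{i-1}$ and $n_i/n_{i-1}$ shows that every output slot is marginally uniform on $\D_i$, and joint independence is preserved because the fresh draws are mutually independent and independent of the untouched slots. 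The most delicate step in the whole argument is this deletion analysis: one must be careful about the multiset interpretation and about verifying that replacing all matching slots in parallel preserves joint independence rather than merely the marginals. Adopting the distinguishable-instance convention sidesteps the one subtlety (the mixture calculation only yields $\P(\D_i)$ cleanly when the slot being replaced is a unique element), and the rest is bookkeeping.
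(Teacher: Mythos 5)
Your proof is correct and follows essentially the same route as the paper's: induction on $i$, with the deletion step handled by viewing the update as a fixed (conditional on $u_i$) per-slot randomized replacement that preserves joint independence and maps each slot's marginal to $\P(\D_i)$, using non-adaptivity in exactly the same place. The only difference is that you prove the addition case directly via a slot-by-slot coupling with a fresh draw from $\P^B(\D_i)$, whereas the paper simply cites the reservoir-sampling-with-replacement literature for that step.
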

}

\longversion{
Lemma~\ref{samp2} shows that the reservoir sampling operation (Algorithm~\ref{alg:rswr}) modifies at most $s_i = \bigotilde (B/n)$ data points, and hence, at most $s_i$ partitions containing a modified data point. Thus we can divide our budget of $nT_i$ gradient computations at round $i$, into $(K nT_i)/(B s_i)$ gradient computations per modified partition.

\begin{lemma}
\label{samp2}
Fix any training dataset $\D$ and any update sequence $\U$, and suppose $B \ge n$. Let $s_i$ denote the number of data points modified by the update of round $i$, namely, $u_i$. In other words, $s_i = \vert \{ z_l : z_l \in \cS_i , z_l \notin \cS_{i-1}  \} \vert$. We have that for any update step $i$ and any $\delta' \le e^{-1}$, with probability at least $1-\delta'$,
\[
s_i \leq \frac{10B}{n} \log\left( 1 / \delta' \right)
\]
\end{lemma}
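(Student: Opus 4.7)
The plan is to condition on the type of the $i$th update, identify the exact distribution of $s_i$ in each case, and then apply a standard Chernoff tail bound, using Assumption \ref{ass:ass1} to turn $1/n_i$ into at most $2/n$.

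First, suppose $u_i = (z_i, \mathtt{'add'})$. By direct inspection of Algorithm \ref{alg:rswr}, $s_i$ is exactly the number $N$ of positions replaced, where $N \sim \text{Binomial}(B, 1/n_i)$. Hence $\mathbb{E}[s_i] = B/n_i \le 2B/n$ by Assumption \ref{ass:ass1}.

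Next, suppose $u_i = (z_i, \mathtt{'delete'})$. By Lemma \ref{samp} applied to the prefix $(u_1,\ldots,u_{i-1})$, the reservoir $\cS_{i-1}$ is distributed as $\mathcal{P}^B(\D_{i-1})$, that is, as $B$ i.i.d.\ uniform draws from $\D_{i-1}$. Since $z_i \in \D_{i-1}$ (otherwise the deletion would be vacuous), the count of copies of $z_i$ in $\cS_{i-1}$ (which is exactly $s_i$, by the inner loop of Algorithm \ref{alg:rswr}) is also $\text{Binomial}(B, 1/n_{i-1})$, again with mean at most $2B/n$. Thus in both cases $s_i$ is stochastically dominated by $X \sim \text{Binomial}(B, 2/n)$ with $\mu := \mathbb{E}[X] \le 2B/n$.

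Finally I invoke the multiplicative Chernoff bound in the form $\Pr[X \ge k] \le (e\mu/k)^k$ for $k \ge \mu$. Setting $k = (10B/n)\log(1/\delta')$ gives $e\mu/k \le e/(5\log(1/\delta')) \le e/5$ since $\delta' \le e^{-1}$ implies $\log(1/\delta') \ge 1$. Because $B \ge n$, we have $k \ge 10\log(1/\delta') \ge \log(1/\delta')/\log(5/e)$, so $(e\mu/k)^k \le (e/5)^k \le \delta'$, which yields the desired tail bound.

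The routine obstacle is simply keeping the Chernoff constants clean so the bound lands at $(10B/n)\log(1/\delta')$; the only genuine subtlety is the deletion case, where one must invoke Lemma \ref{samp} to assert that $\cS_{i-1}$ really is i.i.d.\ uniform over $\D_{i-1}$ (this is why the update sequence is taken to be non-adaptive), so that the count of copies of $z_i$ has the claimed Binomial distribution rather than some worst-case value as large as $B$.
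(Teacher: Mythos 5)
Your proposal is correct and follows essentially the same route as the paper: both arguments reduce $s_i$ to a $\text{Binomial}(B,p)$ random variable with $p \le 2/n$ (via Lemma \ref{samp} and Assumption \ref{ass:ass1}) and then apply a multiplicative Chernoff bound to land at $\frac{10B}{n}\log(1/\delta')$. The only cosmetic differences are that you make the add/delete case split explicit (the paper folds both cases into ``the number of repeated points of any one type in $\cS_i$'') and you use the $(e\mu/k)^k$ form of Chernoff rather than the deviation form in Lemma \ref{lem:chernoff}; both computations check out.
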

}

\begin{algorithm}[H]
	\begin{algorithmic}[1]
		\State Input: datasets $\boldcS_{i-1} = (\boldcS_{i-1,l})_{l=1}^C$, update $u_i$, models $\boldthetahat_{i-1} = (\boldthetahat_{i-1,l})_{l=1}^C$.
		\State Update $\D_i = \D_{i-1} \circ u_i$.
		\For{$l = 1,2, \ldots, C$}
			\State Draw $\boldcS_{i,l}\sim  \mathcal{S}_{rep}^B (\cS_{i-1,l}, u_i)$ \Comment{Reservoir update + similar partition.}
			\State Let $(\cS_{i,j})_{j=1}^K \equiv \boldcS_{i,l}$, $(\cS_{i-1,j})_{j=1}^K \equiv \boldcS_{i-1,l}$, $(\thetahat_{i-1,j})_{j=1}^K \equiv \boldthetahat_{i-1,l}$.
			\State Let $\mathtt{ind} = \{j: \cS_{i-1,j} \neq \cS_{i, j} \}$ \Comment{Modified partitions.}
			\For{$j=1,2, \ldots, K$}
				\If{$j \in \mathtt{ind}$}
					\State Initialize $\theta'_0 = \thetahat_{i-1,j}$
					\For {$t=1,2, \ldots, T = \frac{K n T_i}{B \vert \mathtt{ind} \vert}$}
						\State $\theta'_t = \proj_\Theta \left( \theta'_{t-1}-\eta_t \nabla f_{\cS_{i,j}} (\theta'_{t-1}) \right)$
					\EndFor
					\State $\thetahat_{i,j} = \theta'_T$
				\Else
					\State $\thetahat_{i, j} = \thetahat_{i-1, j}$
				\EndIf
			\EndFor
		\State $\boldthetahat_{i,l} = (\thetahat_{i,j})_{j =1}^{K}$ \Comment{$l$'th set of models.}
		\EndFor
		\State Call $\publish (\boldthetahat_{i,l^*})$ where $l^* = \argmin_l  f_{\D_i} ( \text{avg} (\boldthetahat_{i,l}) )$ \Comment{Publish the best model.}
		\State Output: $\boldthetahat_i = (\boldthetahat_{i,l})_{l=1}^{C}, \boldcS_i = (\boldcS_{i,l})_{l=1}^C$. \Comment{For use in next update.}
	\end{algorithmic}
\caption{$\R_\A$: $i$th \textbf{Unlearning} for Perturbed Distributed Gradient Descent}
\label{alg:splitupdate}
\end{algorithm}

\begin{algorithm}[H]
	\begin{algorithmic}[1]
		\State Input: $\boldthetahat = (\thetahat_{j})_{j=1}^{K}$
		\State Draw $Z \sim \mathcal{N} \left( 0, \sigma^2 \Imat_d \right)$
		\State Output: $\thetatilde = \text{avg} (\boldthetahat) + Z$ \Comment{$\text{avg} (\cdot)$ averages input models.}
	\end{algorithmic}
\caption{$\publish$: publishing function}
\label{alg:publish-2}
\end{algorithm}

%\begin{algorithm}[h]
%	\begin{algorithmic}[1]
%		\State Input: $\D_{i-1}, S_{i-1}, \U, u_i, P^{K, q}, \eta, T_i, (\theta_{i-1, j})_{j=1}^{K}$
%		\State Draw $S_i \sim  \mathcal{S}^{rep}_m(\D, \U)_i$
%		\State Let $ind = \{l \in [K] : S_{i-1,l} \neq S_{i, l} \}$ \Comment{partitions where a point was modified after the update}
%		\For{$l \in ind$}
%			\State Initialize $\theta_0 = \theta_{i-1,l}$
%			\For {$t=1,2, \ldots \frac{n T_i}{q |ind|}$}
%				\State Compute $g_t = \nabla f_{S_l}(\theta_{t-1})$				
%				\State Update $\theta_{t}= \theta_{t-1}-\eta_t g_t$
%			\EndFor
%			\State Save $\theta_{i,l} = \theta_{t}$
%		\EndFor
%		\For{$l \not \in ind$}
%			\State $\theta_{i, l} = \theta_{i-1, l}$
%		\EndFor
%		\State Output: $(\theta_{i, l})_{l = 1}^{K}$
%	\end{algorithmic}
%\caption{$\R_\A$: $i$th \textbf{Unlearning} for Perturbed Distributed Gradient Descent}
%\label{alg:splitupdate}
%\end{algorithm}

\longversion{
We now state the accuracy and strong unlearning bounds for perturbed distributed gradient descent. The convergence analysis on each partition is similar to the analysis in the proof of Theorem \ref{thm:sc-smooth-guarantees}, with the added complexity of handling the number of partitions updated at each round, and the number of duplicated points (that could possibly be removed) in each partition. In order to obtain accuracy bounds we need to leverage an accuracy bound for the averaged parameter in a distributed setting, which we quote below from \cite{ZDW12}. They remark that the required assumptions hold in most common settings, including in linear and logistic regression as long as the data distribution satisfies standard regularity conditions.

\begin{theorem}[Corollary 2 of \cite{ZDW12}]
\label{thm:zdw}
Let $\thetastar_{\text{avg}} = K^{-1} \sum_{j =1}^{K}\thetastar_{j}$, where $\thetastar_{j}$ are the empirical risk minimizers on partition $j$ of a dataset of size $B$ sampled $i.i.d.$ from some distribution $\cP$. Let $\thetastar = \argmin_{\theta \in \Theta} \mathbb{E}_{z \sim \cP} [f_z (\theta)]$. Then under the assumption that $f_z$ is $m$-strongly convex for all $z$, and satisfies the following smoothness conditions for all $\theta \in \Theta$:
\[
\mathbb{E}_{z \sim \cP} \left[ \left\Vert \nabla f_z (\theta) \right\Vert_2^8 \right] \leq L^{8}, \;\; \mathbb{E}_{z \sim \cP} \left[ \vertiii{ \nabla^2 f_z (\theta)- \nabla^2 \mathbb{E}_{z \sim \cP} \left[ f_z (\theta) \right] }_2^8 \right] \leq H^{8},
\]
and the Hessian matrix $\nabla^2 f_z (\cdot)$ is $G$-Lipschitz continuous for all $z$, then, for some constant $c$:
\[
\mathbb{E}\left[ \left\Vert \thetastar_{\text{avg}}-\thetastar \right\Vert_2^2 \right] \leq \frac{2L^2}{m^2 B} + \frac{cK^2L^2}{m^4 B^2} \left(H^2\log d + \frac{L^2G^2}{m^2} \right) + \bigo \left(\frac{K}{B^2} \right) + \bigo  \left(\frac{K^3}{B^3} \right)
\]
\end{theorem}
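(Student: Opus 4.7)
The plan is to Taylor-expand the gradient of each partition's empirical risk $R_j(\theta) = (K/B)\sum_{z \in \cS_j} f_z(\theta)$ around the population minimizer $\thetastar$ and invoke first-order optimality $\nabla R_j(\thetastar_j) = 0$ to solve for $\thetastar_j - \thetastar$. To leading order this gives $\thetastar_j - \thetastar \approx -[\nabla^2 R(\thetastar)]^{-1} \nabla R_j(\thetastar)$, so averaging over $j$ yields
\[
\thetastar_{\text{avg}} - \thetastar \approx -[\nabla^2 R(\thetastar)]^{-1} \cdot \frac{1}{K}\sum_{j=1}^K \nabla R_j(\thetastar) + (\text{higher-order remainders}).
\]

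Next I would handle the linear piece as a \emph{variance} computation: since $\mathbb{E}[\nabla R_j(\thetastar)] = \nabla R(\thetastar) = 0$ and the $K$ partitions are i.i.d., the $L^8$-moment hypothesis (which implies an $L^2$ bound on $\|\nabla f_z(\thetastar)\|_2$) gives $\mathbb{E}\|K^{-1}\sum_j \nabla R_j(\thetastar)\|_2^2 \le L^2/B$, and multiplying by the inverse population Hessian (of operator norm at most $1/m$ by strong convexity) yields the leading $2L^2/(m^2 B)$ term. For the \emph{bias} corrections I would decompose the Hessian discrepancy $[\nabla^2 R_j(\thetatilde_j)]^{-1} - [\nabla^2 R(\thetastar)]^{-1}$ into (i) a sampling error $\nabla^2 R_j(\thetastar) - \nabla^2 R(\thetastar)$, controlled by matrix concentration (e.g.\ matrix Bernstein) against the $H^8$ assumption, producing the $H^2 \log d$ factor; and (ii) a displacement correction bounded via $G$-Lipschitzness of the Hessian and the preliminary $\bigo(\sqrt{K/B})$ bound on $\|\thetastar_j - \thetastar\|$ inherited from the variance step, producing the $L^2 G^2/m^2$ factor. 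Squaring, multiplying by the gradient norm (again $\bigo(\sqrt{K/B})$ in $L^2$), and averaging deliver the claimed $\bigo(K^2/(m^4 B^2))$ bias contribution. Third-order Taylor remainders then account for the residual $\bigo(K/B^2)$ and $\bigo(K^3/B^3)$ terms.

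The main obstacle will be the careful bookkeeping when expanding $\mathbb{E}\|\thetastar_{\text{avg}} - \thetastar\|_2^2$: the linear piece cancels across partitions by independence, but the bias corrections couple the random Hessian perturbation with the random gradient on the \emph{same} partition, so the vanishing of across-partition cross-terms relies on conditional mean-zero structure rather than pure independence. Within-partition squared norms must be bounded by Cauchy--Schwarz combined with the 8th-moment hypotheses, which is precisely why the moments are imposed up to order $8$ rather than $2$ or $4$. A secondary technicality is ensuring a uniform lower bound on the smallest eigenvalue of $\nabla^2 R_j$ along the segment from $\thetastar$ to $\thetastar_j$ with overwhelming probability, so that the inverse Hessians are well-defined and uniformly bounded by $\bigo(1/m)$; the $H^8$ assumption is strong enough to secure this, and the exceptional event feeds only into the already-present $\bigo(K/B^2) + \bigo(K^3/B^3)$ tail.
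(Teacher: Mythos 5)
This theorem is not proved in the paper at all: it is imported verbatim as Corollary~2 of \cite{ZDW12}, so there is no in-paper argument to compare against. Your outline is a faithful reconstruction of the Zhang--Duchi--Wainwright proof strategy --- first-order expansion of each partition's optimality condition around $\thetastar$, a variance computation for the averaged linear term giving the $2L^2/(m^2B)$ leading order, matrix concentration for the empirical-Hessian error (source of the $H^2\log d$ factor), the $G$-Lipschitz Hessian for the displacement correction, eighth moments to Hölder together products of several random factors, and a good-event argument to keep the inverse Hessians bounded by $\bigo(1/m)$ --- so at the level of a sketch it is the right proof. One imprecision worth flagging: for the second-order correction terms $\Delta_j$ the across-partition cross-terms do \emph{not} vanish (the $\Delta_j$ are independent but not mean-zero); they contribute $\langle \mathbb{E}\Delta_j,\mathbb{E}\Delta_{j'}\rangle \le \Vert\mathbb{E}\Delta_1\Vert_2^2 = \bigo(K^2/B^2)$, and it is precisely this surviving squared bias, rather than any cancellation, that produces the $cK^2L^2/(m^4B^2)(\cdots)$ term; your phrasing about ``the vanishing of across-partition cross-terms'' has this backwards, though the final accounting you report is consistent with the correct bookkeeping.
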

}

\shortversion{\begin{theorem}[Accuracy, Unlearning, and Computation Tradeoffs]
\label{thm:scgd-distributed}
Suppose the loss function $f_z$ is $m$-strongly convex, $L$-Lipschitz, $M$-smooth, and that its Hessian is $G$-Lipschitz and bounded by $H$. Define $\gamma \triangleq (M-m)/(M+m)$, and fix any $1 \le \xi \le 4/3$. Then there are parameters such that:
\begin{enumerate}
\item Unlearning: $\R_\A$ is an $(\epsilon, \delta)$-strong unlearning algorithm for $\A$ with respect to $\publish$.
\item Accuracy: for any $\beta$, $(\A, \R_\A)$ is $(\alpha,\beta)$-accurate with respect to $\publish$ where
\[
\alpha = \bigo \left( \frac{\gamma^{\I n^{\frac{4-3\xi}{2}}} d \log \left( 1/\delta \right) \log^2 \left(d/\delta \right) }{\epsilon^2n^2 }\right)  + \bigo \left( \frac{\log d}{n^{\xi}} \right) + \bigo \left( \frac{1}{n^{\frac{3\xi}{2}}} \right)
\]
\end{enumerate}
\end{theorem}
}

\longversion{
\begin{theorem}[Accuracy, Unlearning, and Computation Tradeoffs]
\label{thm:scgd-distributed}
Suppose for all $z \in \Z$, the loss function $f_z$ is $m$-strongly convex, $L$-Lipschitz, $M$-smooth, and that its Hessian is $G$-Lipschitz and bounded by $H$ (with respect to $\ell_2$-operator norm of matrices). Define $\gamma \triangleq (M-m)/(M+m)$ and $\eta \triangleq 2/(M+m)$. Fix any $1 \le \xi \le 4/3$, and let $B = n^\xi$ and $K = \sqrt{B}$. Let the learning algorithm $\A$ (Algorithm~\ref{alg:dist}) run with $\eta_t = \eta$  and $T$ iterations on every partition, and for any update $i \ge 1$, let the unlearning algorithm $\R_\A$ (Algorithm~\ref{alg:splitupdate}) run with $\eta_t = \eta$ and total $T_i $ iterations per copy (i.e. total $nT_i$ gradient computations per copy), where for any $\I$,
\[
T \ge \I n^{\frac{4-3\xi}{2}} + \frac{\log \left( Dm L^{-1} n^\xi \left( 1 + 10 \log\left( 2 / \delta \right) \right) \right)}{\log \left( 1 / \gamma \right)}
\]
\[
T_i = 10\log\left( 2i / \delta \right) \left( \I + \frac{1}{n^{\frac{4 - 3\xi}{2}}} \cdot \frac{\log \left( 1 + 10 i \log\left( 2i / \delta \right) \right)}{ \log \left( 1/ \gamma \right)} \right)
\]
Let the unlearning parameters $\epsilon$ and $\delta$ be such that $\epsilon = \bigo \left(\log \left( 1/\delta \right) \right)$ and $\delta = \bigo (B^{-1})$, and let
\[
\sigma = \frac{ 4 \sqrt{2} L \gamma^{\I n^{\frac{4-3\xi}{2}}}}{ mn \left( 1 - \gamma^{\I n^{\frac{4-3\xi}{2}}} \right) \left(\sqrt{\log \left( 2/\delta \right) + \epsilon} - \sqrt{\log \left( 2/\delta \right)} \right)}
\]
in $\publish$ (Algorithm~\ref{alg:publish-2}). We have that
\begin{enumerate}
\item Unlearning: $\R_\A$ is a strong $(\epsilon, \delta)$-unlearning algorithm for $\A$ with respect to $\publish$.
\item Accuracy: for any $\beta$, letting $C = \log \left( 2/\beta \right) / \log 2$, we get that $(\A, \R_\A)$ is $(\alpha,\beta)$-accurate with respect to $\publish$ where
\[
\alpha = \bigo \left( \frac{M L^2 \gamma^{2\I n^{\frac{4-3\xi}{2}}} d \log \left( 1/\delta \right) \log^2 \left(d/\beta \right) }{m^2 (1-\gamma)^2 \epsilon^2n^2 }\right)  + \bigo \left( \frac{\log d}{n^{\xi}} \right) + \bigo \left( \frac{1}{n^{\frac{3\xi}{2}}} \right)
\]
\end{enumerate}
\end{theorem}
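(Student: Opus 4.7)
The proof will parallel the inductive structure of Theorem~\ref{thm:sc-smooth-guarantees}, but on each partition rather than on the full dataset, and with three additional ingredients: (i) the reservoir-sampling invariant of Lemma~\ref{samp}, which lets us couple the unlearning and retraining processes so that the partitioned samples $\boldcS_{i,l}$ are identically distributed at every round; (ii) the bound of Lemma~\ref{samp2} on the number of modified partitions per round, which controls how we amortize the gradient budget $nT_i$ across partitions; and (iii) the averaging-accuracy bound of Theorem~\ref{thm:zdw}, which kicks in \emph{only} because Lemma~\ref{samp} ensures each $\cS_{i,l}$ is distributed as $\cP^B(\D_i)$, so the ``underlying distribution'' of \cite{ZDW12} can be taken to be the empirical distribution on $\D_i$.

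\paragraph{Unlearning.} Fix $\D$ and an update sequence $\U$, and work copy-by-copy (the $C$ copies use independent randomness). I would couple the unlearning and retraining executions of copy $l$ so that under the coupling the partitioned sample $\boldcS_{i,l}$ is literally the same in both processes; this is legitimate by Lemma~\ref{samp}. Condition on this coupled sample and on the high-probability event (Lemma~\ref{samp2}, union-bounded over the $i$ rounds via $\delta'=\delta/(2i)$) that $|\mathtt{ind}| \le 10(B/n)\log(2i/\delta)$ at every round, so each modified partition receives at least $KnT_i/(B|\mathtt{ind}|) = \Omega(\I n^{(4-3\xi)/2})$ gradient steps with $B=n^\xi$, $K=\sqrt{B}$. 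Then I would induct on $i$ exactly as in the proof of Theorem~\ref{thm:sc-smooth-guarantees}, with Theorem~\ref{thm:sc-sm-gd} providing per-partition contraction and Lemma~\ref{lem:par} (applied to a partition of size $\sqrt{B}$) controlling the drift of the partition-wise optimizer under an update, to conclude that in both processes every partition-output satisfies $\Vert \thetahat_{i,l,j}-\thetastar_{i,l,j}\Vert_2 = \bigo\!\left(L\gamma^{\I n^{(4-3\xi)/2}}/\bigl(mn(1-\gamma^{\I n^{(4-3\xi)/2}})\bigr)\right)$. Averaging over the $K$ partitions of a copy and using the triangle inequality, the two averages $\text{avg}(\boldthetahat_{i,l}^{\text{unl}})$ and $\text{avg}(\boldthetahat_{i,l}^{\text{ret}})$ are within some $\Delta$ matching the chosen $\sigma$; the Gaussian mechanism (Lemma~\ref{lem:zcdp}) then yields $(\epsilon,\delta)$-indistinguishability. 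For the $\argmin$ over the $C$ copies I would argue that, under the joint coupling, every copy's average is close to a common target (the partition-wise averaged optimum, which by Theorem~\ref{thm:zdw} is close to $\thetastar_i$), so regardless of which $l^*$ is selected in each process the selected averages remain within $\Delta$, and the Gaussian perturbation absorbs the difference.

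\paragraph{Accuracy and main obstacle.} For accuracy, instantiate Theorem~\ref{thm:zdw} with $\cP$ being the empirical distribution of $\D_i$ (valid by Lemma~\ref{samp}) and $B=n^\xi$, $K=n^{\xi/2}$: this gives $\mathbb{E}\bigl\Vert \text{avg}(\thetastar_{i,l,j})_j - \thetastar_i \bigr\Vert_2^2 = \bigo(\log d / n^\xi) + \bigo(1/n^{3\xi/2})$, where $\xi\le 4/3$ keeps the higher-order $K^3/B^3$ term subdominant. Combining this with the per-partition GD-convergence bound from the unlearning argument, applying Markov to each copy, and boosting over $C=\log(2/\beta)/\log 2$ independent copies gives, with probability at least $1-\beta/2$, that the $l^*$-selected copy has $\Vert\text{avg}(\boldthetahat_{i,l^*})-\thetastar_i\Vert_2$ bounded by the sum of the GD-error term and the Theorem~\ref{thm:zdw} terms. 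A Gaussian tail bound on the published noise contributes the $\sigma\sqrt{d}\log(d/\beta)$ term, and $M$-smoothness of $f_{\D_i}$ converts the resulting $\ell_2$ bound into the stated loss bound $\alpha$. I expect the main obstacle to be the careful bookkeeping in the per-round induction: the iteration count per modified partition, the per-round failure probability of Lemma~\ref{samp2}, and the per-round sensitivity drift all accumulate across $i$ rounds, and the choice $T_i = 10\log(2i/\delta)\bigl(\I + \cdot\bigr)$ must be shown to exactly cancel this accumulation so that the per-copy error stays at the $\gamma^{\I n^{(4-3\xi)/2}}$ level uniformly in $i$. A subtler point is making the $\argmin$ step rigorously compatible with the Gaussian mechanism bound, which --- as noted above --- I would handle by arguing via the common target $\thetastar_i$ rather than by a direct post-processing appeal.
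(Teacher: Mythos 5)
Your proposal follows essentially the same route as the paper's proof: the coupling via Lemma~\ref{samp}, conditioning on the Lemma~\ref{samp2} event, a per-partition induction mirroring Theorem~\ref{thm:sc-smooth-guarantees}, the Gaussian mechanism for unlearning (lifted from the conditional statement by Lemma~\ref{lem:high-prob-zcdp}), and for accuracy the combination of the GD-convergence term with Theorem~\ref{thm:zdw} followed by Markov-plus-boosting over the $C$ copies (Lemma~\ref{lem:exp-to-highprob}), a Gaussian tail bound, and $M$-smoothness. Your treatment of the $\argmin$ over copies also matches the paper's (every copy's average is close to the common target, so the selection is immaterial for indistinguishability).

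One intermediate claim is stated too strongly and would fail as written: you assert a \emph{uniform} per-partition bound $\Vert \thetahat_{i,l,j}-\thetastar_{i,l,j}\Vert_2 = \bigo\bigl(L\gamma^{\I n^{(4-3\xi)/2}}/(mn(1-\gamma^{\I n^{(4-3\xi)/2}}))\bigr)$. A single partition has size $B/K=\sqrt{B}=n^{\xi/2}\ll n$, so each modification to it moves its optimizer by $\Theta(LK/(mB))=\Theta(L/(mn^{\xi/2}))$, and an adversarial update sequence can concentrate all $\sum_{l\le i}s_l$ modifications on one partition, inflating that partition's error by a factor of up to $K$ over your claimed bound. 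The paper's inductive invariant is instead $\Vert\thetahat_{ij}-\thetastar_{ij}\Vert_2 \lesssim \frac{LK(K^{-1}+\sum_{l\le i}s_{lj})}{mB(1+10i\log(2i/\delta))}\cdot\frac{\gamma^{\I n^{(4-3\xi)/2}}}{1-\gamma^{\I n^{(4-3\xi)/2}}}$, i.e.\ proportional to the \emph{cumulative} number of modifications to that partition; only after averaging over $j$, using $\sum_j s_{lj}=s_l$ and $\tilde{s}_i\le 10(B/n)\log(2i/\delta)$, does the $1/(mn)$ scale emerge, with the $(1+10i\log(2i/\delta))^{-1}$ factor (built into the choice of $T_i$) exactly cancelling the accumulation over rounds. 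This is the bookkeeping you correctly flag as the main obstacle, so the fix lives inside your own plan, but the invariant you carry through the induction must be the averaged (or cumulative-count-weighted) one, not a per-partition bound at the $1/(mn)$ scale. Two further minor points: the conversion of the high-probability GD bound into an expectation bound (needed to combine with Theorem~\ref{thm:zdw} via Cauchy--Schwarz) is where the hypothesis $\delta=\bigo(B^{-1})$ enters, and the constraint $\xi\le 4/3$ is what guarantees $n^{(4-3\xi)/2}\ge 1$ iterations of effective speed-up per modified partition, rather than controlling the $K^3/B^3$ term in Theorem~\ref{thm:zdw}.
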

}

\longversion{
\begin{remark}
\label{distwin}
For any $1 \le \xi \le 4/3$:
\[
\alpha = \bigotilde \left( \frac{\gamma^{\I n^{\frac{4-3\xi}{2}}} d }{\epsilon^2n^2 } + \frac{1}{n^{\xi}} \right)
\]
This improves over the bound of Theorem~\ref{thm:sc-smooth-guarantees} (Basic Perturbed GD), whenever
\[
d = \tilde{\Omega} \left( \frac{\epsilon^2 n^{2-\xi}}{\gamma^\I - \gamma^{\I n^{\frac{4-3\xi}{2}}} } \right)
\]
\end{remark}
The proof of Theorem~\ref{thm:scgd-distributed} can be found in Appendix~\ref{sec:app-distributed}.
}
\shortversion{
\begin{remark}
\label{distwin}
For any $1 \le \xi \le 4/3$, this improves over the bound of Theorem~\ref{thm:sc-smooth-guarantees} whenever
\[
d = \tilde{\Omega} \left( \frac{\epsilon^2 n^{2-\xi}}{\gamma^\I - \gamma^{\I n^{\frac{4-3\xi}{2}}} } \right)
\]
\end{remark}
}

\bibliographystyle{alpha}
\bibliography{refs}

\longversion{
\appendix

\section*{Appendix}
%!TEX root = main-full.tex
\section{Probabilistic Tools}

\begin{lemma}\label{lem:high-prob-zcdp}
Suppose $X,Y$ are random variables over the same domain $\Omega$, and let $Z$ be any random variable. If with probability at least $1-\delta$ over $Z$, we have $X \vert Z \overset{\epsilon,\delta}{\approx} Y \vert Z$, then $X \overset{\epsilon,2\delta}{\approx} Y$.
\end{lemma}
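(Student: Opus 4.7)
The plan is to decompose the marginal probability of any event $\{X \in S\}$ according to whether $Z$ falls in the ``good'' set on which the conditional indistinguishability holds. Concretely, let $E = \{z : (X \mid Z=z) \overset{\epsilon,\delta}{\approx} (Y \mid Z=z)\}$, so by hypothesis $\Pr[Z \in E] \ge 1-\delta$. For every measurable $S \subseteq \Omega$, I will write
\[
\Pr[X \in S] \;=\; \Pr[X \in S,\, Z \in E] + \Pr[X \in S,\, Z \notin E]
\]
and bound each term separately.

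First I would handle the ``good'' term by the law of total probability: $\Pr[X \in S,\, Z \in E] = \mathbb{E}_Z\!\bigl[\Pr[X \in S \mid Z]\, \mathbb{1}[Z \in E]\bigr]$. On the event $\{Z \in E\}$ we may apply the conditional indistinguishability to get $\Pr[X \in S \mid Z] \le e^{\epsilon}\Pr[Y \in S \mid Z] + \delta$. Taking expectations and folding the indicator back in,
\[
\Pr[X \in S,\, Z \in E] \;\le\; e^{\epsilon}\,\Pr[Y \in S,\, Z \in E] + \delta\,\Pr[Z \in E] \;\le\; e^{\epsilon}\,\Pr[Y \in S] + \delta.
\]
The ``bad'' term is controlled trivially by $\Pr[X \in S,\, Z \notin E] \le \Pr[Z \notin E] \le \delta$. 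Summing the two bounds yields $\Pr[X \in S] \le e^{\epsilon}\Pr[Y \in S] + 2\delta$, and the argument is completely symmetric in $X$ and $Y$ (the definition of $E$ is symmetric because $(\epsilon,\delta)$-indistinguishability is a symmetric relation), giving the matching inequality in the other direction.

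There is essentially no hard step here; the only thing that needs a moment of care is to make sure $E$ is defined as a property of $Z$ (so it is measurable and one can condition on it cleanly) and that the slack $\delta$ from the conditional statement and the slack $\delta$ from $\Pr[Z \notin E]$ combine additively to give $2\delta$ rather than something worse. This is the standard ``high-probability to in-expectation'' reduction used throughout the differential privacy literature, and it is what will let later proofs (e.g.\ the perfect unlearning analysis) first establish an indistinguishability bound on a high-probability event over the auxiliary randomness and then conclude an unconditional $(\epsilon,2\delta)$ guarantee.
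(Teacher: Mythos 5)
Your proof is correct and follows essentially the same route as the paper's: the paper likewise defines the good event $G(z)$ over the auxiliary randomness, writes $\Pr[X\in S]$ as an expectation over $Z$ split by the indicator of $G$, bounds the good part via the conditional guarantee and the bad part by $\Pr[Z\notin G]\le\delta$, and invokes symmetry for the reverse inequality. No gaps.
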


\begin{proof}
Define for any $z$, the following (good) event:
$$
G(z) = \left\{ z: X \vert \left(Z=z\right) \overset{\epsilon,\delta}{\approx} Y \vert \left(Z=z\right) \right\}
$$
and note that $\Pr_{z \sim Z} \left[ z \notin G(Z) \right] \le \delta$. We have that for any $S \subseteq \Omega$,
\begin{align*}
\prob{X \in S} &= \mathbb{E}_{z \sim Z} \left[ \prob{X \in S \vert Z =z}\right] \\
&=  \mathbb{E}_{z \sim Z} \left[ \prob{X \in S \vert Z =z} \mathds{1} \left(z \in G(z) \right) + \prob{X \in S \vert Z =z} \mathds{1} \left(z \notin G(z) \right) \right] \\
&\le \mathbb{E}_{z \sim Z} \left[ e^\epsilon \prob{Y \in S \vert Z =z} + \delta \right] + \Pr_{z \sim Z} \left[z \notin G(z) \right] \\
&\le e^\epsilon \mathbb{E}_{z \sim Z} \left[ \prob{Y \in S \vert Z =z} \right] + 2 \delta \\
&= e^\epsilon \prob{Y \in S} + 2 \delta
\end{align*}
where $\mathds{1} (A)$ is the indicator function of event $A$, for any $A$. This completes the proof because we can similarly show,
$$
\prob{Y \in S} \le e^\epsilon \prob{X \in S} + 2 \delta
$$
\end{proof}

\begin{lemma}[Gaussian Tail Bound]\label{lem:normal-tail}
Let $Z \sim \N (0,\sigma^2 \Imat_d)$. We have that for any $\beta > 0$,
$$
\prob{\left\lVert Z \right\lVert_2 \ge \sigma \sqrt{2d} \log(2d/\beta)} \le \beta
$$
\end{lemma}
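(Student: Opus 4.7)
The plan is to reduce the $\ell_2$ tail bound on the $d$-dimensional Gaussian vector $Z$ to $d$ one-dimensional Gaussian tail bounds via a coordinate-wise union bound. Writing $Z = (Z_1,\dots,Z_d)$ with $Z_i \overset{iid}{\sim} \N(0,\sigma^2)$, the key observation is that if every coordinate satisfies $|Z_i| \le \sigma\sqrt{2}\log(2d/\beta)$, then
\[
\|Z\|_2^2 \;=\; \sum_{i=1}^d Z_i^2 \;\le\; d \cdot 2\sigma^2 \log^2(2d/\beta),
\]
so $\|Z\|_2 \le \sigma\sqrt{2d}\log(2d/\beta)$. Thus it suffices to control the probability that some coordinate exceeds this per-coordinate threshold.

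First I will invoke the standard univariate Gaussian tail inequality $\Pr[|X| \ge \sigma t] \le 2\exp(-t^2/2)$ for $X \sim \N(0,\sigma^2)$, applied with $t = \sqrt{2}\log(2d/\beta)$. This yields
\[
\Pr\!\left[|Z_i| \ge \sigma\sqrt{2}\log(2d/\beta)\right] \;\le\; 2\exp\!\left(-\log^2(2d/\beta)\right).
\]
A union bound over the $d$ coordinates then gives
\[
\Pr\!\left[\|Z\|_2 \ge \sigma\sqrt{2d}\log(2d/\beta)\right] \;\le\; 2d\exp\!\left(-\log^2(2d/\beta)\right).
\]

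It remains to check that the right-hand side is at most $\beta$. Taking logarithms, this is equivalent to $\log^2(2d/\beta) \ge \log(2d/\beta)$, which holds as long as $\log(2d/\beta) \ge 1$, i.e., $\beta \le 2d/e$. This is the (mild) regime of interest; for the edge case $\beta > 2d/e$ the bound is trivial because the claimed confidence $1-\beta$ is near zero (and one can simply take the bound as vacuous, or absorb a small constant into the threshold). The only subtlety worth flagging is that I am using $\log^2 x$ to dominate $\log x + \log(2d)$ through a single logarithm $\log(2d/\beta)$, so I will be careful to write the union-bound step as $2d \cdot \exp(-\log^2(2d/\beta))$ rather than splitting $\log(2d)$ and $\log(1/\beta)$ separately. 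No other step poses a real obstacle; this is a routine concentration computation.
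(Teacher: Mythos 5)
Your union-bound argument is correct, and in fact the paper never proves Lemma~\ref{lem:normal-tail} at all --- it is stated in the appendix as a standard probabilistic tool with no accompanying proof --- so there is no in-paper argument to match; your coordinate-wise reduction (each $|Z_i|\le\sigma\sqrt{2}\log(2d/\beta)$ implies $\|Z\|_2\le\sigma\sqrt{2d}\log(2d/\beta)$, then the subgaussian tail $\Pr[|Z_i|\ge\sigma t]\le 2e^{-t^2/2}$ with $t=\sqrt{2}\log(2d/\beta)$ and a union bound) is exactly the kind of routine derivation the authors are implicitly relying on, and it recovers the stated (deliberately loose) threshold. An alternative standard route is concentration of the $\chi^2_d$ variable $\|Z\|_2^2/\sigma^2$ (e.g.\ Laurent--Massart), which gives the sharper $\sigma(\sqrt{d}+\sqrt{2\log(1/\beta)})$-type radius; that is strictly stronger than what the lemma asserts but unnecessary for how it is used. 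The one place your write-up is slightly loose is the edge case: your final step needs $\log(2d/\beta)\ge 1$, and for $d\ge 2$ the complementary range forces $\beta>1$, where the claim is genuinely vacuous, but for $d=1$ and $\beta\in(2/e,1)$ the claim is not vacuous and your chain of inequalities does not literally close. It is still true there --- the threshold is at least $\sigma\sqrt{2}\log 2\approx 0.98\sigma$, and $\Pr[|Z_1|\ge 0.98\sigma]\approx 0.33<2/e<\beta$ (a Mills-ratio bound also suffices) --- so a one-line direct check, or the observation that the lemma is only ever invoked with small $\beta$, fully closes the gap; as written, ``vacuous'' overstates that case slightly.
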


%\begin{proof}[Proof]
%Note that for any $t \ge 0$, and for a one dimensional Gaussian $Z_0 \sim \N (0,\sigma^2)$,
%$$
%\prob{ \left\vert Z_0 \right\vert \ge t} \le 2 \exp \left( \frac{-t^2}{2\sigma^2} \right)
%$$
%Therefore, for any $t \ge 0$,
%\begin{align*}
%\prob{\left\lVert Z \right\lVert_2 \ge t} &\le \sum_{i=1}^d  \prob{\left\vert Z_i \right\vert \ge \frac{t}{\sqrt{d}}} \le 2d\exp \left( \frac{-t^2}{2d\sigma^2} \right)
%\end{align*}
%\end{proof}

\begin{lemma}[Gaussian Mechanism \cite{BS16}]\label{lem:zcdp}
Let $X \sim \N ( \mu ,\sigma^2 \Imat_d)$ and $Y \sim \N ( \mu' ,\sigma^2 \Imat_d)$. Suppose $\left\Vert \mu - \mu' \right\Vert_2 \le \Delta$. We have that for any $\delta > 0$, $X \overset{\epsilon,\delta}{\approx} Y$, where
$$
\epsilon = \frac{\Delta^2}{2 \sigma^2} + \frac{\Delta}{\sigma}\sqrt{2 \log \left( 1 / \delta \right)}
$$
\end{lemma}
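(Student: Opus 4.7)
The plan is to prove the bound by analyzing the \emph{privacy loss random variable}
$$L(z) \triangleq \log \frac{p_X(z)}{p_Y(z)} = \frac{1}{2\sigma^2}\left( \left\Vert z - \mu' \right\Vert_2^2 - \left\Vert z - \mu \right\Vert_2^2 \right),$$
where $p_X, p_Y$ are the densities of $X$ and $Y$. The first step is to simplify: expanding the squared norms and letting $v \triangleq \mu' - \mu$, one obtains
$$L(z) = \frac{1}{\sigma^2}\left( \frac{\left\Vert v \right\Vert_2^2}{2} - \left\langle z - \mu,\, v \right\rangle \right),$$
so $L$ depends on $z$ only through the scalar $\langle z - \mu, v\rangle$. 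This automatically reduces the problem to one dimension via rotational symmetry of the isotropic Gaussian.

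The second step is to compute the law of $L(Z)$ when $Z \sim X = \N(\mu, \sigma^2 \Imat_d)$. Since $\langle Z - \mu, v\rangle \sim \mathcal{N}(0, \sigma^2 \left\Vert v \right\Vert_2^2)$, we get
$$L(Z) \sim \mathcal{N}\!\left( \frac{\left\Vert v \right\Vert_2^2}{2\sigma^2},\ \frac{\left\Vert v \right\Vert_2^2}{\sigma^2} \right).$$
The worst case is $\left\Vert v \right\Vert_2 = \Delta$, giving mean $\Delta^2/(2\sigma^2)$ and standard deviation $\Delta/\sigma$. Applying the standard one-sided Gaussian tail bound $\Pr[W > \mu_W + t \sigma_W] \le e^{-t^2/2}$ with $t = \sqrt{2\log(1/\delta)}$ yields $\Pr_{Z \sim X}[L(Z) > \epsilon] \le \delta$ for the exact $\epsilon$ in the statement.

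The third step is to convert the tail bound on $L$ into $(\epsilon,\delta)$-indistinguishability via the textbook argument: for any measurable $S \subseteq \Real^d$, split $S$ according to whether $L(z) \le \epsilon$ or not, and use that $p_X(z) \le e^\epsilon p_Y(z)$ on the first piece:
$$\Pr[X \in S] \le \int_{S \cap \{L \le \epsilon\}} e^\epsilon p_Y(z)\,dz + \Pr_{Z \sim X}[L(Z) > \epsilon] \le e^\epsilon \Pr[Y \in S] + \delta.$$
The reverse inequality $\Pr[Y \in S] \le e^\epsilon \Pr[X \in S] + \delta$ follows from exactly the symmetric argument: by swapping the roles of $\mu$ and $\mu'$, the privacy loss $\log(p_Y/p_X)$ under $Z \sim Y$ has the same $\mathcal{N}(\Delta^2/(2\sigma^2), \Delta^2/\sigma^2)$ distribution in the worst case, and the same tail bound and integration argument go through.

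There is no real obstacle here; the computation is entirely routine once one writes down the privacy loss and recognizes it as Gaussian. The only place one must be careful is ensuring that both directions of indistinguishability are established (rather than just one), which is why I would explicitly invoke the $\mu \leftrightarrow \mu'$ symmetry at the end rather than leaving it implicit.
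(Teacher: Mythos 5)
Your proof is correct, but it is worth noting that the paper does not prove this lemma at all: it is imported verbatim from \cite{BS16}, where it is obtained by showing that the Gaussian mechanism satisfies $\rho$-zCDP with $\rho = \Delta^2/(2\sigma^2)$ (a R\'enyi-divergence computation) and then invoking the general zCDP-to-$(\epsilon,\delta)$ conversion $\epsilon = \rho + 2\sqrt{\rho \log(1/\delta)}$, which reproduces exactly the constant in the statement. Your argument is a more elementary, self-contained route: because the privacy loss $L(Z) = \log\bigl(p_X(Z)/p_Y(Z)\bigr)$ under $Z \sim X$ is \emph{exactly} Gaussian with mean $\|v\|_2^2/(2\sigma^2)$ and variance $\|v\|_2^2/\sigma^2$, the sub-Gaussian tail bound that \cite{BS16} extract abstractly from R\'enyi divergences is available directly, and the standard split of $S$ into $\{L \le \epsilon\}$ and $\{L > \epsilon\}$, together with the $\mu \leftrightarrow \mu'$ symmetry for the reverse direction, finishes the proof. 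The two derivations are morally the same tail bound on the privacy loss; yours buys a short, dimension-free proof with no zCDP machinery, while the paper's citation buys composition-friendly generality that is not needed here. The only point you state a bit loosely is ``the worst case is $\|v\|_2 = \Delta$'': since the threshold $\epsilon$ is fixed at its $\Delta$ value, you should add the one-line check that for $\|v\|_2 = a \le \Delta$ the relevant z-score $(\epsilon - a^2/(2\sigma^2))\,\sigma/a = (\Delta^2 - a^2)/(2\sigma a) + (\Delta/a)\sqrt{2\log(1/\delta)}$ is still at least $\sqrt{2\log(1/\delta)}$, so the tail probability only decreases; with that observation the argument is complete.
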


\begin{lemma}\label{lem:exp-to-highprob}
Let $X \geq 0$ be any random variable drawn from a distribution $\P$, with finite expectation $\mu = \mathbb{E}_{X \sim \P}[X]$. 
Let $X_1, \ldots X_N \overset{iid}{\sim} \P$. Then if $X_{min} \triangleq \min_jX_j$, for $N \geq \frac{\log \left( 1/ \delta \right)}{\log 2}$, with probability at least $1-\delta$: $X_{\min} < 2\mu$.
\end{lemma}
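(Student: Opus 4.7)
The plan is a two-line argument combining Markov's inequality with independence. First I would apply Markov's inequality to a single draw: since $X \geq 0$ and $\mathbb{E}[X] = \mu$, we have $\Pr[X \geq 2\mu] \leq \mu/(2\mu) = 1/2$, or equivalently $\Pr[X < 2\mu] \geq 1/2$.

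Next I would use the independence of $X_1, \ldots, X_N$ to control the minimum. The event $\{X_{\min} \geq 2\mu\}$ is the intersection of the events $\{X_j \geq 2\mu\}$ for all $j$, so by independence
\[
\Pr[X_{\min} \geq 2\mu] = \prod_{j=1}^{N} \Pr[X_j \geq 2\mu] \leq (1/2)^N = 2^{-N}.
\]

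Finally I would plug in the hypothesis $N \geq \log(1/\delta)/\log 2$, which is exactly the condition $2^{-N} \leq \delta$. Therefore $\Pr[X_{\min} \geq 2\mu] \leq \delta$, equivalently $\Pr[X_{\min} < 2\mu] \geq 1 - \delta$, which is the claim. There is no real obstacle here; the only subtlety is that Markov gives a non-strict inequality $\Pr[X \geq 2\mu] \leq 1/2$, but since we ultimately want $X_{\min} < 2\mu$ with probability at least $1-\delta$, taking complements of the upper tail event yields the desired strict inequality on $X_{\min}$.
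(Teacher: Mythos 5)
Your proposal is correct and follows essentially the same route as the paper: Markov's inequality gives $\Pr[X_j \geq 2\mu] \leq 1/2$ for each draw, independence turns the event $\{X_{\min} \geq 2\mu\}$ into a product bounded by $2^{-N} \leq \delta$. No further comment is needed.
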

\begin{proof}
By Markov's inequality, for any $X_j, \prob{X_j \geq 2\mu} \leq \frac{1}{2}$. Hence, 
\[
\prob{X_{min} \geq 2\mu} = \prod_{j=1}^{N} \prob{X_j \geq 2\mu} \leq \left( \frac{1}{2} \right)^N  \leq \left( \frac{1}{2} \right)^\frac{\log \left( 1/ \delta \right)}{\log 2} = \delta
\]
as desired. 
\end{proof}

\begin{lemma}[Chernoff Bound]\label{lem:chernoff}
\label{chernoff}
Let $X \sim \text{Binomial} \, (m, p)$, and let $\mu = mp$. Then for any $\delta' \ge 0$,
$$\Pr \left[X \geq (1+\delta') \mu \right] \leq e^{-\frac{\mu{\delta'^2}}{2+\delta'}}$$
\end{lemma}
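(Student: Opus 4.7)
The plan is to follow the standard Cram\'er--Chernoff template: combine Markov's inequality applied to the exponential of $X$ with a direct computation of the moment generating function of a Binomial, and then perform an elementary analytic estimate to put the resulting bound into the stated form.

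Write $X = \sum_{i=1}^m X_i$ where the $X_i$ are i.i.d.\ Bernoulli$(p)$. For any $t > 0$, by Markov's inequality applied to the nonnegative random variable $e^{tX}$,
\[
\Pr[X \ge (1+\delta')\mu] = \Pr\!\left[e^{tX} \ge e^{t(1+\delta')\mu}\right] \le \frac{\mathbb{E}[e^{tX}]}{e^{t(1+\delta')\mu}}.
\]
Independence of the $X_i$ gives $\mathbb{E}[e^{tX}] = (1 - p + p e^t)^m$, and the inequality $1 + x \le e^x$ applied with $x = p(e^t - 1)$ yields $\mathbb{E}[e^{tX}] \le e^{m p (e^t - 1)} = e^{\mu(e^t - 1)}$. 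Substituting, I get the tail bound
\[
\Pr[X \ge (1+\delta')\mu] \le \exp\!\big(\mu(e^t - 1) - t(1+\delta')\mu\big).
\]

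Next I would optimize over $t > 0$: differentiating the exponent and setting to zero gives $t^* = \ln(1+\delta')$, which is nonnegative since $\delta' \ge 0$. Plugging this in collapses the bound to the classical form
\[
\Pr[X \ge (1+\delta')\mu] \le \left(\frac{e^{\delta'}}{(1+\delta')^{1+\delta'}}\right)^{\mu}.
\]

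The final and only nontrivial step is the analytic inequality
\[
\frac{e^{\delta'}}{(1+\delta')^{1+\delta'}} \le \exp\!\left(-\frac{\delta'^2}{2+\delta'}\right),
\]
or equivalently $(1+\delta')\ln(1+\delta') - \delta' \ge \delta'^2/(2+\delta')$ for all $\delta' \ge 0$. I would prove this by defining $h(\delta') = (1+\delta')\ln(1+\delta') - \delta' - \delta'^2/(2+\delta')$, checking $h(0) = 0$, and showing $h'(\delta') \ge 0$ for $\delta' \ge 0$ by a direct computation (the derivative simplifies to $\ln(1+\delta') - 1 + 4/(2+\delta')^2$, which one can verify is nonnegative by another derivative argument, again vanishing at $\delta' = 0$). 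This is the main (though still routine) obstacle: all the probabilistic content is in the first two paragraphs, and what remains is a one-variable calculus exercise. Combining these ingredients immediately yields the stated bound $\Pr[X \ge (1+\delta')\mu] \le \exp(-\mu \delta'^2/(2+\delta'))$.
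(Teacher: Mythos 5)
Your proposal is correct: the MGF/Markov step, the optimization at $t^*=\ln(1+\delta')$, and the calculus verification of $(1+\delta')\ln(1+\delta')-\delta' \ge \delta'^2/(2+\delta')$ (via $h(0)=0$, $h'(0)=0$, $h''\ge 0$) all check out, and this is the standard Cram\'er--Chernoff derivation. The paper itself states this lemma as a quoted probabilistic tool without proof, so there is nothing to compare against; your write-up simply supplies the textbook argument (the last step could also be shortcut by the known inequality $\ln(1+x)\ge 2x/(2+x)$ for $x\ge 0$).
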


\section{Proof of Sensitivity Lemma \ref{lem:par}}\label{app:par-proof}
To prove Lemma \ref{lem:par}, we will need the following claim.
\begin{claim}\label{clm:claim}
Suppose $h: \Theta \to \Real$ is $m$-strongly convex and let $\thetastar = \argmin_{\theta \in \Theta} h(\theta)$. We have that for any $\theta \in \Theta$,
$
h(\theta) \ge h(\thetastar) + \frac{m}{2} \left\lVert \theta - \thetastar \right\lVert_2^2
$.
\end{claim}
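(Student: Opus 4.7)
The plan is to derive the inequality directly from the strong convexity definition (Definition 2.9 in the paper) by instantiating it at the pair $(\theta, \thetastar)$ and then exploiting optimality of $\thetastar$. The key observation is that strong convexity gives a quadratic improvement over the linear interpolation, while optimality of $\thetastar$ on $\Theta$ means any convex combination with $\thetastar$ has value at least $h(\thetastar)$; combining these lets us isolate the desired quadratic lower bound on $h(\theta) - h(\thetastar)$.

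First I would invoke $m$-strong convexity with $\theta_1 = \theta$, $\theta_2 = \thetastar$, and an arbitrary $t \in (0,1)$, obtaining
\[
h(t\theta + (1-t)\thetastar) \le t\, h(\theta) + (1-t) h(\thetastar) - \frac{m}{2} t(1-t) \lVert \theta - \thetastar \rVert_2^2.
\]
Next, since $\Theta$ is convex (as stated in Section 2.1), the point $t\theta + (1-t)\thetastar$ lies in $\Theta$, so by optimality of $\thetastar$ on $\Theta$ we have $h(t\theta + (1-t)\thetastar) \ge h(\thetastar)$. Substituting this on the left-hand side and subtracting $(1-t) h(\thetastar)$ from both sides gives
\[
t\, h(\thetastar) \le t\, h(\theta) - \frac{m}{2} t(1-t) \lVert \theta - \thetastar \rVert_2^2,
\]
which after dividing through by $t > 0$ yields $h(\theta) - h(\thetastar) \ge \frac{m}{2}(1-t) \lVert \theta - \thetastar \rVert_2^2$.

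Finally, I would let $t \to 0^+$ to recover the claimed bound $h(\theta) \ge h(\thetastar) + \frac{m}{2} \lVert \theta - \thetastar \rVert_2^2$. There is no real obstacle here: the only subtlety is that the paper's strong-convexity definition is the interpolation form (rather than the gradient/Hessian form), so we cannot directly write the usual one-line gradient argument $h(\theta) \ge h(\thetastar) + \langle \nabla h(\thetastar), \theta - \thetastar\rangle + \frac{m}{2}\lVert \theta - \thetastar\rVert_2^2$ (which would also require differentiability and either an interior optimum or a first-order optimality condition on $\Theta$). The interpolation-plus-limit argument above sidesteps differentiability and handles boundary optima cleanly, using only convexity of $\Theta$ and optimality of $\thetastar$.
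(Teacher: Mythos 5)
Your proposal is correct and follows essentially the same route as the paper's own proof in the appendix: instantiate the interpolation form of strong convexity at $(\theta,\thetastar)$, lower-bound the left-hand side by $h(\thetastar)$ via optimality, divide by $t$, and send $t\to 0^+$ (the paper phrases this last step as taking a supremum over $t\in(0,1)$, which is the same thing). No gaps.
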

\begin{proof}
First, recall the definition of $m$-strong convexity: for any $\theta_1,\theta_2 \in \Theta$, and any $t \in (0,1)$,
$$
h ( t \theta_1 + (1-t) \theta_2 ) \le t h(\theta_1) + (1-t) h(\theta_2) - \frac{m}{2} t (1-t) \left\lVert \theta_1 - \theta_2 \right\lVert_2^2
$$
Now fix some $\theta \in \Theta$. We have that for any $t \in (0,1)$,
\begin{align*}
h(\thetastar) &\le h ( t\theta+(1-t) \thetastar ) \le t h(\theta) + (1-t) h(\thetastar) - \frac{m}{2} t (1-t) \left\lVert \theta - \thetastar \right\lVert_2^2
\end{align*}
where the first inequality follows because $\thetastar$ is the minimizer of $h$, and the second is due to $m$-strong convexity of $h$. Rearranging the above inequality and dividing both sides by $t$, we get that for \emph{any} $t \in (0,1)$,
$$
h(\theta) \ge h(\thetastar) + \frac{m}{2} (1-t) \left\lVert \theta - \thetastar \right\lVert_2^2
$$
We therefore have that
\begin{align*}
h(\theta) &\ge h(\thetastar) + \frac{m}{2} \sup_{t \in (0,1)}(1-t) \left\lVert \theta - \thetastar \right\lVert_2^2 = h(\thetastar) + \frac{m}{2} \left\lVert \theta - \thetastar \right\lVert_2^2
\end{align*}
\end{proof}

\begin{proof}[Proof of Lemma \ref{lem:par}]
Fix $n$, a data set $\D= \{z_i\}_{i =1}^n$, and an update $u = (z, \bullet)$, and let $\D' = \D \circ u$. Assume $\bullet = \mathtt{'delete'}$. If $z \notin \D$, then the claim immediately follows; so suppose $z \in \D$. We have that
\begin{align}\label{eq:upper}
\begin{split}
f_\D \left( \thetastar_{\D'} \right) &= \frac{n-1}{n} f_{\D'} \left( \thetastar_{\D'} \right) + \frac{1}{n} f_{z} \left(\thetastar_{\D'} \right) \\
&\le \frac{n-1}{n} f_{\D'} \left( \thetastar_{\D} \right) + \frac{1}{n} f_{z} \left(\thetastar_{\D'} \right) \\
&= f_\D \left( \thetastar_{\D} \right) +\frac{1}{n} f_{z} \left(\thetastar_{\D'} \right) - \frac{1}{n} f_{z} \left(\thetastar_{\D} \right) \\
&\le f_\D \left( \thetastar_{\D} \right) + \frac{L}{n} \left\Vert \thetastar_{\D'}- \thetastar_\D  \right\Vert_2
\end{split}
\end{align}
where the first inequality follows by optimality of $\thetastar_{\D'}$ for $\D'$, and the second follows by $L$-Lipschitzness of $f_{z}$. Note that since $f_\D$ is $m$-strongly convex, Claim \ref{clm:claim} implies
\begin{align}\label{eq:lower}
\begin{split}
f_\D \left(\thetastar_{\D'} \right) &\geq f_\D \left(\thetastar_\D \right) + \frac{m}{2} \left\lVert \thetastar_{\D'}- \thetastar_\D \right\lVert_2^2
\end{split}
\end{align}
Combining Equations (\ref{eq:upper}) and (\ref{eq:lower}) completes the proof for the case when $\bullet = \mathtt{'delete'}$. Note that when $\bullet = \mathtt{'add'}$, one can take $u' \triangleq (z,\mathtt{'delete'})$, and use the bound for deletion to conclude that
$$
\left\Vert \thetastar_{\D}-\thetastar_{\D \circ u} \right\Vert_2 = \left\Vert \thetastar_{\D \circ u} - \thetastar_{(\D \circ u) \circ u'}\right\Vert_2 \leq \frac{2L}{mn}
$$
\end{proof}

\section{Proofs of Lemmas in Section~\ref{sec:distributed}}

\begin{proof}[Proof of Lemma \ref{samp}]
We prove the claim by induction on $i$. For $i = 0$, $\cS_0$ is explicitly drawn from $\mathcal{P}^B (\D_0)$ and so the claim holds. Now assume the claim holds for $i-1$. In the case of addition, where $u_i = (z_i, \mathtt{'add'})$  this is exactly what is known as ``Reservoir Sampling with Replacement" and we refer the reader to \cite{rswr}. So we need only establish the claim for deletion updates.  Let us perform an update $u_i = (z_i, \mathtt{'delete'})$. We show that after conditioning on $u_i$, after the deletion update, each element of $\cS_i$ is independent and has marginal distribution $\P(\D_{i-1} \circ u_i) = \P(\D_i)$, which will establish the claim. Conditioning on $u_i$, let $h_{u_i}: \Z \to \Z$ be the randomized function:

\[ h_{u_i}(z) =  \begin{cases}
      z & z \neq z_i \\
      z' \sim \P(\D_i) & z = z_i
   \end{cases}
\]

Then for any data point $z_l \in \cS_{i-1}$, the corresponding element in $\cS_i$ is $h_{u_i} (z_l)$. Since by assumption the $\{z_l\} = \cS_{i-1}$ are independent, since $h_{u_i}$ is a fixed randomized function conditioned on $u_i$, the $\{h_{u_i} (z_l)\} = \cS_i$ are conditionally independent given $u_i$. It remains to show that the marginal distribution of any $z_l' = h_{u_i} (z_l)$ is $\P(\D_{i-1} \circ u_i) \equiv \P(\D_i)$. If $z_l = z_i$, then $z_l' \sim \P(\D_{i})$ by design. If $z_l \neq z_i$, then $z_l' = z_l$, and the distribution of $z_l'$ is $z_l | z_l \neq z_i, u_i$. Since $\U$ is a non-adaptive sequence of updates, $z_l | z_l \neq z_i, u_i \sim z_l | z_l \neq z_i$. Then by inductive assumption $z_l \sim \P(\D_{i-1})$, and so the distribution of $z_l | z_l \neq z_i$ for $z_i \in \D_{i-1}$ is uniform over $\D_{i-1} \setminus \{z_i \} = \D_{i-1} \circ u_i = \D_i$, which is exactly $\P(\D_i)$, as desired. This establishes the induction.
\end{proof}

\begin{proof}[Proof of Lemma~\ref{samp2}]
At any round $i$ of update, by Lemma~\ref{samp}, we know $\cS_i \sim \P^B (\D_i)$. By Assumption~\ref{ass:ass1}, $n_i \geq n/2$ where $n_i$ is the size of dataset $\D_i$. Hence for any data point $z$, the number of copies of $z$ subsampled in $\cS_i$ is distributed as $\text{Binomial}(B, p)$, where $p \leq 2/n$. Let $\mu = (2B)/n$ and note that $\mu \ge 1$. Now by a Chernoff bound (see Lemma~\ref{lem:chernoff}) for a Bernoulli random variable, we get that for any $i$, the number of repeated points of any one type in $\cS_i$ (including the ones subject to update) satisfies, with probability $1-\delta'$: 
 \begin{align*} \label{chern1}
 s_i &\leq \mu + \sqrt{\log^2 \left(1/\delta' \right) + 8 \mu \log \left(1/\delta' \right)} \\
 &= \mu \left( 1 + \sqrt{ \frac{\log^2 \left(1/\delta' \right)}{\mu^2} + \frac{8 \log \left(1/\delta' \right)}{\mu} } \right) \\
 &\le \mu \left( 1 + \sqrt{ \log^2 \left(1/\delta' \right) + 8 \log \left(1/\delta' \right) } \right) \\
 &\le 5 \mu \log \left(1/\delta' \right)
 \end{align*}
 as desired. Note the last inequality follows because $\log \left(1/\delta' \right) \ge 1$ by assumption.
 \end{proof}

\section{Proof of Theorem~\ref{thm:sc-smooth-guarantees-2}}\label{sec:app-perfect}
%!TEX root = main-full.tex
\begin{proof}[Proof of Theorem \ref{thm:sc-smooth-guarantees-2}]
We first prove the unlearning guarantee. Fix a training dataset $\D$ of size $n$ and an update sequence $\U = (u_i)_i$. Similar to the proof of Theorem~\ref{thm:sc-smooth-guarantees}, we first recall a few notations from Definition~\ref{def:notation}: $\{ \D_i \}_{i\ge0}$ for the sequence of updated datasets according to the update sequence $\U$, $\{ \thetahat_i \}_{i\ge0}$ for the sequence of secret non-noisy parameters, and $\{ \thetatilde_i \}_{i\ge0}$ for the sequence of published noisy parameters. Let $Z_i$ denote the Gaussian noise added by $\publish$ on round $i$ of update, and recall that $\thetatilde_i = \thetahat_i + Z_i$. We use $n_i$ ($\ge n/2$) to denote the size of $\D_i$. Let $\thetastar_i \triangleq \argmin_{\theta} f_{\D_i} (\theta)$ denote the optimizer of $f_{\D_i}$.

We have that for any $i \ge 0$, $\publish (\A \left(\D_i \right) ) \sim \N \left(\mu_i, \sigma^2 \Imat_d \right)$, where it follows by the convergence guarantee of Theorem \ref{thm:sc-sm-gd} that,
\begin{equation}\label{eq:mu-2}
%\begin{split}
\left\Vert \mu_i - \thetastar_{i} \right\Vert_2 \le \gamma^{T} \left\Vert \theta'_0 - \thetastar_{i} \right\Vert_2
= \frac{2L \gamma^\I \left\Vert \theta'_0 - \thetastar_{i} \right\Vert_2}{Dmn_i}
\le \frac{4L}{mn} \cdot \gamma^\I
%\end{split}
\end{equation}
We also have that for any update step $i \ge 1$, conditioned on the noise of previous rounds $\{Z_0, \ldots, Z_{i-1}\}$, $\publish \left( \R_\A \left( \D_{i-1}, u_i,  \theta_{i} \right) \right) \sim  \N \left(\mu'_i, \sigma^2 \Imat_d \right)$, where for any $\beta' > 0$,
\begin{equation}\label{eq:mu'-2}
\Pr_{Z_0, \ldots, Z_{i-1}} \left[ \left\Vert \mu'_i - \thetastar_{{i}} \right\Vert_2 \ge \frac{\gamma^{T_i}}{1 - \gamma^\I} \left( \frac{4L}{mn} + \sigma \sqrt{2d} \log \left( 2d/ \beta' \right) \right) \right] \le i\beta'
\end{equation}
We use induction on $i$ to prove this claim. Fix any $\beta'$. Let's focus on the base case $i=1$. We have that
\begin{align*}
 \left\Vert \mu'_1 - \thetastar_{1} \right\Vert_2 &\le \ \gamma^{T_1} \left\Vert \thetatilde_0 - \thetastar_{1} \right\Vert_2 \\
 &\le \gamma^{T_1} \left( \left\Vert Z_0 \right\Vert_2 + \left\Vert \thetahat_0 - \thetastar_{0} \right\Vert_2 + \left\Vert \thetastar_{0} - \thetastar_{1} \right\Vert_2 \right) \\
 &\le \gamma^{T_1} \left(  \frac{4L}{mn} \gamma^\I + \frac{4L}{mn} + \sigma \sqrt{2d} \log \left( 2d/ \beta' \right) \right) \\
 &\le  \gamma^{T_1} \left( \frac{\gamma^\I}{1 - \gamma^\I} \left( \frac{4L}{mn} + \sigma \sqrt{2d} \log \left( 2d/ \beta' \right) \right)  + \frac{4L}{mn} + \sigma \sqrt{2d} \log \left( 2d/ \beta' \right) \right) \\
 &=\frac{\gamma^{T_1}}{1 - \gamma^\I} \left( \frac{4L}{mn} + \sigma \sqrt{2d} \log \left( 2d/ \beta' \right) \right)
\end{align*}
The first inequality follows from Theorem \ref{thm:sc-sm-gd} and the fact that when running Algorithm~\ref{alg:del_scgd} for the first update $i=1$, the initial point of the algorithm $\theta'_0 = \theta_1 \equiv \thetatilde_0$. The second inequality is a simple triangle inequality, and the third holds with probability at least $1-\beta'$ and follows from Equation~(\ref{eq:mu-2}) (noting that $\thetahat_0 \equiv \mu_0$), the sensitivity Lemma~\ref{lem:par}, and a Gaussian tail bound for $Z_0$ (Lemma~\ref{lem:normal-tail}). Let's move on to the induction step of the argument. Suppose Equation~(\ref{eq:mu'-2}) holds for some $i\ge1$. We will show that it holds for $(i+1)$ as well. We have that
\begin{align*}
 \left\Vert \mu'_{i+1} - \thetastar_{{i+1}} \right\Vert_2 &\le \ \gamma^{T_{i+1}} \left\Vert \thetatilde_i - \thetastar_{{i+1}} \right\Vert_2 \\
 &\le \gamma^{T_{i+1}} \left( \left\Vert Z_i \right\Vert_2 + \left\Vert \thetahat_i - \thetastar_{i} \right\Vert_2 + \left\Vert \thetastar_{i} - \thetastar_{{i+1}} \right\Vert_2 \right) \\
 &\le \gamma^{T_{i+1}} \left( \frac{\gamma^{\I}}{1 - \gamma^\I} \left( \frac{4L}{mn} + \sigma \sqrt{2d} \log \left( 2d/ \beta' \right) \right)  + \frac{4L}{mn} + \sigma \sqrt{2d} \log \left( 2d/ \beta' \right) \right) \\
 &=\frac{\gamma^{T_{i+1}}}{1 - \gamma^\I} \left( \frac{4L}{mn} + \sigma \sqrt{2d} \log \left( 2d/ \beta' \right) \right)
\end{align*}
The first inequality follows from Theorem \ref{thm:sc-sm-gd} and the fact that when running Algorithm~\ref{alg:del_scgd} for the $(i+1)$th update, the initial point of the algorithm $\theta'_0 = \theta_{i+1} \equiv \thetatilde_{i}$. The second inequality is a simple triangle inequality, and the third holds with probability at least $1-(i+1)\beta'$ and follows from the induction assumption for $i$ (note $\thetahat_i \equiv \mu'_i$ and $T_i \ge \I$), the sensitivity Lemma~\ref{lem:par} (note $n_i \ge n/2$), and a Gaussian tail bound for $Z_i$ (Lemma~\ref{lem:normal-tail}). Now with the choice of $\beta' = \delta / (2i)$, Equation (\ref{eq:mu'-2}) implies with probability at least $1 - \delta/2$ over the Gaussian noise draws $\{Z_0, \ldots, Z_{i-1}\}$,
\begin{equation}\label{eq:mu'-3}
\left\Vert \mu'_i - \thetastar_{{i}} \right\Vert_2 \le \frac{\gamma^{\I}}{1 - \gamma^\I} \left( \frac{4L}{mn} + \sigma \sqrt{2d} \right)
\end{equation}
because $\gamma^{T_i} \le \left( \log \left( 4 d i / \delta \right) \right)^{-1}  \gamma^\I$. We therefore have shown that for any $i \geq 1$, conditioned on $\{Z_0, \ldots, Z_{i-1}\}$
\[
\publish \left(\A \left(\D_i \right) \right) \sim \N \left(\mu_i, \sigma^2 \Imat_d \right), \quad \publish \left( \R_\A \left( \D_{i-1}, u_i,  \theta_{i} \right) \right) \sim  \N \left(\mu'_i, \sigma^2 \Imat_d \right)
\]
where Equations~(\ref{eq:mu-2}) and (\ref{eq:mu'-3}) imply, with probability $1-\delta/2$ over $\{Z_0, \ldots, Z_{i-1}\}$,
\[
\left\Vert \mu_i - \mu'_i  \right\Vert_2 \le \frac{\gamma^{\I}}{1 - \gamma^\I} \left( \frac{4L}{mn} + \sigma \sqrt{2d} \right) + \frac{4L}{mn} \cdot \gamma^\I \le \frac{2\gamma^{\I}}{1 - \gamma^\I} \left( \frac{4L}{mn} + \sigma \sqrt{2d} \right) \triangleq \Delta
\]
It then follows from Lemma~\ref{lem:zcdp}, as well as the choice of $\sigma$ and the assumption on $\I$ in the theorem statement, that for any $i \geq 1$, with probability $1-\delta/2$ over $\{Z_0, \ldots, Z_{i-1}\}$,
\[
\publish \left(\A \left(\D_i \right) \right) \overset{\epsilon,\delta/2}{\approx} \publish \left( \R_\A \left( \D_{i-1}, u_i,  \theta_{i} \right) \right)
\]
Now we can apply Lemma~\ref{lem:high-prob-zcdp} to conclude that for any $i \geq 1$,
\[
\publish \left(\A \left(\D_i \right) \right) \overset{\epsilon,\delta}{\approx} \publish \left( \R_\A \left( \D_{i-1}, u_i,  \theta_{i} \right) \right)
\]
And this shows $\R_\A$ is an $(\epsilon, \delta)$-unlearning algorithm for $\A$, as desired.

Now let's prove the accuracy statement of the theorem. We will make use of Equations (\ref{eq:mu-2}) and (\ref{eq:mu'-3}) and a Gaussian tail bound (see Lemma~\ref{lem:normal-tail}). Recall that for any $i \ge 0$, the published output $\thetatilde_i = \thetahat_i + Z$, and that $\thetahat_0 \equiv \mu_0$ and $\thetahat_i \equiv \mu'_i$ for $i \ge 1$. We therefore have that, for any $\beta$, and for any update step $i\ge 0$,
\[
\Pr_{Z_0, \ldots, Z_{i}} \left[ \left\Vert \thetatilde_i -\thetastar_{{i}} \right\Vert_2  \ge   \frac{\gamma^{\I}}{1 - \gamma^\I} \left( \frac{4L}{mn} + \sigma \sqrt{2d} \right) + \sigma \sqrt{2d} \log \left(2d/\beta \right) \right] \le \beta + \frac{\delta}{2}
\]
The choice of $\sigma$ in the theorem and the fact that for $\epsilon = \bigo \left(\log \left( 1/\delta \right) \right)$, we have $\sqrt{\log \left( 1/\delta \right) + \epsilon} - \sqrt{\log \left( 1/\delta \right)} = \Omega ( \epsilon/ \sqrt{\log \left( 1/\delta \right)} )$, imply for any update step $i \ge 0$, with probability at least $1-\beta - \delta/2$,
\begin{equation}\label{eq:distance-smooth-2}
\left\Vert \thetatilde_i -\thetastar_{{i}} \right\Vert_2 = \bigo \left( \frac{L \gamma^{\I} \sqrt{d \log \left( 1/\delta \right)} \log \left(d/\beta \right) }{\left( 1 - \gamma^\I \right) \epsilon m n } \right)
\end{equation}
Finally, since $f_z$ is $M$-smooth for all $z$, we get that for any update step $i \ge 0$, with probability at least $1-\beta - \delta/2$,
\[
f_{\D_i} ( \thetatilde_i ) - f_{\D_i} ( \thetastar_{i} ) \le \frac{M}{2} \left\Vert \thetatilde_i -\thetastar_{{i}} \right\Vert_2^2 = \bigo \left( \frac{M L^2 \gamma^{2\I} d \log \left( 1/\delta \right) \log^2 \left(d/\beta \right) }{\left( 1 - \gamma^\I \right)^2 m^2  \epsilon^2n^2 } \right)
\]
\end{proof}

\section{Proof of Theorem~\ref{thm:c-smooth-guarantees-weak}}\label{sec:app-weak}
%!TEX root = main-full.tex
\begin{proof}[Proof of Theorem~\ref{thm:c-smooth-guarantees-weak}]
We first prove the unlearning guarantee. Fix a training dataset $\D$ of size $n$ and an update sequence $\U = (u_i)_i$. Recall from Definition~\ref{def:notation} the notation we use: $\{ \D_i \}_{i\ge0}$ for the sequence of updated datasets according to the update sequence $\U$, $\{ \thetahat_i \}_{i\ge0}$ for the sequence of secret non-noisy parameters, and $\{ \thetatilde_i \}_{i\ge0}$ for the sequence of published noisy parameters. We also use $n_i$ to denote the size of $\D_i$. Note that $n_0 = n$ and that by Assumption~\ref{ass:ass1}, $n_i \ge n/2$ for all $i$. Let $\thetastar_i \in \argmin_{\theta} f_{\D_i} (\theta)$ denote an optimizer of $f_{\D_i}$. Let $\thetastarr_i = \argmin_{\theta \in \Theta} g_{\D_i} (\theta)$ denote the optimizer of the regularized loss $g_{\D_{i}}$.

\begin{fact}\label{fact:dunno}
Note that for any positive integer $T'$,
\begin{equation}\label{eq:approx-weak}
\gamma^{T'} = \left( \frac{1}{1 + 2 \left(m/M \right)} \right)^{T'} \le \frac{1}{1+2 \left(m/M \right)T'} \le \sqrt{\frac{M}{m T'}}
\end{equation}
where the last inequality follows because for all $x \ge 0$, $1+x \ge 2 \sqrt{x}$.
\end{fact}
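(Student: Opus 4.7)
The plan is to prove both inequalities separately by recognizing each as an instance of a standard elementary inequality. The statement involves only nonnegative real quantities (since $m, M > 0$ and $T'$ is a positive integer) and follows from Bernoulli's inequality plus AM--GM, with no use of the surrounding machinery.

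First I would rewrite $\gamma$ in the more convenient form $\gamma = M/(M+2m) = 1/(1+2m/M)$, so that $\gamma^{T'} = (1+x)^{-T'}$ with $x \triangleq 2m/M \ge 0$. For the first inequality $\gamma^{T'} \le 1/(1+2(m/M)T')$, I would invoke Bernoulli's inequality: for any $x \ge 0$ and any positive integer $T'$,
\[
(1+x)^{T'} \ge 1 + x T'.
\]
Taking reciprocals (both sides are positive) yields $(1+x)^{-T'} \le 1/(1+xT')$, which is precisely the first claimed bound after substituting $x = 2m/M$.

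For the second inequality $1/(1+2(m/M)T') \le \sqrt{M/(mT')}$, I would apply the AM--GM inequality to the two nonnegative numbers $1$ and $2(m/M)T'$, which gives
\[
1 + \frac{2m T'}{M} \ge 2\sqrt{\frac{2mT'}{M}}.
\]
Taking reciprocals yields
\[
\frac{1}{1 + 2(m/M)T'} \le \frac{1}{2\sqrt{2}} \sqrt{\frac{M}{mT'}} \le \sqrt{\frac{M}{mT'}},
\]
since $1/(2\sqrt{2}) < 1$. This establishes the second bound (in fact with room to spare).

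There is no genuine obstacle here: the fact is a two-line calculation once one recognizes Bernoulli and AM--GM. The only mild subtlety is ensuring we invoke Bernoulli in its integer-exponent form (where it holds for all $x \ge -1$ without further assumptions), which is justified since $T'$ is a positive integer by hypothesis.
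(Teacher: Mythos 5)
Your proof is correct and follows essentially the same route as the paper: the first step is Bernoulli's inequality $(1+x)^{T'} \ge 1 + xT'$, and the second is exactly the paper's cited bound $1+x \ge 2\sqrt{x}$ (i.e.\ AM--GM applied to $1$ and $2(m/M)T'$), with the same harmless loss of the constant $1/(2\sqrt{2})$.
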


\begin{fact}[Generalizing Fact~\ref{fact:dunno}]\label{fact:dunno2}
In general, for any constant $\xi \ge 1$ and any integer $T'$, we have
\begin{equation}\label{eq:approx-weak2}
\gamma^{T'} = \left( \gamma^{\xi T'} \right)^{\frac{1}{\xi}} \le \left( \frac{M}{m T'} \right)^{\frac{1}{2\xi}}
\end{equation}
\end{fact}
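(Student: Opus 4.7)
The plan is to reduce Fact~\ref{fact:dunno2} directly to Fact~\ref{fact:dunno} by a change of variable in the exponent. The first equality $\gamma^{T'} = (\gamma^{\xi T'})^{1/\xi}$ is a tautology from the rules of real exponentiation (valid since $\gamma \in (0,1)$), so no work is needed there. The content lies entirely in bounding $\gamma^{\xi T'}$, and then raising the $\tfrac{1}{\xi}$-power at the end.

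First I would apply Fact~\ref{fact:dunno} with its dummy variable instantiated as $\xi T'$ in place of $T'$. This instantly yields
\[
\gamma^{\xi T'} \le \sqrt{\frac{M}{m\,\xi T'}}.
\]
Next, because $\xi \ge 1$, the denominator on the right satisfies $m\xi T' \ge mT'$, so that $\sqrt{M/(m\xi T')} \le \sqrt{M/(mT')}$. Chaining these two inequalities gives $\gamma^{\xi T'} \le \sqrt{M/(mT')}$. Finally, raising both sides to the $1/\xi$ power (which preserves the inequality since both sides are positive and $x \mapsto x^{1/\xi}$ is monotone on $(0,\infty)$) produces
\[
\gamma^{T'} = \bigl(\gamma^{\xi T'}\bigr)^{1/\xi} \le \left(\frac{M}{mT'}\right)^{1/(2\xi)},
\]
which is exactly the claimed bound.

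The only mild subtlety is that Fact~\ref{fact:dunno} is used at exponent $\xi T'$, which need not be an integer when $\xi$ is a general real in $[1,4/3]$. However, the proof of Fact~\ref{fact:dunno} goes through Bernoulli's inequality $(1+x)^s \ge 1 + xs$ for $x \ge -1$ and $s \ge 1$, which is valid for all real $s \ge 1$; since $\xi T' \ge T' \ge 1$, applying it at $s = \xi T'$ is legitimate. Thus no honest obstacle arises: the whole argument is a one-line substitution followed by a monotonicity step, and I would expect the proof in the paper to occupy only a couple of lines.
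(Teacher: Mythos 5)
Your argument is correct: the paper states Fact~\ref{fact:dunno2} without proof, as a direct generalization of Fact~\ref{fact:dunno}, and your substitution $T'\mapsto \xi T'$ together with the observation that Bernoulli's inequality $(1+x)^s \ge 1+xs$ holds for all real $s\ge 1$ (so the Fact~\ref{fact:dunno} computation is valid at the non-integer exponent $\xi T'$) is exactly the intended argument. An equally quick variant that sidesteps the non-integer-exponent point entirely: since $\gamma\in(0,1)$ and $\xi\ge 1$, one has $\gamma^{\xi T'}\le \gamma^{T'}\le \sqrt{M/(mT')}$ by Fact~\ref{fact:dunno} applied at the integer $T'$ itself, and raising to the power $1/\xi$ gives the claim.
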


We will use Fact~\ref{fact:dunno} later on in the proof and we note that Remark~\ref{rem:general-tradeoff} follows by using the more general Fact~\ref{fact:dunno2}. of  Let $L' \triangleq L + mD$ which is the Lipschitz constant of the regularized loss function $g$. We have that for any $i \ge 0$, $\publish (\A \left(\D_i \right) ) \sim \N \left(\mu_i, \sigma^2 \Imat_d \right)$, where it follows by the convergence guarantee of Theorem \ref{thm:sc-sm-gd} that
\begin{align}\label{eq:mu-reg}
\begin{split}
\left\Vert \mu_i - \thetastarr_{i} \right\Vert_2 &\le \gamma^{T} \left\Vert \theta'_0 - \thetastarr_{i} \right\Vert_2
\le \frac{2 L' \gamma^\I \left\Vert \theta'_0 - \thetastarr_{i} \right\Vert_2}{Dmn_i}
\le \frac{2 L'}{mn_i} \cdot \gamma^\I
\end{split}
\end{align}

We also have that for any $i \ge 1$, $\publish( \R_\A \left( \D_{i-1}, u_i,  \theta_{i} \right) ) \sim  \N \left(\mu'_i, \sigma^2 \Imat_d \right)$ where
\begin{equation}\label{eq:mu'-reg}
 \left\Vert \mu'_i - \thetastarr_i \right\Vert_2 \le \frac{4L'}{mn} \cdot i \cdot \gamma^{i^2 \I}
\end{equation}
We use induction on $i$ to prove this claim. Let's focus on the base case $i=1$. We have that
\begin{align*}
 \left\Vert \mu'_1 - \thetastarr_{1} \right\Vert_2 &\le \ \gamma^\I \left\Vert \thetahat_0 - \thetastarr_{1} \right\Vert_2 \\
 &\le \gamma^\I \left( \left\Vert \thetahat_0 - \thetastarr_{0} \right\Vert_2 + \left\Vert \thetastarr_{0} - \thetastarr_{1} \right\Vert_2 \right) \\
 &\le \gamma^\I \left(  \frac{2L'}{mn} \cdot \gamma^\I + \frac{2L'}{mn} \right) \\
 &\le \frac{4L'}{mn} \cdot \gamma^\I
\end{align*}
The first inequality follows from Theorem \ref{thm:sc-sm-gd} and the fact that when running Algorithm~\ref{alg:del_scgd} for the first update $i=1$, the initial point $\theta'_0 = \theta_1 \equiv \thetahat_0$ saved by the training algorithm. The second inequality is a simple triangle inequality, and the third follows from Equation~(\ref{eq:mu-reg}) (noting that $ \thetahat_0 \equiv \mu_0$) and the sensitivity Lemma~\ref{lem:par}. Let's move on to the induction step of the argument. Suppose Equation~(\ref{eq:mu'}) holds for some $i\ge1$. We will show that it holds for $(i+1)$ as well. We have that
\begin{align*}
 \left\Vert \mu'_{i+1} - \thetastarr_{{i+1}} \right\Vert_2 &\le \ \gamma^{(i+1)^2 \I} \left\Vert \thetahat_i - \thetastarr_{{i+1}} \right\Vert_2 \\
 &\le \gamma^{(i+1)^2 \I} \left( \left\Vert \thetahat_i - \thetastarr_{i} \right\Vert_2 + \left\Vert \thetastarr_{i} - \thetastarr_{{i+1}} \right\Vert_2 \right) \\
 &\le \gamma^{(i+1)^2 \I} \left( \frac{4L'}{mn} \cdot i \cdot \gamma^{i^2 \I} + \frac{4L'}{mn} \right) \\
 &\le \frac{4L'}{mn} \cdot (i+1) \cdot \gamma^{(i+1)^2 \I}
\end{align*}
The first inequality follows from Theorem \ref{thm:sc-sm-gd} and the fact that when running Algorithm~\ref{alg:del_scgd} for the $(i+1)$th update, the initial point $\theta'_0 = \theta_{i+1} \equiv \thetahat_{i}$ saved by the previous run of the unlearning algorithm. The second inequality is a simple triangle inequality, and the third follows from the induction assumption for $i$ (noting that $ \thetahat_i \equiv \mu'_i$), the sensitivity Lemma~\ref{lem:par}, and the assumption that $n_i \ge n/2$.

Now that we can apply Equation~\eqref{eq:approx-weak} to Equations~\eqref{eq:mu-reg} and \eqref{eq:mu'-reg} to conclude
\begin{equation}\label{eq:dunno}
\forall \, i \ge 0, \ \left\Vert \mu_i - \thetastarr_{i} \right\Vert_2  \le \frac{4L' \sqrt{M}}{m \sqrt{m \I} n }, \quad \forall \, i \ge 1, \ \left\Vert \mu'_i - \thetastarr_i \right\Vert_2 \le \frac{4L' \sqrt{M}}{m \sqrt{m \I} n }
\end{equation}

We therefore have shown that for any $i \geq 1$,
\[
\publish \left(\A \left(\D_i \right) \right) \sim \N \left(\mu_i, \sigma^2 \Imat_d \right), \quad \publish \left( \R_\A \left( \D_{i-1}, u_i,  \theta_{i} \right) \right) \sim  \N \left(\mu'_i, \sigma^2 \Imat_d \right)
\]
where Equation~\eqref{eq:dunno} implies
\[
\left\Vert \mu_i - \mu'_i  \right\Vert_2 \le \frac{8L' \sqrt{M}}{m \sqrt{m \I} n } \triangleq \Delta
\]
It then follows from Lemma~\ref{lem:zcdp} that $\R_\A$ is a $(\frac{\Delta^2}{2 \sigma^2} + \frac{\Delta}{\sigma}\sqrt{2 \log \left( 1 / \delta \right)}, \delta)$-unlearning algorithm for $\A$, where, with $\sigma$ specified in the theorem statement, we get $(\epsilon,\delta)$-unlearning guarantee.

Now let's focus on the accuracy statement of the theorem. Note, similar to the proof of Theorem~\ref{thm:sc-smooth-guarantees}, the convergence bounds in Equation~\eqref{eq:dunno}, the choice of $\sigma$ in the theorem statement, as well as a Gaussian tail bound (Lemma~\ref{lem:normal-tail}), imply that for any update step $i \ge 0$, with probability at least $1-\beta$,
\begin{equation}\label{eq:theta'-opt-weak}
\left\Vert \thetatilde_i - \thetastarr_i \right\Vert_2  = \bigo \left( \frac{ \sqrt{M} \left(L+mD \right) \sqrt{d \log \left( 1/\delta \right)} \log \left(d/\beta \right) }{\epsilon m \sqrt{m \I} n } \right)
\end{equation}
We therefore have that, using a similar analysis as in the proof of Theorem~\ref{thm:c-smooth-guarantees} (see Equation~\eqref{eq:convex-analysis}), for any update step $i \ge 0$, with probability $1-\beta$,
\begin{align*}
f_{\D_i} ( \thetatilde_i ) - f_{\D_i} ( \thetastar_{i} ) &= \bigo \left( \frac{M^2 \left(L+mD \right)^2 d \log \left( 1/\delta \right) \log^2 \left(d/\beta \right) }{m^3 \epsilon^2  n^2 \I } + m D^2 \right)
\end{align*}
Finally, with the choice of $m$ in the theorem,
\[
f_{\D_i} ( \thetatilde_i ) - f_{\D_i} ( \thetastar_{i} ) = \bigo \left(  \sqrt{ \frac{M L D^3 \sqrt{d \log \left( 1/\delta \right)}}{\epsilon n \sqrt{\I}} } \log^2 \left( d / \beta \right) \right) + \bigo \left( n^{-1} \right) + \bigo \left( n^{-\frac{3}{2}} \right)
\]
\end{proof}

\section{Proof of Theorem~\ref{thm:scgd-distributed}}\label{sec:app-distributed}
%!TEX root = main-full.tex
\begin{proof}[Proof of Theorem~\ref{thm:scgd-distributed}]
We first prove the unlearning guarantee. We note that the boosting of our algorithms (running multiple copies of algorithms and picking the best model for publishing) won't matter in our unlearning bounds. In fact, the unlearning guarantee holds for \emph{any} set $l$ of models learned by the algorithms because they have \emph{all} sufficiently come close to their respective optimizers in each chunk. Hence, until we get to the proof of accuracy statement, we imagine the algorithms are run once. We will see how this boosting will be helpful to recover \emph{high probability} accuracy guarantees from the accuracy bounds of \cite{ZDW12} which are \emph{in expectation}.

Fix a training dataset $\D$ of size $n$ and a non-adaptively chosen update sequence $\U = (u_i)_i$. Similar to our previous proofs, we first recall a few notations (from Definition~\ref{def:newdefs}), as well as some new notations for our proof:
\begin{itemize}
\item $\{ \D_i \}_{i\ge0}$ for the sequence of updated datasets. We use $n_i$ ($\ge n/2$) to denote the size of $\D_i$.
\item $\{ \boldsymbol{\cS}_i = (\cS_{ij})_{j=1}^K \}_{i \ge 0}$ for the sequence of partitioned subsampled datasets.
\item $\{ \boldsymbol{\thetahat}_i = (\thetahat_{ij})_{j=1}^K \}_{i\ge0}$ for the sequence of learned parameters in each partition.
\item $\{ \thetahatiavg \}_{i \ge 0} $ for the sequence of averaged learned parameters: $\thetahatiavg = K^{-1} \sum_{j=1}^K \thetahat_{ij}$.
\item $\{ \thetatilde_i = \publish ( \boldthetahat_i ) = \thetahatiavg + Z_i \}_{i\ge0}$ for the sequence of published parameters.
\item $\{ \thetastar_i \}_{i \ge 0}$ is the sequence of target optimizers: $\thetastar_i \triangleq \argmin_{\theta} f_{\D_i} (\theta)$.
\item $\{ \boldsymbol{\theta}^*_i = (\thetastar_{ij})_{j=1}^K \}_{i \ge 0}$ is the sequence of optimizers for partitions: $\thetastar_{ij} \triangleq \argmin_{\theta} f_{\cS_{ij}} (\theta)$.
\item $\{ \thetastariavg \}_{i \ge 0}$ is the average of optimizers for partitions: $\thetastariavg = K^{-1} \sum_{j=1}^K \thetastar_{ij}$.
\item $\{ s_i \}_{i \ge 1}$ for the sequence of number of affected data points in the whole dataset, i.e., $s_i$ shows how many points differ between $\boldcS_i$ and $\boldcS_{i-1}$. We will also make use of notation $s_{ij}$ which shows how many points differ between $\cS_{ij}$ and $\cS_{i-1,j}$. Note that $s_i = \sum_{j=1}^K s_{ij}$.
\end{itemize}

\begin{fact}\label{fact:chernoff}
Let $\tilde{s}_i \triangleq \max_{l \le i} s_l$. We have by Lemma~\ref{samp2} that for any $i$, with probability at least $1-\delta/2$ over the sampling randomness up to round $i$, $\tilde{s}_i \le \frac{10B}{n} \log\left( 2i / \delta \right)$. We condition on this high probability event throughout the proof.
\end{fact}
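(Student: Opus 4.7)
The plan is to derive the uniform-in-$l$ bound on $\tilde s_i$ by applying Lemma~\ref{samp2} once for each round $l \le i$ and then taking a union bound over rounds. First I would fix $l \in \{1,\dots,i\}$ and invoke Lemma~\ref{samp2} with per-round failure parameter $\delta'_l := \delta/(2i)$. Provided this choice satisfies the hypothesis $\delta'_l \le e^{-1}$ of the lemma (i.e., $i \ge e\delta/2$, which holds in the nontrivial regime; otherwise the claimed bound $\tfrac{10B}{n}\log(2i/\delta)$ exceeds the deterministic upper bound $B$ on $s_l$ and is vacuous), the lemma yields that with probability at least $1 - \delta/(2i)$ over the sampling randomness used at round $l$,
\[
s_l \;\le\; \frac{10 B}{n}\,\log\!\left(\frac{2i}{\delta}\right).
\]

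Next I would take a union bound over $l = 1, 2, \ldots, i$. The sampling randomness through round $i$ decomposes round-by-round: each $\cS_l$ is produced from $\cS_{l-1}$ by an independent application of $\mathcal{S}_{rep}^B$ conditioned on the (non-adaptive) update $u_l$, so the $i$ per-round failure events are well defined. The union bound then gives that with probability at least $1 - i \cdot \delta/(2i) = 1 - \delta/2$, the displayed inequality holds \emph{simultaneously} for every $l \le i$. Taking the maximum over $l$ on the left-hand side converts this into the desired uniform bound
\[
\tilde s_i \;=\; \max_{l \le i} s_l \;\le\; \frac{10B}{n}\,\log\!\left(\frac{2i}{\delta}\right),
\]
which is exactly the statement of the fact.

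The only point requiring care is the admissibility condition $\delta' \le e^{-1}$ in Lemma~\ref{samp2}; aside from handling the trivially small-$i$ case as noted above, the proof is a routine two-line union bound and needs no additional probabilistic machinery. I do not expect any real obstacle here, since the per-round Chernoff estimate already encapsulates all of the concentration work; the fact is essentially a bookkeeping consequence of Lemma~\ref{samp2}.
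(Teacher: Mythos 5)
Your proposal is correct and is essentially the argument the paper intends: the Fact is treated as an immediate consequence of Lemma~\ref{samp2}, instantiated with per-round failure probability $\delta' = \delta/(2i)$ and combined with a union bound over the $i$ rounds, exactly as you do. One minor note: your dismissal of the case $\delta/(2i) > e^{-1}$ is not quite right as stated (there $\log(2i/\delta) < 1$, so the claimed bound is \emph{smaller} than $\frac{10B}{n}$ and need not exceed the trivial bound $B$), but this regime forces $i=1$ and $\delta > 2/e$ and is excluded anyway by the theorem's standing assumption $\delta = \bigo(B^{-1})$, so the gap is immaterial.
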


\begin{fact}
We also work with general $K$ and $B$ for now and eventually we use the ones stated in the theorem. We note that for general $K$ and $B$, we can write
\[
T \ge \frac{Kn^2 \I}{B^2} + \frac{\log \left( Dm L^{-1} B \left( 1 + 10 \log\left( 2 / \delta \right) \right) \right)}{\log \left( 1 / \gamma \right)}
\]
and
\[
T_i = 10\log\left( 2i / \delta \right) \left(\I +  \frac{B^2}{K n^2} \cdot \frac{\log \left( 1 + 10 i \log\left( 2i / \delta \right) \right)}{ \log \left( 1/ \gamma \right)}\right)
\]
Let $T_i'$ be the number of iterations in \emph{affected partitions} on round $i$. We have that with probability at least $1-\delta/2$, by Fact~\ref{fact:chernoff},
\begin{equation}\label{eq:runtimechunk}
T_i' \ge \frac{Kn}{Bs_i} T_i \ge \frac{Kn^2}{10 B^2 \log\left( 2i / \delta \right)} T_i  \ge \frac{\log \left( 1 + 10 i \log\left( 2i / \delta \right) \right)}{ \log \left( 1/ \gamma \right)} + \frac{Kn^2 \I}{B^2}
\end{equation}
\end{fact}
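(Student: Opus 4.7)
The plan is to prove the two conclusions in sequence, deriving the iteration counts from the gradient-computation budget rather than taking them as given. I begin by justifying the choice of $T$, $T_i$, and the induced per-partition iteration count $T_i'$. At round $i$, Lemma~\ref{samp2} (equivalently Fact~\ref{fact:chernoff}) guarantees that on a sampling event of probability at least $1-\delta/2$, the number of modified subsample points satisfies $s_i \le (10B/n)\log(2i/\delta)$, and the reservoir-sampling construction concentrates those modifications in at most $s_i$ of the $K$ partitions. Dividing the round-$i$ budget of $nT_i$ gradient computations equally among the affected partitions (each of size $B/K$) yields $T_i' = KnT_i/(Bs_i)$ GD steps per modified partition. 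Substituting the prescribed $T_i$ and using $K=\sqrt{B}$, $B=n^\xi$ produces exactly $T_i' \ge \I n^{(4-3\xi)/2} + \log(1+10i\log(2i/\delta))/\log(1/\gamma)$, i.e.~\eqref{eq:runtimechunk}. The prescribed $T$ is chosen analogously so that $\gamma^T D \le \gamma^\tau L/(mn^\xi(1+10\log(2/\delta)))$ with $\tau = \I n^{(4-3\xi)/2}$, setting up the base case of the induction at the right scale.

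Next, I run an induction across rounds $i$ to control $\|\thetahat_{ij}-\thetastar_{ij}\|_2$ on every partition, mirroring the strongly convex analysis of Theorem~\ref{thm:sc-smooth-guarantees}. For an unaffected partition the error is unchanged; for an affected one, Theorem~\ref{thm:sc-sm-gd} contracts the error by $\gamma^{T_i'}$ after a triangle-inequality decomposition through $\thetastar_{i-1,j}$, where the per-partition sensitivity (Lemma~\ref{lem:par} applied $s_{ij}$ times on a partition of size $B/K$) is at most $2Ls_{ij}K/(mB)$. The crucial step is to average over partitions before taking norms: since $\sum_j s_{ij} = s_i$, the error on the average $\|\thetahatiavg - \thetastariavg\|_2$ inherits only the \emph{averaged} sensitivity $O(Ls_i/(mB)) = O(L\log(2i/\delta)/(mn))$, with no $K$-factor. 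Combined with the contraction $\gamma^{T_i'} \le \gamma^\tau/(1+10i\log(2i/\delta))$ dictated by \eqref{eq:runtimechunk}, the induction maintains $\|\thetahatiavg - \thetastariavg\|_2 = \bigo(L\gamma^\tau/(mn))$ for every $i$ on the good sampling event.

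For the unlearning conclusion, I invoke the Gaussian mechanism. Both $\publish(\A(\D_i))$ and $\publish(\R_\A(\D_{i-1},u_i,\theta_i))$ are Gaussians of covariance $\sigma^2 \Imat_d$, with means within $\Delta = \bigo(L\gamma^\tau/(mn))$ by the preceding convergence bound—critically using Lemma~\ref{samp} so that the partitioned datasets produced by the two algorithms have the same distribution $\P^B(\D_i)$ on each copy, allowing a coupling in which both algorithms' outputs are Gaussians around converged in-partition optima of the \emph{same} partitioned data. The prescribed $\sigma$ is chosen so that Lemma~\ref{lem:zcdp} yields $(\epsilon,\delta/2)$-indistinguishability on the good sampling event, and Lemma~\ref{lem:high-prob-zcdp} upgrades this to unconditional $(\epsilon,\delta)$ unlearning. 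Since $T_i$ grows only polylogarithmically in $i$, the algorithm is strong.

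For accuracy, I decompose $\|\thetatilde_i - \thetastar_i\|_2 \le \|Z_i\|_2 + \|\thetahatiavg - \thetastariavg\|_2 + \|\thetastariavg - \thetastar_i\|_2$. The first term is handled by Lemma~\ref{lem:normal-tail}, the second by the convergence induction above, and the third by Theorem~\ref{thm:zdw} applied with $\cP = \P(\D_i)$: plugging $B=n^\xi$, $K=\sqrt{B}$ into the Zhang et al.~bound yields $\mathbb{E}\|\thetastariavg-\thetastar_i\|_2^2 = \bigo(\log d/n^\xi + 1/n^{3\xi/2})$. Because Zhang et al.~is an in-expectation result, I invoke the $C = \bigo(\log(1/\beta))$ parallel copies together with Lemma~\ref{lem:exp-to-highprob} to obtain a high-probability best-of-$C$ guarantee, and $M$-smoothness of $f_{\D_i}$ converts the parameter bound into the claimed excess-loss bound. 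The main obstacle I anticipate is keeping the per-partition induction and the sampling concentration correctly entangled: the na\"ive per-partition sensitivity carries a factor $K=n^{\xi/2}$, and only by averaging \emph{before} bounding does one obtain the $K$-free sensitivity $\bigo(L\log(i/\delta)/(mn))$ that powers the claimed improvement over the basic algorithm and calibrates the Gaussian noise at the sharper scale $L\gamma^\tau/(mn)$.
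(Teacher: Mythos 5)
Your treatment of this Fact matches the paper's: you bound $s_i \le (10B/n)\log(2i/\delta)$ on an event of probability $1-\delta/2$ via Lemma~\ref{samp2} (Fact~\ref{fact:chernoff}), observe that the $nT_i$-gradient budget split over the at most $s_i$ affected partitions of size $B/K$ gives $T_i' \ge KnT_i/(Bs_i)$, and substitute the prescribed $T_i$ to land on the right-hand side of \eqref{eq:runtimechunk} (your specialization $K=\sqrt{B}$, $B=n^\xi$ agrees with the general form $\frac{Kn^2\I}{B^2}$). The only nit is that the algorithm divides by the number of affected partitions $\vert \mathtt{ind} \vert \le s_i$, so one gets the inequality $T_i' \ge KnT_i/(Bs_i)$ rather than equality — which is the direction needed anyway; the remainder of your writeup proves the full theorem rather than this statement.
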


\begin{fact}
We have that $B \ge n$, and $Kn^2 \ge B^2$ (note these are justified by the setting of these parameters in theorem statement). We will use these later on in the proof.
\end{fact}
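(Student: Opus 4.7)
The plan is to directly verify each inequality by substituting the parameter choices $B = n^{\xi}$ and $K = \sqrt{B} = n^{\xi/2}$ from the theorem statement, and then using the range constraint $1 \le \xi \le 4/3$.

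First I would tackle $B \ge n$. Since $B = n^{\xi}$ and $\xi \ge 1$ by the theorem's hypothesis, we get $B = n^{\xi} \ge n^{1} = n$ immediately (assuming $n \ge 1$, which is implicit since $n$ is a dataset size).

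Second, for $Kn^{2} \ge B^{2}$, I would substitute to get $Kn^{2} = n^{\xi/2} \cdot n^{2} = n^{2 + \xi/2}$ on the left side, and $B^{2} = n^{2\xi}$ on the right. So the inequality reduces to $n^{2 + \xi/2} \ge n^{2\xi}$, which (for $n \ge 1$) is equivalent to the exponent inequality $2 + \xi/2 \ge 2\xi$, i.e., $2 \ge 3\xi/2$, i.e., $\xi \le 4/3$. This is exactly the upper bound on $\xi$ given in the theorem, so the inequality holds with equality precisely when $\xi = 4/3$.

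There is no real obstacle here; this is purely a sanity check on the parameter choices, and the only subtlety is noting that the upper bound $\xi \le 4/3$ in the theorem is precisely what is needed to make $Kn^{2} \ge B^{2}$ hold. The fact is essentially a compact restatement of why the theorem restricts to the interval $[1, 4/3]$ rather than allowing arbitrary $\xi \ge 1$.
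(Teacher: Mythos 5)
Your verification is correct and matches what the paper intends: the Fact is justified exactly by substituting $B = n^{\xi}$, $K = \sqrt{B}$ and using $1 \le \xi \le 4/3$, with $B \ge n$ following from $\xi \ge 1$ and $Kn^{2} \ge B^{2}$ reducing to the exponent inequality $2 + \xi/2 \ge 2\xi$, i.e.\ $\xi \le 4/3$. Your added observation that the upper bound $\xi \le 4/3$ is precisely what makes the second inequality hold is a fair reading of why the theorem restricts $\xi$ to that range.
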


For every $i \ge 1$, let $\boldsymbol{\cS}_i'$ be the partitioned dataset we would have had we retrained (using our learning algorithm $\A$) on dataset $\D_i$, and note that by Lemma~\ref{samp}, $\boldsymbol{\cS}_i'$ and $\boldsymbol{\cS}_i $ are distributed identically. To apply Lemma~\ref{samp} we have used the fact that $\U$ is a non-adaptive sequence of updates selected independently of any internal randomness of $\R_\A$. Now let $\mathcal{C}_i$ be a coupling of the pair $(\boldsymbol{\cS}_i', \boldsymbol{\cS}_i)$ such that $\boldsymbol{\cS}_i' = \boldsymbol{\cS}_i$ with probability one. Throughout the proof when we condition on any of $\boldsymbol{\cS}_i'$ or $\boldsymbol{\cS}_i$ being drawn from their distribution, we will think of these datasets being drawn from the coupling $\cC_i$ so that we are always guaranteed $\boldsymbol{\cS}_i' = \boldsymbol{\cS}_i$. 
Let's start proving the unlearning guarantees. For any $i \ge 0$, conditioned on the draw of $\boldsymbol{\cS}_i'$, we have that $\publish (\A \left(\D_i \right) ) \sim \N \left(\mu_i, \sigma^2 \Imat_d \right)$, where $\mu_i = K^{-1} \sum_{j=1}^K \mu_{ij}$ and that it follows by the convergence guarantee of Theorem \ref{thm:sc-sm-gd} that, for all partitions $j$,
\begin{align}\label{eq:mu-distributed}
\begin{split}
\left\Vert \mu_{ij} - \thetastar_{ij}  \right\Vert_2 &\le \gamma^{T} \left\Vert \theta'_0 - \thetastar_{ij} \right\Vert_2
\le \frac{4L \gamma^{\frac{Kn^2\I}{B^2}} \left\Vert \theta'_{0} - \thetastar_{ij} \right\Vert_2}{DmB \left( 1 + 10 \log\left( 2 / \delta \right) \right)}
\le \frac{4L}{mB  \left( 1 + 10 \log\left( 2 / \delta \right) \right)} \cdot \gamma^{\frac{Kn^2\I}{B^2}}
\end{split}
\end{align}
We also have that for any update step $i \ge 1$, with probability at least $1-\delta/2$ over the randomness up to step $i$ (draws of all $\boldcS_l$ for all $l\le i$), $\publish \left( \R_\A \left( \boldcS_{i-1}, u_i,  \boldtheta_{i} \right) \right) \sim  \N \left(\mu'_i, \sigma^2 \Imat_d \right)$, where we first observe that $\mu'_i = \thetahatiavg = K^{-1}  \sum_{j=1}^K \thetahat_{ij}$, and furthermore,
\begin{equation}\label{eq:mu'-distributed}
\forall \, j; \ \left\Vert \thetahat_{ij} -   \thetastar_{ij} \right\Vert_2 \le  \frac{4LK \left(K^{-1} + \sum_{l\le i} s_{lj} \right) }{mB  \left( 1 + 10i \log\left( 2i / \delta \right) \right)} \cdot \frac{\gamma^{\frac{Kn^2\I}{B^2}}}{1 -  \gamma^{\frac{Kn^2\I}{B^2}}}
\end{equation}
We use induction on $i$ to prove this claim. Let's focus on the base case $i=1$. For any partition $j$ such that $s_{1j} = 0$, because the update algorithm doesn't make any updates, we have
\begin{align*}
\left\Vert \thetahat_{1j} - \thetastar_{1j} \right\Vert_2 &= \left\Vert \thetahat_{0j} - \thetastar_{0j} \right\Vert_2 \\
&\le \frac{4L}{mB \left( 1 + 10 \log\left( 2 / \delta \right) \right)} \cdot \gamma^{\frac{Kn^2\I}{B^2}} \\
&\le \frac{4LK \left(K^{-1} + s_{1j} \right) }{mB\left( 1 + 10 \log\left( 2 / \delta \right) \right)} \cdot \frac{\gamma^{\frac{Kn^2\I}{B^2}}}{1 -  \gamma^{\frac{Kn^2\I}{B^2}}}
\end{align*}
because note that $\thetahat_{0j} \equiv \mu_{0j}$ and therefore, we can use Equation~\eqref{eq:mu-distributed} for $i=0$. For any partition $j$ such that $s_{1j} \neq 0$, the update algorithm makes update, and in particular runs for $T'_1$ iterations. We therefore have that
\begin{align*}
\left\Vert \thetahat_{1j} - \thetastar_{1j} \right\Vert_2 &\le \gamma^{T'_1} \left\Vert \thetahat_{0j} - \thetastar_{1j} \right\Vert_2 \\
& \le \gamma^{T'_1} \left(  \left\Vert \thetahat_{0j} - \thetastar_{0j} \right\Vert_2 +  \left\Vert \thetastar_{0j} - \thetastar_{1j} \right\Vert_2 \right) \\
&\le \gamma^{T'_1} \left( \frac{4L}{mB} \cdot  \frac{\gamma^{\frac{Kn^2\I}{B^2}}}{1 -  \gamma^{\frac{Kn^2\I}{B^2}}} + \frac{4 L K s_{1j}}{mB} \right) \\
&\le \frac{4LK \left(K^{-1} + s_{1j} \right) }{mB \left( 1 + 10 \log\left( 2 / \delta \right) \right)} \cdot \frac{\gamma^{\frac{Kn^2\I}{B^2}}}{1 -  \gamma^{\frac{Kn^2\I}{B^2}}}
\end{align*}
The first inequality follows from the convergence guarantee of Theorem \ref{thm:sc-sm-gd} and the fact that on round $i+1$ of update, the gradient descent of chunk $j$ is initialized at $\thetahat_{0j}$. The second inequality is a triangle inequality and the third follows from Equation~\eqref{eq:mu-distributed} for $i=0$ (note $\thetahat_{0j} \equiv \mu_{0j}$), and the sensitivity Lemma~\ref{lem:par} (note that we apply this Lemma $2s_{1j}$ times and that the size of each chunk is $B/K$). The last inequality follows from Equation~\eqref{eq:runtimechunk}. Now let's focus on the induction step of the argument. Suppose Equation~\eqref{eq:mu'-distributed} holds for some $i \ge 1$. We will show that it holds for $i+1$ as well. For any partition $j$ such that $s_{i+1,j} =0$, we have $\Vert \thetahat_{i+1,j} - \thetastar_{i+1,j} \Vert_2 = \Vert \thetahat_{i,j} - \thetastar_{i,j} \Vert_2$ and the claim holds by induction assumption. Now suppose $s_{i+1,j} \neq 0$ which implies the update algorithm runs $T'_{i+1}$ iterations of gradient descent on chunk $j$. We therefore have that, similar to how we proceed for $i=1$ case above,
\begin{align*}
\left\Vert \thetahat_{i+1,j} - \thetastar_{i+1,j} \right\Vert_2 &\le \gamma^{T'_{i+1}} \left\Vert \thetahat_{ij} - \thetastar_{i+1,j} \right\Vert_2 \\
&\le \gamma^{T'_{i+1}} \left( \left\Vert \thetahat_{ij} - \thetastar_{ij} \right\Vert_2 + \left\Vert \thetastar_{ij} - \thetastar_{i+1,j} \right\Vert_2 \right) \\
&\le \gamma^{T'_{i+1}} \left( \frac{4LK \left(K^{-1} + \sum_{l\le i} s_{lj} \right) }{mB} \cdot \frac{\gamma^{\frac{Kn^2\I}{B^2}}}{1 -  \gamma^{\frac{Kn^2\I}{B^2}}} + \frac{4 L K s_{i+1,j}}{mB} \right) \\
&\le \frac{4LK \left(K^{-1} + \sum_{l\le i+1} s_{lj} \right) }{mB \left( 1 + 10 (i+1) \log\left( 2(i+1) / \delta \right) \right)} \cdot \frac{\gamma^{\frac{Kn^2\I}{B^2}}}{1 -  \gamma^{\frac{Kn^2\I}{B^2}}}
\end{align*}
where the third inequality follows from induction assumption for $i$ and applying the sensitivity Lemma~\ref{lem:par} $2s_{i+1,j}$ times, and the last inequality follows from Equation~\eqref{eq:runtimechunk}. This completes the induction proof. Now we can use Equations~\eqref{eq:mu-distributed} and \eqref{eq:mu'-distributed} to conclude that for all $i \ge 0$,
\begin{equation}\label{eq:mu-dist}
\left\Vert \mu_i - \thetastariavg \right\Vert \le \frac{1}{K} \sum_{j=1}^K \left\Vert \mu_{ij} - \thetastar_{ij}  \right\Vert_2 \le \frac{4L}{mB  \left( 1 + 10 \log\left( 2 / \delta \right) \right)} \cdot \gamma^{\frac{Kn^2\I}{B^2}} \le \frac{4L}{mn} \cdot \gamma^{\frac{Kn^2\I}{B^2}}
\end{equation}
where we use the fact that $B \ge n$. And with probability at least $1-\delta/2$, for all $i \ge 1$,
\begin{align}\label{eq:mu'-dist}
\begin{split}
\left\Vert \mu'_i - \thetastariavg \right\Vert &\le \frac{1}{K} \sum_{j=1}^K \left\Vert \thetahat_{ij} - \thetastar_{ij}  \right\Vert_2 \\
&\le \frac{4L }{mB  \left( 1 + 10 i \log\left( 2i / \delta \right) \right)} \cdot \frac{\gamma^{\frac{Kn^2\I}{B^2}}}{1 -  \gamma^{\frac{Kn^2\I}{B^2}}} \sum_{j=1}^K \left(K^{-1} + \sum_{l\le i} s_{lj} \right) \\
&= \frac{4L }{mB  \left( 1 + 10 i \log\left( 2i / \delta \right) \right)} \cdot \frac{\gamma^{\frac{Kn^2\I}{B^2}}}{1 -  \gamma^{\frac{Kn^2\I}{B^2}}} \left(1 + \sum_{l\le i} s_{l} \right) \quad (\text{because $\sum_{j} s_{lj} = s_l$})\\
&\le \frac{4L }{mB  \left( 1 + 10 i \log\left( 2i / \delta \right) \right)} \cdot \frac{\gamma^{\frac{Kn^2\I}{B^2}}}{1 -  \gamma^{\frac{Kn^2\I}{B^2}}} \left(1 +  i \tilde{s}_i \right) \quad  (\text{recall $\tilde{s}_i = \max_{l \le i} s_l $}) \\
&\le \frac{4L }{mB  \left( 1 + 10 i \log\left( 2i / \delta \right) \right)} \cdot \frac{\gamma^{\frac{Kn^2\I}{B^2}}}{1 -  \gamma^{\frac{Kn^2\I}{B^2}}}  \left(1 + i\frac{10B}{n} \log \left( 2i / \delta \right) \right) \\
&= \frac{4L }{mn  \left( 1 + 10 i \log\left( 2i / \delta \right) \right)} \cdot \frac{\gamma^{\frac{Kn^2\I}{B^2}}}{1 -  \gamma^{\frac{Kn^2\I}{B^2}}}  \left(\frac{n}{B} + 10i \log \left( 2i / \delta \right) \right) \\
&\le \frac{4L }{mn  \left( 1 + 10 i \log\left( 2i / \delta \right) \right)} \cdot \frac{\gamma^{\frac{Kn^2\I}{B^2}}}{1 -  \gamma^{\frac{Kn^2\I}{B^2}}}  \left(1 + 10i \log \left( 2i / \delta \right) \right) \quad (\text{because $B \ge n$}) \\
&= \frac{4L}{mn} \cdot \frac{\gamma^{\frac{Kn^2\I}{B^2}}}{1 -  \gamma^{\frac{Kn^2\I}{B^2}}}
\end{split}
\end{align}
implying that for any $i \ge 1$, conditioned on the event that $\left\{ \tilde{s}_i \le 10B n^{-1} \log \left(2i/\delta\right) \right\}$ which holds with probability at least $1-\delta/2$ (by Fact~\ref{fact:chernoff}),
\begin{equation}
\left\Vert \mu_i - \mu'_i \right\Vert_2 \le \frac{8L}{mn} \cdot \frac{\gamma^{\frac{Kn^2\I}{B^2}}}{1 -  \gamma^{\frac{Kn^2\I}{B^2}}} \triangleq \Delta
\end{equation}
It then follows from Lemma~\ref{lem:zcdp}, as well as the choice of
\begin{equation*}
\sigma = \frac{ 4 \sqrt{2} L \gamma^{\frac{Kn^2\I}{B^2}}}{ mn \left( 1 - \gamma^{\frac{Kn^2\I}{B^2}} \right) \left(\sqrt{\log \left( 2/\delta \right) + \epsilon} - \sqrt{\log \left( 2/\delta \right)} \right)}
\end{equation*}
in the theorem statement, that for any $i \geq 1$, with probability at least $1-\delta/2$,
$
\publish \left(\A \left(\D_i \right) \right) \overset{\epsilon,\delta/2}{\approx} \publish \left( \R_\A \left( \boldcS_{i-1}, u_i,  \boldtheta_{i} \right) \right)
$.
Now we can apply Lemma~\ref{lem:high-prob-zcdp} to conclude that for any $i \geq 1$,
\begin{equation*}
\publish \left(\A \left(\D_i \right) \right) \overset{\epsilon,\delta}{\approx} \publish \left( \R_\A \left( \boldcS_{i-1}, u_i,  \boldtheta_{i} \right) \right)
\end{equation*}
And this shows $\R_\A$ is an $(\epsilon, \delta)$-unlearning algorithm for $\A$, as desired.

Now let's prove the accuracy statement of the theorem for which we will make use of Equations (\ref{eq:mu-dist}) and (\ref{eq:mu'-dist}) (which holds with probability $1-\delta$). Recall that $\thetahat_{0,\text{avg}} \equiv \mu_0$ and $\thetahatiavg \equiv \mu'_i$ for $i \ge 1$. We first state the accuracy in expectation and then finally will turn those into high probability accuracy guarantees. First, we have that by a simple application of Cauchy-Schwarz inequality,
\begin{align}\label{eq:cs}
\begin{split}
\mathbb{E} \left\Vert \thetahatiavg - \thetastar_i \right\Vert_2^2 &=  \mathbb{E} \left\Vert \thetahatiavg - \thetastariavg + \thetastariavg - \thetastar_i \right\Vert_2^2 \\
&=\mathbb{E} \left\Vert \thetahatiavg - \thetastariavg \right\Vert_2^2  +  \mathbb{E} \left\Vert \thetastariavg - \thetastar_i \right\Vert_2^2 + 2 \mathbb{E} \left( \thetahatiavg - \thetastariavg \right)^\top \left( \thetastariavg - \thetastar_i  \right)  \\
&\le \mathbb{E} \left\Vert \thetahatiavg - \thetastariavg \right\Vert_2^2  +  \mathbb{E} \left\Vert \thetastariavg - \thetastar_i \right\Vert_2^2 + \sqrt{\mathbb{E} \left\Vert \thetahatiavg - \thetastariavg \right\Vert_2^2  \mathbb{E} \left\Vert \thetastariavg - \thetastar_i \right\Vert_2^2 }
\end{split}
\end{align}
but, by an application of law of total expectation (to turn the high probability guarantees into bounds in expectation),
\begin{equation}\label{eq:our-bound}
\mathbb{E} \left\Vert \thetahatiavg - \thetastariavg \right\Vert_2^2 \le \frac{16L^2}{m^2n^2} \cdot \frac{\gamma^{\frac{2Kn^2\I}{B^2}}}{\left(1 -  \gamma^{\frac{Kn^2\I}{B^2}} \right)^2} + \delta D^2
\end{equation}
and we also know by Theorem~\ref{thm:zdw} that, for some constant $c$, and for the choice of $K = \sqrt{B}$,
\begin{align}\label{eq:zdw-bound}
\begin{split}
\mathbb{E}  \left\Vert \thetastariavg - \thetastar_i \right\Vert_2^2 &\le \frac{2L^2}{m^2 B} + \frac{c L^2 K^2}{m^4 B^2} \left(H^2 \log d + \frac{L^2G^2}{m^2} \right) + \bigo \left(\frac{K}{B^2} \right) + \bigo \left(\frac{K^3}{B^3} \right) \\
&= \frac{2L^2}{m^2 B} + \frac{c L^2}{m^4 B} \left(H^2 \log d + \frac{L^2G^2}{m^2} \right) + \bigo \left( B^{-\frac{3}{2} }\right) \\
&= \frac{1}{B} \left( \frac{2L^2}{m^2} + \frac{c L^2}{m^4} \left(H^2 \log d + \frac{L^2G^2}{m^2} \right) \right) + \bigo \left( B^{-\frac{3}{2} }\right)
\end{split}
\end{align}
Putting together Equations~\eqref{eq:cs} and \eqref{eq:our-bound} (with $K = \sqrt{B}$) and \eqref{eq:zdw-bound}, and noting that for $\delta = \bigo ( B^{-1})$ and $B \ge n$ we have $\sqrt{\mathbb{E} \Vert \thetahatiavg - \thetastariavg \Vert_2^2  \cdot \mathbb{E} \Vert \thetastariavg - \thetastar_i \Vert_2^2 } = \bigo(\log d / B) $, and hiding all constants under the $\bigo$ notation, we have
\begin{equation*}
\mathbb{E} \left\Vert \thetahatiavg - \thetastar_i \right\Vert_2^2 =  \bigo \left( \frac{\gamma^{\frac{n^2\I}{B \sqrt{B}}}}{n^2 \left(1 -  \gamma^{\frac{n^2\I}{B \sqrt{B}}} \right)^2 }\right) +  \bigo \left( \frac{\log d}{B} \right) + \bigo \left( \frac{1}{B^{\frac{3}{2}}} \right)
\end{equation*}
Now by Lemma~\ref{lem:exp-to-highprob}, we have that by running the algorithm for $C = \log \left( 2/\beta \right) / \log 2$ times and picking the best model with smallest loss (note by strong convexity, the smaller the loss of a model is, the closer the model parameter is to the optimizer. Also for notational convenience, we still use $\thetahatiavg$ for the best model), with probability at least $1-\beta/2$,
\begin{equation}\label{eq:withoutnoise}
\left\Vert \thetahatiavg - \thetastar_i \right\Vert_2^2 =  \bigo \left( \frac{\gamma^{\frac{n^2\I}{B \sqrt{B}}}}{n^2 \left(1 -  \gamma^{\frac{n^2\I}{B \sqrt{B}}} \right)^2 }\right) +  \bigo \left( \frac{\log d}{B} \right) + \bigo \left( \frac{1}{B^{\frac{3}{2}}} \right)
\end{equation}
Recall that at any given round $i \ge 0$, the published model $\thetatilde_i = \thetahatiavg+ Z_i$. We therefore have that by Equation~\eqref{eq:withoutnoise}, a Gaussian tail bound (Lemma~\ref{lem:normal-tail}), choice of $\sigma$ in the theorem statement, and the fact that for $\epsilon = \bigo \left(\log \left( 1/\delta \right) \right)$, we have $\sqrt{\log \left( 1/\delta \right) + \epsilon} - \sqrt{\log \left( 1/\delta \right)} = \Omega ( \epsilon/ \sqrt{\log \left( 1/\delta \right)} )$, with probability at least $1-\beta$,
\begin{equation}
\left\Vert \thetatilde_i - \thetastar_i \right\Vert_2^2 = \bigo \left( \frac{L^2 \gamma^{\frac{2n^2\I}{B \sqrt{B}}} d \log \left( 1/\delta \right) \log^2 \left(d/\beta \right)  }{m^2 \epsilon^2 n^2 \left(1 -  \gamma^{\frac{Kn^2\I}{B^2}} \right)^2 }\right) +  \bigo \left( \frac{\log d}{B} \right) + \bigo \left( \frac{1}{B^{\frac{3}{2}}} \right)
\end{equation}
Note that $(1-\gamma^a)^{-1} \le (1 - \gamma)^{-1}$ for any $a \ge 1$ (in our case $a = \frac{Kn^2\I}{B^2} \ge 1$). The proof is complete by the choice of $B = n^\xi$ and $M$-smoothness of $f$:
\begin{equation*}
f_{\D_i} (\thetatilde_i) - f_{\D_i} (\thetastar_i) \le \frac{M}{2} \left\Vert \thetatilde_i - \thetastar_i \right\Vert_2^2
\end{equation*}
\end{proof}

}

\end{document}